\documentclass[acmsmall,screen,nonacm]{acmart}

\setcopyright{none}
\makeatletter
\@ifundefined{hyxmp@parse@acmart}{}{\let\hyxmp@parse@acmart\relax}
\makeatother

\usepackage{amsmath}
\usepackage{mathtools}
\usepackage{booktabs}
\usepackage{tabularx}
\usepackage{array}
\usepackage{multirow}
\usepackage{enumitem}
\usepackage{graphicx}
\usepackage{xcolor}
\usepackage{tikz}
\usepackage{listings}
\usepackage{url}
\usepackage{microtype}
\usepackage{algorithm}
\usepackage{algpseudocode}
\usepackage{placeins}
\usetikzlibrary{arrows.meta,positioning,fit,calc,shapes.geometric}

\hypersetup{
  hidelinks,
  pdftitle={Fault-Tolerant Budget Conservation in Distributed Multi-Agent Delegation},
  pdfauthor={Genliang Zhu and Chu Wang},
  pdfsubject={Fault-tolerant resource governance and effect settlement for distributed AI-agent delegation}
}

\setlist[itemize]{leftmargin=1.25em}
\setlist[enumerate]{leftmargin=1.45em}
\newcolumntype{L}[1]{>{\raggedright\arraybackslash}p{#1}}
\newcolumntype{Y}{>{\raggedright\arraybackslash}X}
\newenvironment{floatbox}{\begin{minipage}{0.96\linewidth}\centering}{\end{minipage}}

\definecolor{opBlue}{HTML}{3B82F6}
\definecolor{opBlueFill}{HTML}{EFF6FF}
\definecolor{opBlueText}{HTML}{1E40AF}
\definecolor{opGreen}{HTML}{22C55E}
\definecolor{opGreenFill}{HTML}{F0FDF4}
\definecolor{opGreenText}{HTML}{15803D}
\definecolor{opAmber}{HTML}{FCD34D}
\definecolor{opAmberFill}{HTML}{FEF3C7}
\definecolor{opAmberText}{HTML}{92400E}
\definecolor{opRed}{HTML}{EF4444}
\definecolor{opRedFill}{HTML}{FEF2F2}
\definecolor{opRedText}{HTML}{DC2626}
\definecolor{opPurple}{HTML}{8B5CF6}
\definecolor{opPurpleFill}{HTML}{F5F3FF}
\definecolor{opPurpleText}{HTML}{7C3AED}

\newtheorem{definition}{Definition}
\newtheorem{lemma}{Lemma}
\newtheorem{theorem}{Theorem}
\newtheorem{proposition}{Proposition}
\newtheorem{corollary}{Corollary}
\newtheorem{assumption}{Assumption}

\newcommand{\mcode}[1]{\ifmmode\text{\texttt{#1}}\else\texttt{#1}\fi}

\newcommand{\Resources}{\ensuremath{\mathcal K}}
\newcommand{\Credits}{\ensuremath{\mathcal C}}
\newcommand{\Operations}{\ensuremath{\mathcal O}}

\newcommand{\Graph}{\ensuremath{\mathcal G}}
\newcommand{\NatVec}{\ensuremath{\mathbb N^{|\Resources|}}}
\newcommand{\zero}{\ensuremath{\mathbf 0}}
\newcommand{\issued}{\ensuremath{\mathsf{Issued}}}
\newcommand{\available}{\ensuremath{\mathsf{Avail}}}
\newcommand{\inflight}{\ensuremath{\mathsf{Transit}}}
\newcommand{\reserved}{\ensuremath{\mathsf{Reserved}}}
\newcommand{\quarantined}{\ensuremath{\mathsf{Quarantined}}}
\newcommand{\committed}{\ensuremath{\mathsf{Committed}}}
\newcommand{\forfeited}{\ensuremath{\mathsf{Forfeited}}}
\newcommand{\Issue}{\ensuremath{\mathsf{Issue}}}
\newcommand{\Split}{\ensuremath{\mathsf{Split}}}
\newcommand{\Delegate}{\ensuremath{\mathsf{Delegate}}}
\newcommand{\Reserve}{\ensuremath{\mathsf{Reserve}}}
\newcommand{\Dispatch}{\ensuremath{\mathsf{Dispatch}}}
\newcommand{\Settle}{\ensuremath{\mathsf{Settle}}}
\newcommand{\ReturnAct}{\ensuremath{\mathsf{Return}}}
\newcommand{\Retire}{\ensuremath{\mathsf{Retire}}}
\newcommand{\Fence}{\ensuremath{\mathsf{Fence}}}
\newcommand{\JoinAct}{\ensuremath{\mathsf{Join}}}
\newcommand{\RolloverAct}{\ensuremath{\mathsf{TerminalRollover}}}

\newcommand{\Current}{\ensuremath{\mathsf{Current}}}
\newcommand{\Normalize}{\ensuremath{\mathsf{Normalize}}}
\newcommand{\Verify}{\ensuremath{\mathsf{Verify}}}
\newcommand{\Covered}{\ensuremath{\mathsf{Covered}}}
\newcommand{\restrict}{\mathbin{\sqsubseteq}}
\newcommand{\statementheading}[1]{%
  \par\smallskip\noindent\textbf{#1.}\par\nobreak\noindent}

\title[Fault-Tolerant Budget Conservation in Agent Delegation]
{Fault-Tolerant Budget Conservation in Distributed Multi-Agent Delegation}

\author{Genliang Zhu}
\affiliation{%
  \institution{Accentrust}
  \city{Vancouver}
  \country{Canada}
}
\affiliation{%
  \institution{Georgia Institute of Technology}
  \city{Atlanta}
  \state{Georgia}
  \country{USA}
}
\email{research@accentrust.com}

\author{Chu Wang}
\affiliation{%
  \institution{Accentrust}
  \city{Vancouver}
  \country{Canada}
}
\affiliation{%
  \institution{University of Illinois Urbana-Champaign}
  \city{Urbana}
  \state{Illinois}
  \country{USA}
}
\email{wally@accentrust.com}

\begin{abstract}
Resource limits are becoming an authorization boundary for AI agents that
delegate work across concurrent and failure-prone workers. Parent-child
allocation constraints, affine objects, and distributed escrow do not by
themselves prevent overspend when replies are lost, effects complete after
timeout, messages repeat, branches partition, or DAG joins alias one lineage.
We formalize fault-tolerant budget conservation for distributed multi-agent
delegation. Budgets are quantized resource vectors represented by exclusive
escrow credits that move through a delegation DAG. Before dispatch, a branch
converts credit into an operation reservation bound to lineage, epoch,
normalized effect, maximum charge, receiver, and idempotency key. It persists
a signed dispatch permit with quarantine; the gateway verifies that permit
before first acceptance. Uncertain effects remain charged until authenticated
settlement, a fenced authoritative no-effect proof, or permanent retirement.
We prove ownership partition, ledger and effect conservation, descendant
non-amplification, at-most-once settlement, late-completion safety, and
partition confinement under explicit mediation, durability, authentication,
normalization, and gateway assumptions. An indistinguishability result shows
that partition-local availability requires exclusive preallocation. Bounded
TLA+ checking, an independent JavaScript explorer, and crash-injected
two-process SQLite experiments exercise the declared scope and detect
timeout-refund and historical-certificate-validation mutants. The mechanism
preserves the issued budget bound across the evaluated crash, retry, duplicate,
partition, join, and late-completion schedules.
\end{abstract}

\ccsdesc[500]{Security and privacy~Access control}
\ccsdesc[500]{Security and privacy~Formal security models}
\ccsdesc[300]{Software and its engineering~Formal methods}
\keywords{AI agents, multi-agent systems, distributed systems, resource
governance, budget enforcement, delegation, fault tolerance, network
partitions, escrow, double spending, external effects}

\begin{document}
\maketitle

\section{Introduction}
\label{sec:introduction}

AI agents increasingly act as spenders, callers, schedulers, and delegators.
A user may authorize a procurement agent to spend at most \$500, an incident
agent to issue at most 20 production writes, or a research coordinator to make
at most 100 paid API calls. The coordinator can then create workers, workers
can create specialists, and multiple branches can converge at a final
reviewer. A scalar ``remaining budget'' is no longer a local counter. It is a
consumable authorization resource moving through a distributed, asynchronous
delegation graph.

Here \emph{fault-tolerant} means that the safety bound is preserved across
declared crash and communication faults. It does not mean every isolated
branch remains available or every uncertain operation eventually resolves.

Recent work makes important parts of this problem explicit. Agent Contracts
defines multi-dimensional resource constraints and a conservation law
requiring child allocations to remain within the parent contract
\cite{ye2026agentcontracts}. Token Budgets documents 63 production budget
overruns and uses affine ownership to reject in-process aliasing,
double-spending, and use-after-delegation \cite{khan2026tokenbudgets}. AIP
binds identity, attenuated authorization, provenance, and completion records
across the Model Context Protocol (MCP), Agent2Agent (A2A), and HTTP
\cite{prakash2026aip}. Attenuating Authorization
Tokens specify offline holder-derived restriction of tool and argument
authority, depth, and lifetime while leaving transport binding
deployment-specific \cite{niyikiza2026aat}. Together, these systems establish
the allocation, affine-ownership, identity, and attenuation foundations on
which this paper builds. Our contribution is the coupling of exclusive vector
credit lineage with accepted and uncertain external-effect coverage across
crashes, partitions, retries, and multi-parent joins.

\paragraph{The settlement gap.}
Allocation safety is not execution safety. Consider a root grant of two paid
effects. The coordinator allocates one unit to each of workers \(A\) and \(B\),
so the allocation law is satisfied. Worker \(B\) first spends its own unit.
Worker \(A\) then reserves its unit and sends a payment request. The response
is lost. Its controller times out, declares the worker dead, and refunds the
unit to the parent. Worker \(B\) spends that refunded unit, after which the
original \(A\) payment completes. Every allocation
calculation was locally plausible; three effects have nevertheless consumed a
two-effect authorization.

Figure~\ref{fig:motivation} separates the unsafe trace from the required
response. The missing object in this allocation-only trace is an explicit
\emph{effect obligation}: once an external
effect may have crossed the durability boundary, its budget remains reserved
through uncertainty. Recovery may query, settle, or permanently retire the
obligation; elapsed time alone cannot make the right spendable again.

\begin{figure*}[t]
\centering
\begin{floatbox}
\resizebox{0.98\linewidth}{!}{%
\begin{tikzpicture}[
  >=Stealth,
  box/.style={draw, rounded corners=2.8mm, thick, align=center,
    minimum width=2.62cm, minimum height=0.84cm, inner sep=4.5pt, font=\small},
  edge/.style={->, thick},
  bad/.style={->, thick, draw=opRed},
  good/.style={->, thick, draw=opGreen}
]
\node[font=\bfseries, text=opRedText] (badtitle) at (0,2.1)
  {Allocation-safe appearance, unsafe settlement};
\node[box, draw=opBlue, fill=opBlueFill, text=opBlueText] (broot) at (0,0.8)
  {\textbf{Root}\\budget \(B=2\)};
\node[box, draw=opAmber, fill=opAmberFill, text=opAmberText, right=8mm of broot] (ba)
  {\textbf{Worker B}\\spend own 1};
\node[box, draw=opRed, fill=opRedFill, text=opRedText, right=8mm of ba] (btimeout)
  {\textbf{Worker A}\\reserve 1; dispatch};
\node[box, draw=opAmber, fill=opAmberFill, text=opAmberText, right=8mm of btimeout] (bb)
  {\textbf{Timeout}\\refund 1; B spends};
\node[box, draw=opRed, fill=opRedFill, text=opRedText, right=8mm of bb] (blate)
  {\textbf{Late A effect}\\total \(=3>2\)};
\draw[edge] (broot) -- (ba);
\draw[edge] (ba) -- (btimeout);
\draw[bad] (btimeout) -- node[above, font=\scriptsize, inner sep=1pt]
  {lost reply} (bb);
\draw[bad] (bb) -- (blate);

\node[font=\bfseries, text=opGreenText] (goodtitle) at (0,-1.15)
  {Fault-tolerant response: uncertainty retains the right};
\node[box, draw=opBlue, fill=opBlueFill, text=opBlueText] (groot) at (0,-2.45)
  {\textbf{Root}\\budget \(B=2\)};
\node[box, draw=opAmber, fill=opAmberFill, text=opAmberText, right=8mm of groot] (ga)
  {\textbf{Worker A}\\reserve 1; dispatch};
\node[box, draw=opAmber, fill=opAmberFill, text=opAmberText, right=8mm of ga] (gind)
  {\textbf{Indeterminate}\\keep reserved};
\node[box, draw=opPurple, fill=opPurpleFill, text=opPurpleText, right=8mm of gind] (grec)
  {\textbf{Reconcile}\\receipt / no-effect proof};
\node[box, draw=opGreen, fill=opGreenFill, text=opGreenText, right=8mm of grec] (gdone)
  {\textbf{One transition}\\commit or return};
\draw[edge] (groot) -- (ga);
\draw[edge] (ga) -- node[above, font=\scriptsize, inner sep=1pt]
  {lost reply} (gind);
\draw[good] (gind) -- (grec);
\draw[good] (grec) -- (gdone);
\end{tikzpicture}%
}
\end{floatbox}
\caption{Parent/child allocation conservation does not prevent a
timeout-refund/late-completion double spend. The safe trace keeps the
reservation charged while the external outcome is indeterminate. Color is
redundant with labels and arrow meaning.}
\Description{Two horizontal traces compare unsafe and safe handling of a lost
effect response. The unsafe red trace refunds on timeout, lets another worker
spend, and later observes the first effect, exceeding the root budget. The safe
green and amber trace keeps the right reserved and reconciles an authenticated
receipt or no-effect proof before committing or returning it.}
\label{fig:motivation}
\end{figure*}

\paragraph{Why a delegation DAG is harder than a tree.}
Agent delegation is not necessarily hierarchical. Two branches may ask a
shared specialist to synthesize results; a reviewer may join multiple
principal-approved inputs; retries may reconstruct a worker under a new
process identity. A child with two incoming references does not thereby own
the sum of both ancestors. If those references alias one credit generation,
adding them creates authority. The budget-ownership relation must therefore
preserve disjoint lineage even when the task graph has joins.

\paragraph{Dual conservation.}
The paper distinguishes two properties:

\begin{align}
\textit{ledger conservation:}\quad&
\available+\inflight+\reserved+\quarantined \notag\\[-1mm]
&\qquad+\committed+\forfeited=\issued,
\label{eq:intro-ledger}\\
\textit{effect conservation:}\quad&
  \forall \iota\in\mathsf{Obs},\ \exists!\,o:\Covered(\iota,o),
  \quad \mathsf{Exposure}_g\preceq\mathbf B_g .
\label{eq:intro-effect}
\end{align}

Equation~\eqref{eq:intro-ledger} prevents rights from being created and keeps
ambiguous dispatched work visible in the quarantined bucket.
Equation~\eqref{eq:intro-effect} prevents a real effect instance from becoming
unaccounted while messages, controllers, and receipts disagree, and bounds
settled actual charge plus the maximum charge of uncertain operations. The
unique operation \(o\) is quarantined, committed, or conservatively forfeited;
injectivity prevents one reservation from covering two distinct instances.
Neither equation implies the other without the binding and maximum-charge
assumptions stated in Section~\ref{sec:threat}.

\paragraph{Approach.}
We model each root grant as a vector of quantized resources. The issuer creates
exclusive escrow credits. Delegation moves a subvector into a fresh child
generation and atomically removes it from the parent generation. A branch can
spend locally during a partition only from its exclusive escrow. Before
dispatch, the branch durably creates a reservation bound to the root, credit
lineage, current epoch, normalized effect, maximum charge, receiver, and stable
operation key. Dispatch changes the obligation state but does not refund.
Authenticated, effect-bound success receipts are applied at most once;
verified no-effect certificates return the reservation; irrecoverable
uncertainty can be permanently forfeited but not silently reused.

\paragraph{Contributions.}
This paper makes five contributions:

\begin{enumerate}
  \item \textbf{A fault-aware authorization semantics.} We define resource
        vectors, exclusive credits, a delegation DAG, operation obligations,
        authenticated terminal evidence, and a transition system that separates
        allocation, execution, settlement, and effect coverage.
  \item \textbf{A fault-tolerant protocol.} We give algorithms for idempotent
        split/delegation, local reservation, dispatch, terminal-evidence settlement,
        multi-parent join, recovery, epoch fencing, and safe reclaim, together
        with refinement obligations for MCP/A2A/HTTP adapters.
  \item \textbf{Conditional safety results.} We prove ownership partition,
        global conservation, descendant non-amplification, at-most-once
        settlement, effect-instance coverage, an external exposure bound,
        late-completion safety, and partition confinement. We state an explicit
        trace-lifting criterion that a concrete runtime must discharge before
        inheriting those abstract results.
  \item \textbf{A coordination boundary.} An indistinguishability argument
        shows why a shared, unallocated unit cannot remain spendable in every
        partition without risking amplification. Exclusive preallocation
        buys local availability; reallocation and reclaim may require
        coordination.
  \item \textbf{An evidence-disciplined reference profile.} We specify a
        provider-free artifact that connects finite model checking,
        transition-level invariant checks, mutation, fault injection, and descriptive
        local timing to explicit claims. Hosted-model quality and production
        deployment remain outside the evidence.
\end{enumerate}

\paragraph{Non-goals.}
We do not invent escrow, bounded counters, leases, Sagas, idempotency,
transactional tool use, commit-time authorization, or delegation tokens. We do
not guarantee exactly-once network delivery, roll back irreversible effects,
infer prices, optimize allocation, tolerate Byzantine infrastructure, or
remain globally available through arbitrary partitions.

\section{Background and Closest Work}
\label{sec:background}

\subsection{Resource-Bounded Agent Execution}

Agent Contracts unifies input/output specifications, skill sets,
multi-dimensional resources, temporal bounds, success criteria, and
termination in one lifecycle \cite{ye2026agentcontracts}. For delegation it
requires \(\sum_i R_i\le R_{\mathrm{parent}}\), supports recursive
subcontracts and returned capacity, and reports zero conservation violations
in a 50-topic three-agent pipeline. Its hard monitor checks between actions,
and the paper explicitly notes that a single call can exceed a budget before
usage is known. Our work treats Agent Contracts as the closest
allocation-level neighbor. Our question is what happens after allocation
when messages duplicate, branches partition, a process crashes, or an external
effect is uncertain.

Token Budgets establishes that an ordinary copyable number is the wrong
abstraction \cite{khan2026tokenbudgets}. Its affine Rust object prevents a
cost-bearing value from being cloned, double-spent, or used after delegation
within the type-system envelope. The paper also quantifies static
over-reservation, sketches distributed reservation using mechanisms such as
Sagas, consensus, and leases, and identifies distributed partial failure as a
substantial open extension. Our protocol adopts the same move-not-copy
intuition but makes that extension explicit across processes, languages,
protocols, crash recovery, and DAG joins.

CapLease is the closest durable-authorization neighbor: it records consumable
authorization state across retries, concurrent use, delegation, and crash
recovery, with an idempotent receiver contract \cite{xu2026durableauthorization}.
It therefore establishes durable agent authorization as direct prior art. The
distinct contribution here is partition-aware transfer of exclusive
multi-dimensional credit through a delegation DAG, including atomic
multi-parent joins, together with one conservation argument spanning both
credit lineage and accepted or uncertain external effects.

Among the systems compared here, EpochX is the closest credits-native
infrastructure: it locks task bounties,
delegates budgets recursively, and settles rewards on acceptance
\cite{wang2026epochx}. Its focus is a production marketplace and reusable
asset economy. Our focus is the prefix invariant and effect-coverage semantics
when transfer, settlement, retry, and recovery traces are duplicated,
reordered, lost, partitioned, or crash-interrupted.

\subsection{Delegation Identity and Scope Attenuation}

AIP supplies verifiable identity, attenuated authorization, provenance, and
completion records across MCP, A2A, and HTTP \cite{prakash2026aip}. It carries
a \mcode{budget\_usd} ceiling that attenuates along a token chain, but treats
that value as a per-token authorization ceiling rather than a running balance;
the runtime remains responsible for aggregate spend enforcement. AATs let a
holder derive a signed chain with equal-or-narrower tools, arguments, depth,
and lifetime without contacting the root issuer, while leaving transport
binding deployment-specific \cite{niyikiza2026aat}. Overlaying Governance
formalizes recursive delegation
and scope composition over existing policy \cite{ibrahim2026overlay}. These
mechanisms prevent \emph{scope amplification}. A consumable right adds a
different obligation: two children can each hold the same narrow scope while
only one may consume the remaining unit. Offline attenuation alone cannot
decide exclusive ownership.

PAuth derives service-side slices for task-implied concrete operations and
binds operand values to provenance-aware authorization envelopes
\cite{sharma2026pauth}. It strengthens operation-specific authorization, but
does not define exclusive ownership of consumable budgets, crash/partition
credit transfer, or settlement of an uncertain external effect.

Stateful least privilege makes authorization decisions depend on execution
history and represents policy state as replicated state machines
\cite{cao2024stateful}. This establishes stateful cloud authorization as a
deployed security direction. Our state is narrower but structurally
different: it tracks exclusive consumable-credit generations and their
effect obligations across an agent delegation DAG.

MCP standardizes model-to-tool connectivity and profiles OAuth authorization;
A2A standardizes agent discovery and task exchange
\cite{mcp2026authorization,a2a2026spec}. MPAC adds multi-principal sessions,
intent, operations, conflicts, Lamport-clock watermarks, optimistic
concurrency, and governance \cite{qian2026mpac}. We use protocol adapters and
causal metadata but do not claim a general coordination protocol.
The active individual Internet-Draft on Transaction Tokens for Agents
propagates subject, actor, and agentic context through multi-agent call chains
\cite{raut2026transactiontokens}; it specifies neither a running consumable
balance nor exclusive escrow ownership or effect coverage.

\subsection{Transactional Effects and Commit-Time Authority}

Atomix gives agent tool calls progress-aware transactional semantics. It uses
epochs, per-resource frontiers, delayed commit for bufferable effects, and
compensation for externalized effects \cite{mohammadi2026atomix}. Commit-time
authorization separately requires fresh, causally prior, effect-bound,
eligible authority at the durability boundary
\cite{santos2026temporary}. Context-to-Execution Integrity binds protected
fields, exact effects, and invocation authority \cite{santos2026cxi}. These
works establish that endpoint success is not enough and that external effects
need a boundary protocol. We add conserved consumable rights across
distributed delegation and keep uncertain effects charged across recovery.

Compensation is useful but weaker than prevention for irreversible payments,
messages, disclosures, or physical actions. Sagas explicitly accept
application-specific compensation rather than atomic rollback
\cite{garciamolina1987sagas}. Likewise, ``exactly once'' can conflate delivery,
state consistency, and deterministic replay \cite{trofimov2019delivery}. Our
claims are at-most-once \emph{settlement} and continuous \emph{coverage} at a
mediated gateway, not exactly-once delivery.

Two recent results sharpen this boundary. Machine-checked dual-write recovery
shows that source-side state alone cannot safely distinguish a missing sink
write from a completed one and formalizes recovery around sink acceptance
records and fencing \cite{andreakis2026dualwrite}. A cross-runtime study of
agent tool calls similarly finds that exactly-once behavior depends on the
model, harness, and tool contract, and that verification alone does not close
late-commit races \cite{li2026exactlyonceagents}. Our protocol adopts that
receiver-visible boundary and adds the invariant that every accepted or still
uncertain effect remains covered by exclusive credit.

\subsection{Escrow, Numeric Invariants, and Coordination}

The Escrow Transactional Method permits concurrent changes while reserving
enough rights to preserve numeric bounds \cite{oneil1986escrow}. Bounded
Counter conflict-free replicated data types (CRDTs) build on convergent
replicated-data-type semantics and distribute
decrement rights so disconnected replicas can act without violating a lower
bound \cite{shapiro2011crdt,balegas2015bounded}. Just-Right
Consistency combines CRDTs, transactional causal consistency, bounded
counters, and static analysis to synchronize only where application
correctness requires it \cite{shapiro2018justright}. These are foundational
prior art, not our inventions.

Invariant confluence characterizes when valid divergent states can merge
without coordination \cite{bailis2014coordination}. The recent Complete CALM
criterion states a broader history-based boundary between monotone
specifications and those requiring coordination
\cite{hellerstein2026coordination}. A consumable shared unit is non-monotone:
spending in one partition removes an option from another. Our protocol applies
the classical answer--exclusive preallocation--and then adds agent-specific
delegation lineage, authority bindings, normalized effects, and settlement
obligations.

\subsection{Leases, Fencing, and Durable Workflow}

Leases provide time-bounded authority under clock and failure assumptions
\cite{gray1989leases}; lock services use monotonically ordered ownership and
durable consensus to coordinate clients \cite{burrows2006chubby,
ongaro2014raft}. Expiry does not undo an already dispatched effect. We use an
epoch as a fencing generation, never as evidence that the previous effect did
not happen.

Recent capability-coherence work maps authorization revocation to memory
consistency and reports an execution-count bound on post-revocation operations
\cite{parakhin2026bureaucracy}. That revocation bound is complementary: our
epoch fence controls acceptance across ownership generations, while reclaim
also requires exclusive credit ownership and terminal effect evidence.

Durable distributed applications commonly decompose work into idempotent
steps, stable request identifiers, and reconciliation rather than assuming a
cross-organization transaction \cite{helland2017life}. Stable call identifiers
and duplicate suppression trace back to classic RPC semantics
\cite{birrell1984rpc}; durable systems additionally need the completion record
and protected mutation to share the relevant atomicity boundary.
Linearizability remains
the abstraction for an authoritative local transition
\cite{herlihy1990linearizability}; Lamport order supplies causal structure
without pretending synchronized clocks determine truth
\cite{lamport1978time}. Table~\ref{tab:closest} positions the resulting gap.

\begin{table*}[t]
\centering
\scriptsize
\setlength{\tabcolsep}{3pt}
\begin{tabularx}{\textwidth}{L{0.145\textwidth}YYYYYY}
\toprule
\textbf{Work} & \textbf{Consumable vector} & \textbf{Recursive / DAG delegation} &
\textbf{Crash + partition} & \textbf{External-effect settlement} &
\textbf{Primary contribution} & \textbf{Boundary relative to this paper} \\
\midrule
Agent Contracts & Yes & Recursive hierarchy & No fault semantics evaluated &
Between-action monitor & Resource-bounded contract + allocation law &
No duplicate/late-effect/fencing model \\
Token Budgets & Cost scalar/object & Fan-out ownership & In-process focus &
Estimator-based cap & Incident catalog + affine Rust case study &
No durable cross-protocol recovery \\
CapLease & Consumable authorization state & Delegation & Crash/retry recovery &
Idempotent receiver contract & Durable authorization-state protocol &
No partitioned vector-credit DAG join/effect-coverage theorem \\
EpochX & Task credits / bounties & Recursive delegation & No reported
crash/partition proof & Lock + acceptance settlement & Agent marketplace and
asset economy & No prefix ownership/effect-uncertainty invariant \\
AIP / AAT & Reusable scopes & Signed chains & Offline verification &
Invocation + completion binding & Identity and monotone scope attenuation &
No exclusive consumable ownership \\
MPAC & Not central & Multi-principal graph & Optimistic conflicts &
Operation state & General agent coordination & No resource invariant \\
Atomix & Resource frontiers & Workflow branches & Fault injection &
Commit + compensation & Transactional tool semantics &
No delegated budget conservation \\
RIFL / Beldi / ExoFlow & No budget vector & remote procedure call
(RPC)/workflow graph &
Crash recovery & Durable completion / exactly-once workflow &
Replay-safe execution substrates & No delegated-right or charge-bound theorem \\
Escrow / Bounded Counter & Numeric rights & Replica rights graph &
Partition-aware & Database update & Numeric invariant preservation &
No agent authority/effect witness \\
Commit-time authorization & Not central & Multi-agent workflow &
Controlled invalidation & Fresh effect-bound witness &
No resource transfer/settlement law \\
\textbf{This paper} & \textbf{Quantized vector} &
\textbf{Disjoint lineage in a task DAG} & \textbf{Crash, retry, partition,
late completion} & \textbf{Reservation + authenticated terminal evidence} &
\textbf{Dual ledger/effect conservation} &
\textbf{Conditional on mediated gateway and terminal-evidence soundness} \\
\bottomrule
\end{tabularx}
\caption{Closest work is complementary. The contribution is not any
individual primitive, but a fault-aware composition that keeps exclusive
delegated rights and external-effect obligations coupled.}
\label{tab:closest}
\end{table*}

\section{System, Failure, and Threat Model}
\label{sec:threat}

\subsection{Entities and Trust Boundaries}

The system contains a root issuer, delegation controllers, agent runtimes,
durable branch stores, effect gateways, a terminal-evidence authority, a reconciler, and
an auditor. The issuer authenticates a task grant and creates the initial
budget vector. Controllers split and move rights. Agent models propose work
but cannot mint, settle, or refund credit. Each branch store atomically records
local ownership, reservations, outbox state, and terminal evidence. The effect gateway
is the complete-mediation point for a protected effect. It enforces the
canonical operation key and epoch, and returns or exposes an authoritative
outcome.

In the reference profile, the issuer, controller, branch store, gateway,
terminal-evidence authority, and reconciler form the declared enforcement
trusted computing base (TCB); the controller and reconciler are trusted
orchestration but still cannot
bypass store/gateway validation. A deployment may remove those two
orchestrators from the TCB only if every request is independently revalidated
at the authoritative store or gateway. The gateway's enforcement verifier is
therefore distinct from the read-only independent audit verifier shown outside
the boundary.

Figure~\ref{fig:architecture} shows the reference boundary. The network and
agent data plane are untrusted. The issuer need not be online for every local
spend: partition-local availability derives from credit already escrowed to
that branch. Cross-zone transfer, reclaim, or expansion can require an online
coordinator.

\begin{figure*}[t]
\centering
\begin{floatbox}
\resizebox{0.98\linewidth}{!}{%
\begin{tikzpicture}[
  >=Stealth,
  box/.style={draw, rounded corners=2.8mm, thick, align=center,
    minimum width=2.65cm, minimum height=0.86cm, inner sep=5pt, font=\small},
  edge/.style={->, thick},
  trust/.style={draw=opRed, dashed, rounded corners=2.8mm, thick, inner sep=5mm}
]
\node[box, draw=opBlue, fill=opBlueFill, text=opBlueText] (issuer)
  {\textbf{Root issuer}\\grant + budget vector};
\node[box, draw=opPurple, fill=opPurpleFill, text=opPurpleText, right=8mm of issuer] (ctrl)
  {\textbf{Delegation controller}\\exclusive split / transfer};
\node[box, draw=opAmber, fill=opAmberFill, text=opAmberText, right=8mm of ctrl] (branch)
  {\textbf{Branch store}\\credit + reservation + outbox};
\node[box, draw=black!35, fill=black!2, right=15mm of branch] (agent)
  {\textbf{Untrusted agent}\\plan / delegate / retry};

\node[box, draw=opPurple, fill=opPurpleFill, text=opPurpleText, below=12mm of branch] (gateway)
  {\textbf{Effect gateway}\\idempotency + fence + effect};
\node[box, draw=opGreen, fill=opGreenFill, text=opGreenText, right=8mm of gateway] (receiver)
  {\textbf{External receiver}\\durable domain state};
\node[box, draw=opPurple, fill=opPurpleFill, text=opPurpleText, below=12mm of ctrl] (receipt)
  {\textbf{Evidence authority}\\signed success / no-effect};
\node[box, draw=opAmber, fill=opAmberFill, text=opAmberText, left=8mm of receipt] (recon)
  {\textbf{Reconciler}\\query / settle / retire};
\node[box, draw=opBlue, fill=opBlueFill, text=opBlueText, below=24mm of receipt] (audit)
  {\textbf{Independent audit verifier}\\trace + conservation totals};

\draw[edge] (issuer) -- (ctrl);
\draw[edge] (ctrl) -- (branch);
\draw[edge] ([yshift=3.5pt]agent.west) --
  node[above, font=\footnotesize, inner sep=1pt]{proposal}
  ([yshift=3.5pt]branch.east);
\draw[edge] ([yshift=-3.5pt]branch.east) --
  node[below, font=\footnotesize, inner sep=1pt]{decision}
  ([yshift=-3.5pt]agent.west);
\draw[edge] (branch) -- node[left, font=\footnotesize]{reserved dispatch} (gateway);
\draw[edge] (gateway) -- (receiver);
\draw[edge] (receiver) |- (receipt);
\draw[edge] (receipt) -- (recon);
\draw[edge] (recon) -- (branch);
\draw[edge, dashed] (receipt.south) --
  node[right, font=\footnotesize, inner sep=2pt]{read-only evidence export}
  (audit.north);

\node[trust, fit=(issuer) (ctrl) (branch) (gateway) (receipt) (recon),
  label={[font=\footnotesize, text=opRedText]above:declared trusted enforcement envelope}] {};
\end{tikzpicture}%
}
\end{floatbox}
\caption{Reference architecture and declared trust boundary. The untrusted agent
may propose and retry; only the trusted branch store can move budget state.
Protected external effects pass through a gateway whose authenticated outcome
drives idempotent settlement. The independent verifier is a read-only evidence
consumer outside the safety TCB.}
\Description{A three-tier architecture. A root issuer sends a grant and budget to
a delegation controller and branch store beside an untrusted agent. Reserved
dispatch flows to an effect gateway and external receiver. A terminal-evidence
authority and reconciler feed authoritative outcomes back to the branch store.
An independent verifier outside the boundary receives the diagrammed
read-only evidence export and verifies trace/conservation records. A dashed
boundary identifies only the trusted enforcement components.}
\label{fig:architecture}
\end{figure*}

\subsection{Budget and Effect Surfaces}

The protected budget is a finite vector over dimensions
\(\Resources\): currency quanta, paid API calls, risk units, recipient-specific
disclosures, or effect counts. Each dimension has an authenticated unit,
quantization, and maximum-charge rule. Cross-dimension exchange is disallowed
unless a separately authorized conversion contract is present; otherwise a
dollar cannot silently substitute for a risk unit.

Protected effects include any mediated external state change whose authority
is consumable: a payment, message, credential issuance, infrastructure write,
record mutation, disclosure, or paid call. Pure reads may still consume an API
or disclosure budget. The normalizer maps protocol-specific requests to a
canonical effect descriptor. Effects outside the declared gateway envelope,
including covert channels and direct credentials, are outside the theorem.

\subsection{Failure Model}

The network is asynchronous. Messages can be lost, duplicated, delayed,
reordered, or delivered after an arbitrary partition. Processes can crash and
restart, losing volatile state while preserving completed durable commits. We
do not use wall-clock order to infer whether an effect happened. Clocks are
used only for policy expiry under a stated deployment profile; epochs and
authoritative terminal evidence determine settlement.

The model includes concurrent fan-out and fan-in, duplicate delegation and
dispatch, retries, crash before or after each durable write, response loss,
timeout followed by late completion, branch partition and healing, stale epoch
messages, repeated terminal evidence, and recovery scans. It excludes Byzantine
consensus and arbitrary corruption inside a trusted durable store.

\subsection{Adversary}

The planner, model, tool description, retrieved content, and downstream agent
may be malicious or confused. They can request overlapping child budgets,
replay messages, alias a reservation across effects, relabel effects, trigger
retry storms, construct joins with shared ancestry, withhold completion, and
present stale epochs. They cannot forge issuer or terminal-evidence signatures, break
collision-resistant digests, bypass the protected gateway, or directly mutate
the trusted store under the main theorem.

\subsection{Assumptions}

\begin{assumption}[Authenticated roots and credits]
\label{ass:auth}
Root grants, credit generations, controller identities, and protocol bindings
are unforgeable and domain-separated.
\end{assumption}

\begin{assumption}[Durable local linearization]
\label{ass:durable}
Each branch transition atomically updates credit, operation, outbox, and audit
records in one durable local transaction.
\end{assumption}

\begin{assumption}[Unique root issuance and stable decisions]
\label{ass:escrow}
Each root generation is issued once. Local split, join, reservation,
settlement, and successor-generation decision keys are linearized by their
authoritative store. In particular, one write-once
\((g,\chi_c,cid,\eta_c,\mcode{reissue})\) decision, whose predecessor epoch
comes from the authenticated credit record rather than a caller parameter,
selects at most one successor pair \((\eta^+,c^+)\). Each
cross-store transfer ID has one monotone decision register whose value can
change from \mcode{pending} to exactly one of \mcode{accepted} or
\mcode{cancelled}. Exclusive ownership is a protocol invariant to be proved,
not part of this assumption.
\end{assumption}

\begin{assumption}[Sound effect boundary]
\label{ass:effect}
All protected paths are normalized to the correct resource dimensions and
effect identity before dispatch.
\end{assumption}

\begin{assumption}[Fenced idempotent gateway]
\label{ass:gateway}
For each receiver-scoped key, gateway acceptance, fence installation, effect
record creation, and terminal outcome have one linearization order. The
gateway selects the authoritative epoch and fence by the authenticated
\((g,\chi_c)\) committed inside the operation key and \(\beta_o\).
The
gateway rejects a not-yet-accepted stale generation, treats a canonical
operation key as unique within the receiver namespace, rejects any changed
binding under that key, and treats exact replay idempotently. A request
accepted before a fence may complete afterward and remains queryable; the
fence prevents only a new old-generation acceptance. The gateway revalidates
the bound domain precondition and commit-time witness immediately before
acceptance.
\end{assumption}

\begin{assumption}[Maximum-charge contract]
\label{ass:charge}
For endpoint profile \(\chi\), a trusted function
\(\mathsf{RequiredCap}(e,\chi)\) returns a conservative vector. Reservation
requires
\(\mathsf{RequiredCap}(e,\chi)\preceq\mathbf u_o\), and the gateway prevents
the complete actual charge of the accepted effect instance from exceeding
\(\mathbf u_o\). If no such cap can be enforced, the hard-bound profile is
unsupported.
\end{assumption}

\begin{assumption}[Terminal-evidence completeness and non-equivocation]
\label{ass:receipt}
Authenticated terminal evidence is a tagged success receipt or no-effect
certificate. Both bind the operation ID and key, complete acceptance digest
\(\beta_o\), root, logical slot and credit lineage, epoch, effect digest,
receiver, outcome, issuer, and signature. A success receipt additionally binds the unique
instance ID, gateway acceptance sequence, and complete actual charge no
greater than the reservation. A no-effect certificate instead binds a
receiver-installed \(\mathit{fenceEpoch}>\eta_o\). The authority does not sign
conflicting terminal variants. A success receipt exists only for the unique
effect instance accepted under that exact \(\beta_o\). A no-effect certificate
may be issued only when the exact
\((g_o,\chi_o,\kappa_o,\beta_o,\eta_o)\) binding never
linearized \(\mathsf{GatewayAccept}\); it proves both
non-acceptance/non-occurrence and that its installed fence makes every future
occurrence under that binding impossible. A fence does not cancel an already
accepted request.
\end{assumption}

\begin{assumption}[Adapter refinement]
\label{ass:adapter}
MCP, A2A, HTTP, and runtime adapters preserve canonical operation and transfer
identifiers, root, logical slot, credit lineage and footprint commitments, source and
operation labels, resource and maximum-charge vectors, selector schema version,
epoch, \(\kappa\), \(\beta_o\), normalized effect and receiver, joint witness,
and every terminal-evidence binding and committed/remainder footprint
commitment used by the abstract transition.
\end{assumption}

\begin{assumption}[Complete mediation]
\label{ass:mediation}
An agent cannot reach a protected effect with credentials or a path outside
the branch store and effect gateway.
\end{assumption}

\subsection{Goals and Explicit Non-Goals}

The safety goals are global ledger conservation, descendant
non-amplification, at-most-once settlement, continuous injective effect
coverage, a root-bounded external exposure vector, stale-epoch rejection,
late-completion safety, and partition confinement. Concrete trace lifting is a
separate implementation proof obligation. Liveness is conditional: an online
branch with sufficient local escrow and a responsive gateway can progress
without contacting the root issuer. A permanently partitioned,
escrow-exhausted branch can safely refuse forever.

The paper does not optimize allocation, predict token or price use, guarantee
receiver honesty beyond Assumption~\ref{ass:receipt}, reverse completed
effects, provide exactly-once network delivery, tolerate Byzantine
infrastructure, conceal accounting data from the evidence plane, or establish
production service-level objectives (SLOs).

\begin{table*}[t]
\centering
\small
\begin{tabularx}{\textwidth}{L{0.17\textwidth}L{0.19\textwidth}YY}
\toprule
\textbf{Fault or attack} & \textbf{Naive failure} &
\textbf{Required mechanism} & \textbf{Safe consequence} \\
\midrule
Concurrent fan-out & Two children copy one remainder & Exclusive atomic split
with stable operation ID & Inputs consumed once; duplicate is a stutter \\
Multi-parent join & Shared ancestor counted twice & Disjoint lineage test and
atomic input consumption & Alias rejected; distinct inputs may merge \\
Duplicate dispatch & Effect executes twice & Canonical effect key at fenced
gateway & Replay returns same outcome \\
Response loss & Timeout interpreted as failure & Durable obligation and
\mcode{indeterminate} state & Credit stays reserved \\
Crash after effect & Ledger misses consumption & Authenticated receipt query
and idempotent settlement & Recovery moves reservation once \\
Stale worker & Old owner spends reclaimed credit & Monotone epoch and gateway
fence & Old generation rejected \\
Network partition & Both sides spend cached total & Preallocated exclusive
escrow & Each component spends only its rights \\
Unknown receiver & Fabricated ``no effect'' proof & Evidence authority or
permanent retirement & No unsafe automatic refund \\
\bottomrule
\end{tabularx}
\caption{Failure semantics. Safety may reduce availability or strand budget;
the protocol reports that cost rather than hiding it as successful execution.}
\label{tab:faults}
\end{table*}

\section{Why Simpler Mechanisms Fail}
\label{sec:failures}

\subsection{Allocation Conservation Is Not Prefix Conservation}

The static condition
\[
  \sum_{i\in\mathsf{children}(p)}b_i\le b_p
\]
checks one allocation snapshot. It says nothing about whether an old child
still owns \(b_i\) after a refund, whether the same request was applied twice,
or whether an external effect already consumed the allocation. The property
we require holds for every execution prefix and accounts for in-flight and
uncertain obligations.

\subsection{Read--Check--Decrement Replicas Double Spend}

If two partitioned branches read a global remaining value \(1\), both can
locally verify \(1\ge1\) and decrement. Last-writer-wins convergence can
produce the valid-looking final value \(0\) after two effects. CRDT convergence
is not invariant preservation; unrestricted decrement is not
invariant-confluent \cite{bailis2014coordination}. Exclusive rights make each
local decrement monotone with respect to ownership.

\subsection{Static Splitting Is Safe but Can Strand Utility}

Giving each of \(n\) possible workers \(B/n\) before execution prevents
double-spend, but idle branches strand capacity and dynamic fan-out requires
worst-case over-reservation. Token Budgets measures this cost in its setting
\cite{khan2026tokenbudgets}. Our protocol retains the safety of static escrow
while allowing coordinated, idempotent transfer and return. It does not
pretend that reallocation remains available through every partition.

\subsection{Timeout Refund Creates a Late-Completion Race}

Let an operation be dispatched before time \(t\), let no response arrive by
\(t+\Delta\), and let the controller refund. There are two executions
indistinguishable to the controller at \(t+\Delta\): one in which the gateway
never executed and one in which it executed but the response is delayed. A
refund is safe in the first and unsafe in the second. No timeout value resolves
the information gap. The system needs a fenced, authoritative no-effect proof
or must retain/retire the reservation.

\subsection{Idempotency Keys Do Not Allocate Authority}

A stable request key can stop the same receiver from applying one effect
twice. It does not stop two different operation keys from spending the same
credit, a relabeled path from creating the same normalized effect, or two
receivers from interpreting keys in different namespaces. The key must be
bound to one credit generation, effect digest, receiver, epoch, and maximum
charge, and the gateway must enforce that binding.

\subsection{Scope Attenuation Does Not Consume}

A scope token answers which operations a child may invoke. It can be copied to
two children without widening the scope set, yet both children may spend the
last dollar. Scope non-amplification and resource conservation are orthogonal.
We compose with AIP/AAT-style identity and attenuation rather than replacing
them.

\subsection{Compensation Is Not Conservation}

A Saga can schedule a refund after a duplicated purchase, but the duplicate
effect occurred and compensation can fail, cost money, or be semantically
incomplete. Compensation is a valid recovery policy for designated effects;
it is not evidence that the authorization budget was never exceeded. We record
compensation as a further effect with its own authority and accounting.

\subsection{Independent Correct Components Can Compose Incorrectly}

An allocation service, idempotent gateway, lease manager, and receipt verifier
can each be locally correct while disagreeing on operation identity or epoch.
For example, a gateway may deduplicate on an HTTP key while the ledger retries
under a new operation ID; or a lease manager may fence the branch while the
receiver accepts the old generation. The protocol therefore requires one
joint canonical binding and a concrete-to-abstract refinement contract.

\subsection{The Joint Conditions Are Individually Necessary}

The external bound is not obtained by merely placing familiar components next
to one another. It uses four cross-component conditions on the same canonical
operation:
\[
  \mathcal J=
  \mathsf{ExclusiveOwnership}\land
  \mathsf{InstanceInjectivity}\land
  \mathsf{CompleteCharge}\land
  \mathsf{FenceOrdering}.
\]
Each conjunct has a one-unit counterexample if omitted while the others are
retained:

\begin{enumerate}
  \item without exclusive ownership, two branches reserve the same atom under
        distinct valid keys and each produces one correctly receipted effect;
  \item without instance injectivity, one valid reservation and key produces
        two distinct receiver instances while only one operation remains in
        the ledger;
  \item without complete charge, the gateway enforces a cap of one but a
        receipt reports zero for a one-unit effect, so internal settlement is
        conserved while actual cost is not; and
  \item without fence ordering, an old request can be accepted after its atom
        is reissued, then both the old and successor operations complete.
\end{enumerate}

Consequently, bounded counters alone do not imply effect conservation, and
durable completion records alone do not imply delegated-right conservation.
The contribution proved below is the composition: \(\mathcal J\), together
with persist-before-exposure and complete mediation, turns exact ownership
partition into the external exposure bound in
Equation~\eqref{eq:external-bound}.

\section{Formal Budget and Delegation Model}
\label{sec:formal}

\subsection{Resource Vectors and Labels}

Let \(\Resources\) be a finite set of authenticated resource dimensions. A
budget is a vector \(\mathbf b\in\NatVec\). We use componentwise order,
addition, and partial subtraction:

\[
  \mathbf x\preceq\mathbf y
  \iff \forall k\in\Resources:\ x_k\le y_k,
  \qquad
  \mathbf y\mathbin{\dot-}\mathbf x
  \text{ is defined iff }\mathbf x\preceq\mathbf y.
\]

Quantization is explicit. A currency dimension may use cents; an effect count
uses integer events. The abstract results extend to cancellative commutative
resource monoids with decidable subresource, but the executable profile uses
finite natural vectors.

A resource label
\[
  \lambda=(g,\tau,p,\mathit{purpose},\mathit{audience},
           \mathit{effectClass},t_{\exp},v)
\]
binds root grant \(g\), task atom \(\tau\), principal \(p\), purpose,
recipient/audience, effect class, expiry, and policy version. The relation
\(\lambda'\restrict\lambda\) means that every behavior denoted by
\(\lambda'\) is permitted by \(\lambda\). It is a partial order. Delegation
requires both quantitative and semantic attenuation:

\begin{equation}
  \mathbf b_{\mathrm{child}}\preceq\mathbf b_{\mathrm{parent}},
  \qquad
  \lambda_{\mathrm{child}}\restrict\lambda_{\mathrm{parent}}.
\label{eq:dual-attenuation}
\end{equation}

An authenticated conversion contract is required to exchange dimensions. In
its absence all conservation equations are componentwise and no conversion is
defined.

Expiry and policy-version currency are evaluated as external predicates in
the commit-time witness. The budget-ownership machine does not infer trusted
time or independently prove policy freshness.

\subsection{Root Rights and Compressed Footprints}

\begin{definition}[Root right universe]
\label{def:universe}
For root grant \(g\) with issued vector \(\mathbf B_g\), define
\[
  U_g=\biguplus_{k\in\Resources}
      \{(g,k,j)\mid 1\le j\le B_{g,k}\}.
\]
Each element is a logical consumable right in one dimension. It need not be
materialized; an implementation may represent contiguous ranges, rights
matrices, or balance certificates.
\end{definition}

For \(X\subseteq U_g\), let
\[
  \mathsf{vec}(X)_k
  = |\{(g,k,j)\in X\}|.
\]
Sets make aliasing explicit. A split partitions one footprint; a join unions
only pairwise-disjoint footprints. Numeric balances are a compressed
implementation of this ownership semantics.

\subsection{Credits and the Delegation DAG}

\begin{definition}[Credit generation]
\label{def:credit}
A credit generation is
\[
  c=\langle cid,g,\chi_c,\mathit{owner},P_c,X_c,\lambda_c,\eta_c,\sigma_c\rangle
\]
where \(cid\) is the generation identifier, \(\chi_c\) is the stable logical
slot (reissue-chain) identifier, \(P_c\) is a finite set of parent generations,
\(X_c\subseteq U_g\) is its currently available footprint,
\(\lambda_c\restrict\lambda_g\), \(\eta_c\) is a monotonically ordered epoch,
and
\(\sigma_c\in\{\mcode{active},\mcode{closing},\mcode{closed}\}\) is its
lifecycle status. Its available vector is
\(\mathbf a_c=\mathsf{vec}(X_c)\). A reissue preserves \(\chi_c\) while
creating a fresh \(cid\) and higher epoch; a split or join creates fresh
logical slots for its output credits. Consequently, disjoint split siblings
may be current concurrently.
\end{definition}

For state \(s\), let \(\mathsf{ActiveGeneration}_s(g,\chi)\) denote the
credit authority's current-generation register for logical slot \((g,\chi)\).
The current-generation predicate is
\begin{equation}
\begin{split}
\Current_s(c,\eta)\iff{}&
  \sigma_c=\mcode{active}\land\eta=\eta_c\\
 &\land\eta=\mathsf{AuthoritativeEpoch}_s(g,\chi_c)\\
 &\land\mathsf{ActiveGeneration}_s(g,\chi_c)=cid .
\end{split}
\label{eq:current-generation}
\end{equation}
Uniqueness is therefore per logical reissue chain, not per root, owner, or
authority. It prevents two generations of the same slot from being spendable
without forbidding independently owned, pairwise-disjoint sibling slots.
This predicate never substitutes for ownership: every ownership-moving action
also requires its referenced input footprint to have exactly the stated
singleton raw source tag. For dispatch that source is \(\mathsf R(o)\), not
the credit's remaining available footprint.

The task graph \(\Graph_T=(V_T,E_T)\) may be a DAG. The credit-lineage graph
\(\Graph_C\) is also acyclic, but its edges have resource semantics. A
multi-parent credit can be created only by \(\JoinAct\), which consumes
pairwise-disjoint inputs atomically. Thus a task node may depend on two
branches without owning both branches' budgets; dependency is not ownership.

\begin{definition}[Lineage-disjoint join]
\label{def:join}
For current inputs \(c_1,\ldots,c_m\), a join is admissible only if
\[
  \forall i\ne j:\ X_{c_i}\cap X_{c_j}=\varnothing
  \quad\land\quad
  \bigwedge_{i=1}^{m}\Current_s(c_i,\eta_{c_i})
  \quad\land\quad
  \exists g^\star\ \forall i:\ g_{c_i}=g^\star,
\]
and the branch store can consume all inputs in one authoritative transition.
The output is a same-root credit \(c'\) with \(g_{c'}=g^\star\) and footprint
\(\bigcup_i X_{c_i}\); its label must be an authenticated common lower bound,
\(\forall i:\lambda'\restrict\lambda_{c_i}\). We do not assume that the
label partial order has a greatest lower bound.
\end{definition}

The set representation prevents two aliases of one ancestor from being added.
In a compressed implementation, each transfer certificate carries a lineage
commitment and a disjointness witness checked against durable spent/transfer
records.

\subsection{Transfers}

A cross-store delegation cannot atomically debit one database and credit
another without a distributed transaction. We represent a durable handoff
\[
  x=\langle xid,c_s,\mathit{src},\mathit{dst},X_x,
       \lambda_x,\eta_s,\sigma_x\rangle .
\]
The source atomically removes \(X_x\) from \(X_{c_s}\) and records
\(\sigma_x=\mcode{sealed}\). Those atoms are \emph{in transit}: neither source
nor destination may spend them.

Every \(xid\) has one monotone decision register
\begin{equation}
\delta[xid]\in
\{\mcode{pending},\mcode{accepted}(dst,cid),
  \mcode{cancelled}(src)\}.
\label{eq:transfer-decision}
\end{equation}
The compare-and-set from pending to accepted is the logical ownership switch
to the uniquely named destination child; a crash after that decision is
recovered by materializing the same child row, which is an abstract stutter.
Source cancellation requires a destination-exclusion proof and a
compare-and-set from pending to cancelled; that decision logically restores
the source, and later row finalization only materializes it. Thus accept and
cancel cannot both become terminal, and stale source or destination rows never
override the decision register. If no shared decision authority or exclusion
protocol is available, cancellation is disabled and the rights remain in
\(\mathsf T\). Timeout is never a decision.

\subsection{Operations, Effects, and Terminal Evidence}

Throughout, \(H(\mcode{tag}\parallel\cdots)\) denotes a
collision-resistant, domain-separated hash over one length-prefixed canonical
encoding. Maps and sets have deterministic key/element order, integers have a
single representation, and strings have a fixed Unicode normalization.
\(H(P_c)\) and \(H(X_o)\) may be verified lineage and rights commitments
rather than materialized sets, but their opening/verification contract must
bind exactly the modeled values.

\begin{definition}[Normalized effect]
\label{def:effect}
A normalized effect is
\[
  e=\langle \mathit{class},\mathit{resource},\mathit{subject},
  \mathit{receiver},\mathit{payloadDigest},
  \mathit{domainPrecondition}\rangle .
\]
The function \(\Normalize\) maps a protocol-specific proposal and current
domain state to \(e\). Effect-equivalent MCP, A2A, HTTP, shell, or browser
paths share the same digest under the deployment's effect contract.
\end{definition}

\begin{definition}[Effect instance]
\label{def:effect-instance}
An effect descriptor is a template; an occurrence is a distinct instance
\[
 \iota=\langle iid,\mathit{gateway},\mathit{receiver},\kappa,\beta,h_e,
                   \mathit{acceptSeq}\rangle .
\]
The acceptance sequence is assigned by the gateway's per-key linearization
domain, and \(\beta\) is the complete accepted-operation binding defined
below.  The one-effect profile permits at most one \(iid\) for a canonical
key. If an endpoint commits a bundle atomically, the entire bundle is one
normalized instance and its receipt reports the bundle's complete charge.
For a terminal instance, \(\mathsf{ActualCharge}(\iota)\) is the unique
resource vector assigned to that complete instance by the trusted charge
contract; Assumption~\ref{ass:receipt} requires the committed receipt to
report exactly this vector. It is not inferred from an agent's estimate or a
partial stream observation.
\end{definition}

\begin{definition}[Operation reservation]
\label{def:operation}
An operation is
\[
  o=\langle oid,g,\chi_c,cid,\eta_c,\kappa,e,h_e,X_o,
 \mathbf u,\lambda_o,\Phi_o,\nu_{\mathsf{sel}},\sigma_o\rangle ,
\]
where \(oid\) is a stable operation ID. The source credit is \(c_o\); we write
\(g_o=g_{c_o}\), \(\chi_o=\chi_{c_o}\), \(cid_o=cid_{c_o}\), and
\(\eta_o=\eta_{c_o}\), while the unsubscripted source fields displayed in the
tuple denote those same immutable values. Moreover,
\[
 \kappa=H(\mcode{op}\parallel issuer\parallel g\parallel cid\parallel oid
              \parallel receiver\parallel h_e)
\]
is its receiver-scoped canonical idempotency/query key.  The collision-
resistant, domain-separated derivation makes a key name one operation binding
within the threat model; \(\kappa\) is not an agent-chosen alias.  Here \(e\)
and \(h_e\) are the normalized effect and digest, \(X_o\) is an exclusively
reserved footprint, \(\mathbf u=\mathsf{vec}(X_o)\) is the maximum charge,
\(\lambda_o\restrict\lambda_c\), \(\Phi_o\) is the joint authorization
witness, and \(\sigma_o\) is the operation phase.
\(\nu_{\mathsf{sel}}\) names the immutable deterministic footprint-selector
schema used if a successful receipt reports a partial charge.
\end{definition}

The gateway does not accept a loose collection of fields.  It linearizes the
domain-separated digest
\begin{equation}
\begin{split}
\beta_o=H(&\mcode{accept}\parallel oid\parallel g\parallel\chi_c
  \parallel cid\parallel\eta_c\parallel\kappa\parallel H(P_c)\parallel H(X_o)\\
 &\parallel h_e\parallel receiver\parallel\mathbf u
  \parallel H(\lambda_c)\parallel H(\lambda_o)\parallel H(\Phi_o)
  \parallel\nu_{\mathsf{sel}}).
\end{split}
\label{eq:accept-binding}
\end{equation}
Thus the root, logical slot, source lineage, generation, reserved rights,
semantic ceiling, effect, receiver, maximum charge, selector schema, and
commit witness compared by \(\Covered\) are fields of one authenticated
acceptance binding rather than implicit attributes of \(\iota\).

The operation phases are:

\[
\begin{array}{ll}
\mcode{reserved}:&
  \text{durable obligation and outbox exist; dispatch not known to begin},\\
\mcode{indeterminate}:&
  \text{dispatch may have crossed the effect boundary},\\
\mcode{committed}:&
  \text{authoritative success receipt applied by one terminal decision},\\
\mcode{verified-aborted}:&
  \text{durable local non-dispatch proof or authoritative no-effect
  certificate},\\
\mcode{forfeited}:&
  \text{rights permanently unavailable; uncertainty is not refunded}.
\end{array}
\]

Dispatch moves \mcode{reserved} to \mcode{indeterminate} before or atomically
with exposing work to the gateway. A synchronous success can proceed directly
to settlement, but a crash at any cut leaves a durable reservation or
indeterminate obligation.

\begin{definition}[Authenticated terminal evidence]
\label{def:receipt}
Externally authenticated terminal evidence is the tagged sum
\[
\epsilon ::= r^+ \mid \pi_{\neg e},
\]
where
\begin{align*}
r^+={}&\langle rid,oid,iid,\mathit{acceptSeq},\kappa,\beta_o,g,\chi_c,\\[-1mm]
 &\quad cid,\eta_c,h_e,\mathit{receiver},\mathbf d,\mcode{committed},
 t,\mathit{issuer},\mathit{signature}\rangle,\\
\pi_{\neg e}={}&\langle pid,oid,\kappa,\beta_o,g,\chi_c,cid,\eta_c,\\[-1mm]
 &\quad h_e,\mathit{receiver},\mathit{fenceEpoch},
 \mcode{verified-no-effect},t,\mathit{issuer},\mathit{signature}\rangle .
\end{align*}
\(\Verify_+(r^+,o)\) holds only when every binding agrees, the issuer is
trusted for the receiver, the signature is valid,
\(\zero\preceq\mathbf d\preceq\mathbf u_o\), and
\(\mathbf d=\mathsf{ActualCharge}(\iota)\) for the identified gateway
instance. \(\Verify_{\neg e}(\pi_{\neg e},o)\) additionally requires exact
\((g_o,\chi_o,\kappa_o,\beta_o,\eta_o)\) agreement, a trusted proof authority,
and \(\mathit{fenceEpoch}>\eta_o\) installed for the same
\((g_o,\chi_o)\) at the receiver, and establishes the
permanent non-acceptance/non-occurrence condition in
Assumption~\ref{ass:receipt}. Let
\(\Verify_{\mathrm{term}}(\epsilon,o)\) select the appropriate variant
verifier. Neither variant may substitute for the other.

A fused local recovery transaction additionally admits a distinct durable
non-dispatch witness
\[
\begin{aligned}
\pi_{\mathrm{local}}=\langle&
 lid,oid,\kappa,\beta_o,g,\chi_c,cid,\eta_c,h_e,\mathit{outboxSeq},\\
 &H(\mathit{outboxRow}),\mcode{never\mbox{-}dispatchable},
 t,\mathit{issuer},\mathit{authenticator}\rangle .
\end{aligned}
\]
\(\mathsf{LocalNoDispatchPre}_s(\pi_{\mathrm{local}},o)\) holds only when all
immutable operation and acceptance-binding fields agree; the issuer and
authenticator identify the authoritative durable operation store; and the
named outbox row's authenticated monotone history has never reached
\mcode{dispatchable}. The protected send path accepts only rows carrying an
earlier durable \mcode{dispatchable} marker. Define
\[
\begin{split}
\Verify_{\mathrm{local}}(\pi_{\mathrm{local}},o,s,s')\iff{}&
 \mathsf{LocalNoDispatchPre}_s(\pi_{\mathrm{local}},o)\\
 &\land\mathsf{Outbox}_{s'}[oid]=
   \mcode{local\mbox{-}abort}\\
 &\land\mathsf{PersistedLocal}_{s'}(\pi_{\mathrm{local}},o).
\end{split}
\]
\(\mathsf{PersistedLocal}_{s'}\) means that the authoritative store persisted
and authenticated the witness in that same serialized commit. The absorbing
post-state permanently prevents later dispatch; this witness is therefore not
a textual worker assertion. Define the full terminal-outcome witness type and
the return-only subtype
\[
  \omega ::= r^+\mid\pi_{\neg e}\mid\pi_{\mathrm{local}},
  \qquad
  \zeta ::= \pi_{\neg e}\mid\pi_{\mathrm{local}}.
\]
Thus \(\epsilon\) remains external signed evidence, whereas \(\omega\) is the
closed tagged sum used by a fused terminal-and-successor transaction.
\end{definition}

For every supported selector schema \(\nu_{\mathsf{sel}}\), fix a public total
order \(\prec_{g,\nu_{\mathsf{sel}}}\) on the atoms of each root and
dimension. For fixed \(g\), \(X\subseteq U_g\), and
\(\mathbf d\preceq\mathsf{vec}(X)\), define
\[
  \mathsf{Take}_{\nu_{\mathsf{sel}}}(X,\mathbf d)
  =\bigcup_{k\in\Resources}
    \mathsf{First}_{d_k}
      \bigl(X\cap\{(g,k,j)\mid j\ge 1\},
      \prec_{g,\nu_{\mathsf{sel}}}\bigr).
\]
Here \(\mathsf{First}_n(Y,\prec)\) is the set of the \(n\) least elements of
finite \(Y\); its existence in every dimension follows from
\(\mathbf d\preceq\mathsf{vec}(X)\).
Thus \(D_o=\mathsf{Take}_{\nu_{\mathsf{sel}}}
(X_o,\mathbf d_{r^+})\) is not re-chosen after a crash or software upgrade.
The terminal decision atomically persists commitments to both \(D_o\) and
\(X_o\setminus D_o\). A conforming implementation uses this version-bound
partition; persisting the charge vector alone is insufficient for atom-level
lineage refinement.

Write \(c\preceq_{\mathsf{lin}}c'\) when \(c=c'\) or an authenticated
directed path in \(\Graph_C\) leads from \(c\) to \(c'\). For a source credit
\(c_o\) and candidate terminal ownership transition \(s\rightarrow s'\), let
\(\mathsf{Succ}_{s\rightarrow s'}(c_o)\) contain (i) \(c_o\), or its
authenticated same-lineage successor, only when that target is current in
both states, and (ii) a fresh same-authority child that is absent in \(s\)
and created with \(P_{c_r}=\{c_o\}\), a fresh logical slot \(\chi_{c_r}\),
and the unique current generation for that slot in \(s'\) by this same
transition. It may also contain (iii) the fresh same-slot successor
\(c_r=c^+\) created by \(\RolloverAct\), but only when that same atomic action
satisfies Equations~\eqref{eq:rollover-ready}--\eqref{eq:rollover-action},
closes \(c_o\), and installs \(c^+\) as current for \((g,\chi_{c_o})\).
Every member satisfies
\(c_o\preceq_{\mathsf{lin}}c_r\); the set never contains an ancestor,
unrelated credit, or
accepted-transfer destination. Define the transition predicate
\begin{equation}
\begin{split}
\mathsf{ReturnTarget}(o,c_r,s,s')\iff{}&
 g_{c_r}=g_o
 \land c_r\in\mathsf{Succ}_{s\rightarrow s'}(c_o)
 \land c_o\preceq_{\mathsf{lin}}c_r\\
&\land\Current_{s'}(c_r,\eta_{c_r})\\
&\land\mathsf{Authority}(c_r)=\mathsf{SettlementAuthority}(o)
 \land\lambda_{c_r}\restrict\lambda_{c_o}.
\end{split}
\label{eq:return-target}
\end{equation}
The terminal decision row binds \(c_r\) before the concrete ownership update;
both become visible in one atomic commit, so the abstract machine exposes no
intermediate state. Returning capacity to a broader ancestor or another store
is a subsequent sealed \(\Delegate\) transition, not an implicit side effect
of settlement.

A governance receipt saying ``policy allowed'' is not an effect receipt. A
model- or agent-reported token count is not authoritative unless the resource
contract designates that reporter. The guarantee tiers are:

\begin{enumerate}
  \item \emph{fenced, idempotent, queryable receiver}: ledger and mediated
        effect coverage;
  \item \emph{receipt-producing but non-idempotent receiver}: no unsafe refund
        after ambiguity, possibly permanent forfeiture; and
  \item \emph{agent self-report only}: internal ledger integrity, not actual
        cost/effect conservation.
\end{enumerate}

\subsection{Ownership State}

The ownership state is \emph{not} typed as a total location function. It
contains six raw relations that could, in an invalid state, overlap or omit a
right:
\[
\begin{array}{lll}
\mathcal A_s\subseteq U_g\times\Credits,&
\mathcal T_s\subseteq U_g\times\mathsf{Transfers},&
\mathcal R_s\subseteq U_g\times\Operations,\\
\mathcal Q_s\subseteq U_g\times\Operations,&
\mathcal C_s\subseteq U_g\times\Operations,&
\mathcal F_s\subseteq U_g\times\Operations.
\end{array}
\]
The letters denote available credit, sealed transfer, reserved operation,
indeterminate/quarantined operation, committed consumption, and forfeited
capacity created by the \(\Retire\) action. For a tagged relation, let
\[
 \mathsf{supp}(Z_s)=\{u\mid\exists z:(u,z)\in Z_s\}
\]
and define the candidate-location multiset
\[
\begin{split}
\mathsf{Own}_s(u)={}&
\{\mathsf A(c):(u,c)\in\mathcal A_s\}
\uplus\{\mathsf T(x):(u,x)\in\mathcal T_s\}\\
&\uplus\{\mathsf R(o):(u,o)\in\mathcal R_s\}
\uplus\{\mathsf Q(o):(u,o)\in\mathcal Q_s\}\\
&\uplus\{\mathsf C(o):(u,o)\in\mathcal C_s\}
\uplus\{\mathsf F(o):(u,o)\in\mathcal F_s\}.
\end{split}
\]
A conflicting distinct tag or cross-relation overlap therefore produces two
candidates rather than being hidden by a function type. Byte-identical
physical replicas of one logical row abstract to one relation member and are
governed by the store's uniqueness key. Only after ownership partition is
proved do we write \(\mathsf{loc}_s(u)\) for the unique element of
\(\mathsf{Own}_s(u)\).

Bucket vectors are computed from the raw supports:
\[
\mathbf A_g(s)=\mathsf{vec}(\mathsf{supp}(\mathcal A_s)),\quad\ldots,\quad
\mathbf F_g(s)=\mathsf{vec}(\mathsf{supp}(\mathcal F_s)).
\]
Within one relation, the protocol invariant also requires a right to have at
most one tag. The support deliberately measures resource mass rather than
tag multiplicity; all conservation equalities use it only after
\(\mathsf{Own}_s(u)\) has established the ownership partition. Competing
tags are detected by \(\mathsf{Own}\), not by \(\mathsf{supp}\).

The target invariant is:

\begin{equation}
 \mathbf A_g+\mathbf T_g+\mathbf R_g+\mathbf Q_g+
 \mathbf C_g+\mathbf F_g=\mathbf B_g.
\label{eq:conservation}
\end{equation}

Tracking \(\mathbf F_g\) matters. If crash or distrust permanently strands a
right, writing only ``consumed \(\le B\)'' hides the availability loss.
Retirement keeps the equation exact and makes the safety--utility tradeoff
auditable.

\subsection{Transition System}

The abstract machine starts at
\[
  \mathcal A_{s_0}=U_g\times\{c_g\},\quad
  \mathbf A_g=\mathbf B_g,\quad
  \mathbf T_g=\mathbf R_g=\mathbf Q_g=
  \mathbf C_g=\mathbf F_g=\zero.
\]
For every ownership-moving action, the guard requires
\(\mathsf{Own}_s(u)=\{\ell_{\mathrm{src}}\}\) for every input atom. Its
authoritative commit removes each source pair and inserts exactly one
destination pair in the same transition. A replay can return the prior
decision but cannot execute the relation update twice. Each transition is
atomic at its stated local authority:

\begin{description}[leftmargin=2em,style=nextline]
  \item[\(\Issue(g,\mathbf B_g,\lambda_g)\).]
  Create the root credit with a fresh logical slot, status \mcode{active}, and
  initial epoch \(\eta_c\); atomically set
  \(\mathsf{AuthoritativeEpoch}(g,\chi_c)=\eta_c\) and
  \(\mathsf{ActiveGeneration}(g,\chi_c)=cid\), and assign every atom in
  \(U_g\) to it.

  \item[\(\Split(c,X_1,\ldots,X_m)\).]
  Require that \(c\) is the current owner generation at its authoritative
  epoch, pairwise-disjoint \(X_i\subseteq X_c\), and label attenuation. Remove
  \(\bigcup_i X_i\) from the parent and create children with pairwise-distinct
  fresh logical slots and generation identifiers. In the same commit, install
  each child as \mcode{active} with matching authoritative-epoch and
  active-generation registers. If the parent retains a nonempty available
  footprint or any outstanding operation/transfer it remains current; only if
  the action consumes its complete responsibility does it close the parent and
  clear that slot's active-generation register. A stable request ID makes replay
  a stutter.

  \item[\(\Delegate(c,\mathit{dst},X,\lambda',xid)\).]
  Atomically move \(X\subseteq X_c\) from \(\mathsf A(c)\) to
  \(\mathsf T(x)\) and initialize \(\delta[xid]=\mcode{pending}\).
  \(\mathsf{AcceptTransfer}\) wins the decision compare-and-swap (CAS) and
  logically moves the
  same atoms to a uniquely named child credit with a fresh logical slot;
  \(\mathsf{CancelTransfer}\) needs an exclusion proof and the competing CAS
  to logically restore them to the source. Acceptance atomically installs the
  child as \mcode{active} with matching authoritative-epoch and
  active-generation registers. Later database-row materialization stutters.
  The source cannot spend sealed atoms.

  \item[\(\Reserve(c,o,X,e)\).]
  Require \(X\subseteq X_c\), current epoch, joint witness, and effect/label
  compatibility. Atomically move \(X\) from \(\mathsf A(c)\) to
  \(\mathsf R(o)\) while persisting operation, obligation, outbox, and audit.

  \item[\(\Dispatch(o)\).]
  Revalidate epoch and witness. Before possible gateway acceptance, move
  \(X_o\) from \(\mathsf R(o)\) to \(\mathsf Q(o)\). A stale, provably
  unexposed reservation is returned only through
  \(\ReturnAct(o,\pi_{\mathrm{local}})\), which installs the absorbing local
  outbox marker.

  \item[\(\mathsf{GatewayAccept}(o)\).]
  If the per-key register already contains
  \(\mathsf{Accepted}[\kappa_o]=\beta_o\), return that decision and stutter
  regardless of the operation's later terminal bucket. A different stored
  binding is rejected. Only when the key is unseen may first acceptance
  linearize: require the matching durable dispatchable marker and \(X_o\)
  wholly in \(\mathsf Q(o)\) or absorbing \(\mathsf F(o)\), then recompute and
  atomically accept the complete immutable digest \(\beta_o\) against the
  receiver's key, current witness, and the fence selected by the authenticated
  slot \((g,\chi_c)\). An unexposed reservation or a new old-epoch key is
  rejected.

  \item[\(\mathsf{EffectOccurs}(o,\iota)\).]
  Require a prior accepted \((\kappa_o,\beta_o)\) binding and
  \(\iota.\beta=\beta_o\); append the fresh instance \(\iota\) to the
  gateway's monotone observation log. Under the one-effect profile, a second
  instance for the same \(\kappa_o\) is disabled; transport attempts may
  duplicate without appending another instance.

  \item[\(\Settle(o,r^+)\).]
  Require \(X_o\) wholly in \(\mathsf Q(o)\), a verified success receipt,
  and \(\iota_{r^+}\in\mathsf{Obs}(s)\). Derive
  \(D_o=\mathsf{Take}_{\nu_{\mathsf{sel}}}
  (X_o,\mathbf d_{r^+})\) and persist the exact committed and remainder
  commitments. Move \(D_o\) to \(\mathsf C(o)\). If
  \(X_o\setminus D_o\ne\varnothing\), require a decision-bound \(c_r\)
  satisfying Equation~\eqref{eq:return-target} and move the remainder there.
  Settlement never widens the source label, changes the root, jumps to an
  ancestor, or resurrects a stale generation.
  If an identical receipt was already applied, stutter; any conflicting
  receipt is rejected.

  \item[\(\ReturnAct(o,\zeta)\).]
  Require exactly one type-matched branch. The local branch has
  \[
    \zeta=\pi_{\mathrm{local}},\qquad
    X_o\text{ wholly in }\mathsf R(o),\qquad
    \Verify_{\mathrm{local}}(\pi_{\mathrm{local}},o,s,s').
  \]
  The external branch has \(\zeta=\pi_{\neg e}\), \(X_o\) wholly in
  \(\mathsf Q(o)\), and an authoritative fenced
  \(\Verify_{\neg e}(\pi_{\neg e},o)\) proof that the exact binding was never
  gateway-accepted and did not and cannot execute. Also require a
  decision-bound \(c_r\) satisfying Equation~\eqref{eq:return-target}. Move
  all \(X_o\) from the branch-specific source to \(c_r\) and mark the
  operation \mcode{verified-aborted}; the local branch atomically installs
  the absorbing outbox marker.

  \item[\(\Retire(o)\).]
  Move an unresolved \(X_o\) from \(\mathsf Q(o)\) to \(\mathsf F(o)\).
  The bucket is absorbing: atoms never become available or committed. Later
  evidence annotates the operation's outcome and actual charge without moving
  ownership, so forfeiture remains a conservative maximum charge.

  \item[\(\Fence(c,\eta')\).]
  For a first installation, require
  \(\Current_s(c,\eta_c)\) and
  \(\eta'>\mathsf{AuthoritativeEpoch}_s(g,\chi_c)=\eta_c\). Atomically advance
  the slot's authoritative epoch, mark \(c\) \mcode{closing}, clear \(cid\) as
  its active generation, and install the same monotone fence at all protected
  gateways before permitting new-generation use. An exact already-installed retry stutters. A
  not-yet-accepted old request is rejected; a request accepted before the
  fence may complete later and remains charged/queryable.

  \item[\(\mathsf{Reissue}(c,\eta,\eta^+,c^+)\).]
  Require Equations~\eqref{eq:closed-generation}, \eqref{eq:reusable}, and
  \eqref{eq:reissue-target}, require
  \(\mathsf{SafeA}_s(c,\eta)=\mathsf{Avail}_s(c,\eta)\), and require the
  write-once reissue decision for
  the normalized predecessor key
  \((g,\chi_c,cid,\eta_c,\mcode{reissue})\) to select
  \((\eta^+,c^+)\), with \(\eta=\eta_c\). In one transaction, persist that
  decision, mark \(c\) \mcode{closed}, update current-generation metadata, and
  move exactly the uniquely available reusable footprint into fresh generation
  \((c^+,\eta^+)\) of the same logical slot. Exact replay
  stutters and a changed target is rejected. Old committed, forfeited,
  transferred, or otherwise obligated atoms do not move.

  \item[\(\RolloverAct(o,\omega,c,\eta,\eta^+,c^+)\).]
  Apply the fused transition in
  Equations~\eqref{eq:rollover-ready}--\eqref{eq:rollover-action} only when
  \(o\) is the final nonterminal operation of the fenced predecessor. In one
  commit, apply its type-matched terminal-outcome witness, move the terminal
  remainder together with the predecessor's already safe available footprint
  into the one normalized reissue target, close the predecessor, and install
  the same-slot successor. This is one abstract action, not an implicit
  relaxation of \(\Settle\) or \(\mathsf{Reissue}\).

  \item[\(\JoinAct(c_1,\ldots,c_m)\).]
  Require Definition~\ref{def:join}, no active operations on the input
  portions, and one atomic authority. Consume inputs and create one output
  over their disjoint union with a fresh logical slot. The same commit closes
  every fully consumed input and clears its active-generation register, then
  installs the output as \mcode{active} with matching authoritative-epoch and
  active-generation registers.
\end{description}

\subsection{Safety Invariants}

For all reachable states, we require:

\begin{enumerate}[label=\textbf{I\arabic*.}]
  \item \textbf{Ownership partition.}
        \(|\mathsf{Own}_s(u)|=1\) for every \(u\in U_g\), including uniqueness
        of the tag inside its raw relation.
  \item \textbf{Non-negative vectors.} All bucket vectors are in \(\NatVec\).
  \item \textbf{Semantic attenuation.} Every descendant label restricts every
        lineage ancestor label applicable to its footprint.
  \item \textbf{Current generation.} A split, delegate, reserve, join, or
        dispatch uses the current owner generation and epoch.
  \item \textbf{Operation binding.} Each \(oid\) derives exactly one canonical
  \(\kappa\). Within its receiver namespace, an accepted \(\kappa\)
        maps to exactly one \(\beta_o\), which immutably binds \(oid\), root,
        logical slot, footprint and lineage commitments, epoch, effect digest, receiver,
        maximum vector, selector schema version, source and operation labels,
        and the joint witness.
  \item \textbf{Settlement uniqueness.} At most one non-conflicting
        authoritative terminal-evidence variant is applied to an operation.
  \item \textbf{Instance injectivity.} The monotone gateway log contains at
        most one effect instance per canonical key, and one operation covers
        at most one instance in the one-effect profile.
  \item \textbf{Dispatch coverage.} Every protected dispatch without a
        terminal verified-no-effect decision has a durable operation whose
        atoms are wholly in \(\mathsf Q\) or \(\mathsf F\), or whose complete
        committed receipt accounts for its actual-charge subset in
        \(\mathsf C\). A full no-effect return is backed either by durable
        never-dispatchable evidence or by authoritative proof that past and
        future occurrence is impossible. A partial-success remainder is
        backed by one complete terminal receipt that proves the exact actual
        charge and disables any further instance or charge under
        \((\kappa_o,\beta_o)\).
  \item \textbf{No stale resurrection.} Advancing an epoch does not make a
        prior generation current again.
\end{enumerate}

\subsection{Effect Instances, Coverage, and External Exposure}

The state variable \(\mathsf{Obs}(s)\) is the gateway's monotone log of
instances appended only by \(\mathsf{EffectOccurs}\). It is a set of
\(\iota\) from Definition~\ref{def:effect-instance}, not a set of effect
descriptors. Thus two separate payments with identical payloads remain two
instances.

The relation \(\Covered(\iota,o,s)\) is non-recursive. It holds exactly when:

\begin{enumerate}
  \item \(\iota.\kappa=\kappa_o\), \(\iota.\beta=\beta_o\),
        \(\iota.h_e=h_{e_o}\), and the gateway and receiver agree; by
        Equation~\eqref{eq:accept-binding}, equality of \(\beta\) also binds
        root, logical slot, lineage, epoch, footprint, labels, maximum vector,
        and witness;
  \item \(o\)'s label and commit-time witness were valid at gateway acceptance;
        and
  \item either \(X_o\) remains wholly in \(\mathsf Q(o)\) or
        \(\mathsf F(o)\), or a complete committed receipt for \(\iota\)
        identifies
        \(D_o=\mathsf{Take}_{\nu_{\mathsf{sel}}}
        (X_o,\mathsf{ActualCharge}(\iota))\) in \(\mathsf C(o)\).
\end{enumerate}

Coverage and instance injectivity are:

\begin{align}
  &\forall \iota\in\mathsf{Obs}(s),\quad
    \exists!\,o\in\Operations:\Covered(\iota,o,s),
  \label{eq:coverage}\\
  &\Covered(\iota_1,o,s)\land\Covered(\iota_2,o,s)
    \Longrightarrow \iota_1=\iota_2.
  \label{eq:injective-coverage}
\end{align}

Let \(\mathsf{Settled}_g(s)\) contain exactly the instances whose unique
covering operation satisfies \(g_o=g\) and whose complete receipt has placed
its actual-charge footprint in \(\mathsf C\). Let
\(\mathsf{Risk}_g(s)=\{o\mid g_o=g\land
X_o\text{ is wholly in }\mathsf Q(o)\text{ or }\mathsf F(o)\}\),
irrespective of any later evidence annotation. The charged footprints
associated with \(\mathsf{Settled}_g\) and the full footprints associated with
\(\mathsf{Risk}_g\) are pairwise disjoint by ownership location. In particular,
a success receipt arriving after forfeiture is recorded for audit but does
not move \(\mathsf F\), does not enter \(\mathsf{Settled}\), and remains
charged at the conservative maximum. The external exposure bound is:

\begin{equation}
 \sum_{\iota\in\mathsf{Settled}_g(s)}
   \mathsf{ActualCharge}(\iota)
 +\sum_{o\in\mathsf{Risk}_g(s)}\mathbf u_o
 \preceq
 \mathbf C_g(s)+\mathbf Q_g(s)+\mathbf F_g(s)
 \preceq \mathbf B_g.
\label{eq:external-bound}
\end{equation}

This is stronger than an internally consistent counter and weaker than a claim
about unmediated physical reality. It additionally depends on the ownership
assumptions used by Theorem~\ref{thm:exact-conservation} and on
Assumptions~\ref{ass:effect}, \ref{ass:gateway}, \ref{ass:charge},
\ref{ass:receipt}, and \ref{ass:mediation}.

\subsection{Descendant Capacity}

For an execution prefix \(\rho\) ending in \(s\), a single-root sub-DAG \(W\),
and a cut separating it from its ancestors, let
\(\mathsf{In}_{\rho}(W)\subseteq U_g\) be the union of exact atom footprints in
authenticated ownership events that crossed from a responsible lineage outside
\(W\) to one inside \(W\) along \(\rho\). This includes cross-cut split,
accepted transfer, return, successor reissue, and join inputs, and is
deduplicated by atom identity on re-entry. Let \(\mathsf{Held}_s(W)\) include
atoms currently available, reserved, indeterminate, committed, or forfeited in
descendants of \(W\). Descendant non-amplification is:

\begin{equation}
  \mathsf{Held}_s(W)\subseteq\mathsf{In}_{\rho}(W),
  \qquad
  \mathsf{vec}(\mathsf{Held}_s(W))
  \preceq\mathsf{vec}(\mathsf{In}_{\rho}(W)).
\label{eq:descendant}
\end{equation}

Transfers out of \(W\) leave the left-hand side. A task dependency edge
without a credit transfer contributes no capacity.

\section{Fault-Tolerant Delegation and Settlement Protocol}
\label{sec:protocol}

\subsection{Design Principles}

The protocol follows six rules:

\begin{enumerate}
  \item \textbf{Move quantities and attenuate labels.} A child receives both
        an exclusive footprint and a narrower-or-equal authorization label.
  \item \textbf{Persist before exposure.} Reservation, obligation, outbox, and
        audit state commit before a protected dispatch can escape.
  \item \textbf{Bind the real effect.} The reservation uses a normalized
        effect digest and current dependency/authority witness, not a tool name
        alone.
  \item \textbf{Treat ambiguity as charged.} Response loss produces
        \mcode{indeterminate}, never automatic refund.
  \item \textbf{Fence before reuse.} Reclaim from an exposed obligation
        requires a gateway-visible fence \emph{and} authoritative evidence that
        the exact binding was never accepted and did not occur; a fence alone
        never releases quarantined rights.
  \item \textbf{Make replay a stutter.} Stable transfer, operation, and
        terminal-evidence IDs return the prior decision without repeating
        state movement.
\end{enumerate}

\subsection{Joint Authorization and Budget Witness}

The prior research stack supplies complementary predicates over one canonical
operation:
\begin{equation}
\begin{split}
\mathsf{ValidWitness}(o)\iff{}&
\mathsf{StaticPolicy}(o)
\land\mathsf{TaskIntent}(o)
\land\mathsf{ProvenanceCeiling}(o)\\
&\land\mathsf{SessionLease}(o)
\land\mathsf{SafeRoute}(o)
\land\mathsf{CurrentClosure}(o)\\
&\land\mathsf{NormalizedEffect}(o)
\land\mathsf{CurrentCredit}(o).
\end{split}
\label{eq:joint-witness}
\end{equation}

OpenPort supplies the server-side action boundary
\cite{zhu2026openport}. Public work on intent-governed authorization and
server-verified action claims supplies concrete examples of the intent and
execution predicates \cite{zhu2026igac,zhu2026ebte}; versioned
dependency-closure binding supplies the current recursive skill closure and
normalized effect \cite{zhu2026closurebound}. The remaining predicates in
Equation~\eqref{eq:joint-witness} are explicit interface assumptions defined
by this paper. They are inputs at the witness boundary, not imported
conservation results: the exclusive distributed-credit and settlement
properties are established here.

\subsection{Credit Certificates}

A transfer certificate contains:

\begin{lstlisting}
{
  "transfer_id": "...",
  "root_grant_id": "...",
  "source_credit_id": "...",
  "source_epoch": 17,
  "destination": "agent://worker-b",
  "resource_vector": {"usd_cents": 4000, "api_calls": 6},
  "lineage_commitment": "...",
  "rights_commitment": "...",
  "authorization_label_digest": "...",
  "expires_at": "...",
  "issuer": "...",
  "signature": "..."
}
\end{lstlisting}

The destination cannot enlarge the vector or label. The certificate identifies
a transfer operation, not a bearer balance that may be imported repeatedly.
Acceptance writes a uniqueness key over
\((\mathit{root},\mathit{transfer\_id})\).

\subsection{Idempotent Delegation}

Algorithm~\ref{alg:delegate} performs a sealed cross-store transfer. A local
split is its single-authority, no-transit specialization. The source
transition is authoritative for removing spendability. Message send occurs
after the durable seal.

\begin{algorithm}[t]
\caption{\textsc{DelegateEscrow}$(c,\mathit{dst},\mathbf q,\lambda',xid)$}
\label{alg:delegate}
\begin{algorithmic}[1]
\Require current credit \(c\), destination, vector, attenuated label, stable transfer ID
\State begin durable source transaction
\If{a decision for \(xid\) exists}
  \State \Return the recorded certificate or rejection
\EndIf
\State require \(\Current_{s_{\mathrm{txn}}}(c,\eta_c)\),
  \(\mathbf q\preceq\mathbf a_c\),
  and \(\lambda'\restrict\lambda_c\)
\State choose footprint \(X\subseteq X_c\) with \(\mathsf{vec}(X)=\mathbf q\)
\State \(X_c\gets X_c\setminus X\)
\State persist sealed transfer \(x=(xid,c,\mathit{dst},X,\lambda',\eta_c)\)
\State initialize \(\delta[xid]=\mcode{pending}\); persist outbox and audit; commit
\State send certificate from outbox, retrying with the same \(xid\)
\State destination verifies signature, epoch, label, and uniqueness
\State destination CASes
  \(\delta[xid]:\mcode{pending}\rightarrow\mcode{accepted}(dst,cid')\)
\If{the decision is \mcode{accepted}}
  \State atomically create or recover the uniquely named child \(cid'\) over \(X\)
  \State source records the terminal decision; it never restores an accepted \(X\)
  \State \Return accepted child certificate
\ElsIf{the decision is \mcode{cancelled}}
  \State reject import; source may restore \(X\) exactly once
  \State source records the terminal decision
  \State \Return durable cancellation or rejection status
\EndIf
\end{algorithmic}
\end{algorithm}

If the destination is unreachable, \(X\) remains in \(\mathbf T_g\). Safe
cancel is a protocol transition requiring destination fencing or a durable
decision that the transfer cannot later be accepted. The implementation may
instead leave it in transit until reconciliation.

\subsection{Reserve Before Dispatch}

Algorithm~\ref{alg:reserve} converts available credit to a durable obligation.
The maximum vector \(\mathbf u\) must be known conservatively. If a provider
can exceed \(\mathbf u\) without rejection, that provider cannot support the
hard-bound profile.

\begin{algorithm}[t]
\caption{\textsc{ReserveAndDispatch}$(c,\mathit{proposal},\mathbf u,oid)$}
\label{alg:reserve}
\begin{algorithmic}[1]
\Require current credit, proposed protected effect, maximum vector, stable
  operation ID, supported selector schema \(\nu_{\mathsf{sel}}\)
\State \(e\gets\Normalize(\mathit{proposal},\mathit{currentDomainState})\)
\State \(h_e\gets H(\mcode{effect}\parallel e)\)
\State compile current joint witness \(\Phi_o\) satisfying Equation~\eqref{eq:joint-witness}
\State require \(\mathsf{RequiredCap}(e,\chi)\preceq\mathbf u\)
\State derive key \(\kappa\gets H(\mcode{op}\parallel issuer\parallel g
  \parallel cid\parallel oid\parallel receiver\parallel h_e)\)
\State begin durable branch transaction
\If{operation \(oid\) exists}
  \State require all immutable bindings equal; \Return recorded state
\EndIf
\State require \(\Current_{s_{\mathrm{txn}}}(c,\eta_c)\),
  \(\mathbf u\preceq\mathbf a_c\),
  \(\Phi_o\) valid, and \(e\) compatible with \(\lambda_c\)
\State choose \(X_o\subseteq X_c\) with \(\mathsf{vec}(X_o)=\mathbf u\)
\State derive \(\beta_o\) from the selected \(X_o\) and every field in
  Equation~\eqref{eq:accept-binding}
\State construct the canonical signed operation envelope containing every
  preimage field of \(\kappa\) and \(\beta_o\), plus both digests
\State move \(X_o:\mathsf A(c)\rightarrow\mathsf R(o)\)
\State atomically persist operation/obligation with \(\beta_o\), the exact
  non-dispatchable envelope in the outbox, and audit; commit
\State begin dispatch transaction; reload the persisted envelope
\State recompute and require its \(\kappa\), \(\beta_o\), signature, and
  immutable operation binding; recheck
  \(\Current_{s_{\mathrm{txn}}}(c,\eta_c)\) and the persisted \(\Phi_o\)
\If{either check is stale while outbox is provably unexposed}
  \State construct the type-tagged candidate \(\pi_{\mathrm{local}}\) from the
    persisted operation and outbox row
  \State bind a decision-fixed target descriptor \(c_{\mathrm{return}}\);
    require this transaction's before/after states satisfy
    Equation~\eqref{eq:return-target}
  \State atomically require
    \(\Verify_{\mathrm{local}}(\pi_{\mathrm{local}},o,s_{\mathrm{txn}},
      s'_{\mathrm{txn}})\), return \(X_o\) to \(c_{\mathrm{return}}\), write
    the canonical terminal-decision/audit records, and mark verified-aborted
  \State commit; \Return the recorded stale-generation rejection
\EndIf
\State construct a branch-signed dispatch permit over \(\beta_o\)
\State atomically move \(X_o:\mathsf R(o)\rightarrow\mathsf Q(o)\), persist
  the permit, and mark the outbox dispatchable; commit before gateway exposure
\State send only the persisted envelope and permit through the fenced gateway
\State before first acceptance, gateway recomputes \(\kappa\) and \(\beta_o\),
  verifies the permit, and revalidates the bound precondition and current
  commit witness; an exact accepted replay returns the stored decision
\State \Return synchronous success receipt if present; otherwise
  \mcode{indeterminate}
\end{algorithmic}
\end{algorithm}

The explicit \(\mathsf R\rightarrow\mathsf Q\) cut is conservative. A crash
after the move but before network send can strand budget until the gateway
proves non-execution. Moving after send would create an uncovered effect
window. Safety chooses possible stranding.

\subsection{Terminal-Evidence Settlement}

Algorithm~\ref{alg:settle} is idempotent and never re-executes the effect.
Success consumes the actual vector and returns only the authenticated
remainder. A no-effect certificate returns the full footprint. Pending or
unknown status is not terminal evidence and leaves it in \(\mathsf Q\).

\begin{algorithm}[t]
\caption{\textsc{SettleOutcome}$(o,\epsilon)$}
\label{alg:settle}
\begin{algorithmic}[1]
\Require durable operation \(o\), candidate terminal evidence
  \(\epsilon\in\{r^+,\pi_{\neg e}\}\)
\State begin durable branch transaction
\If{\(X_o\) is wholly in absorbing \(\mathsf F(o)\)}
  \State require \(\Verify_{\mathrm{term}}(\epsilon,o)\)
  \State append or return the idempotent late-evidence annotation
  \State commit; \Return the unchanged forfeited ownership decision
\EndIf
\If{evidence ID or terminal decision for \(oid_o\) was applied}
  \State require byte-equivalent binding; \Return the prior decision
\EndIf
\State require \(X_o\) is wholly in \(\mathsf Q(o)\)
  \If{\(\epsilon=r^+\)}
  \State require \(\Verify_+(r^+,o)\) and
    \(\iota_{r^+}\in\mathsf{Obs}(s)\)
  \State derive
    \(D_o\gets\mathsf{Take}_{\nu_{\mathsf{sel}}}
      (X_o,\mathbf d_{r^+})\)
    and bind \(H(D_o),H(X_o\setminus D_o)\)
  \If{\(X_o\setminus D_o\ne\varnothing\)}
    \State bind a decision-fixed target descriptor \(c_{\mathrm{return}}\);
      require this transaction's successor state satisfy
      Equation~\eqref{eq:return-target}
  \EndIf
  \State move \(D_o:\mathsf Q(o)\rightarrow\mathsf C(o)\)
  \If{\(X_o\setminus D_o\ne\varnothing\)}
    \State move \(X_o\setminus D_o:\mathsf Q(o)
      \rightarrow\mathsf A(c_{\mathrm{return}})\)
  \EndIf
  \State mark \(o\) committed and persist \(r^+\)/audit atomically
\Else
  \State require \(\epsilon=\pi_{\neg e}\) and
    \(\Verify_{\neg e}(\pi_{\neg e},o)\)
  \State bind a decision-fixed target descriptor \(c_{\mathrm{return}}\);
    require this transaction's successor state satisfy
    Equation~\eqref{eq:return-target}
  \State move \(X_o:\mathsf Q(o)\rightarrow\mathsf A(c_{\mathrm{return}})\)
  \State mark \(o\) verified-aborted and persist
    \(\pi_{\neg e}\)/audit atomically
\EndIf
\State commit; \Return durable decision
\end{algorithmic}
\end{algorithm}

\subsection{Recovery, Reconciliation, and Forfeiture}

Algorithm~\ref{alg:recover} scans operations in \mcode{reserved} or
\mcode{indeterminate}.
Reserved operations whose outbox was never made dispatchable can be aborted
inside the branch transaction. Once dispatchability was committed, the
reconciler queries the gateway by \(\kappa\). It applies an authenticated
success receipt, installs a fence and obtains a no-effect proof, or retains
uncertainty.

\begin{algorithm}[t]
\caption{\textsc{RecoverAndReconcile}$(o)$}
\label{alg:recover}
\begin{algorithmic}[1]
\Require durable nonterminal operation \(o\)
\If{\(o.phase=\mcode{reserved}\) and durable outbox proves never dispatchable}
  \State construct the type-tagged candidate \(\pi_{\mathrm{local}}\)
  \State bind a decision-fixed target descriptor \(c_{\mathrm{return}}\);
    require the atomic transition's before/after states satisfy
    Equation~\eqref{eq:return-target}
  \State atomically require
    \(\Verify_{\mathrm{local}}(\pi_{\mathrm{local}},o,s,s')\), return
    \(X_o\) to \(c_{\mathrm{return}}\), and mark verified-aborted
  \State \Return
\EndIf
\State query gateway by the exact binding
  \((g_o,\chi_o,\kappa_o,\beta_o,\eta_o)\)
\If{authenticated success receipt \(r^+\) is returned}
  \State \(\Call{SettleOutcome}{o,r^+}\)
\ElsIf{gateway atomically fences the old binding/epoch and returns
  \(\pi_{\neg e}\)}
  \State \(\Call{SettleOutcome}{o,\pi_{\neg e}}\)
\ElsIf{administrator chooses permanent safety over utilization}
  \State move \(X_o:\mathsf Q(o)\rightarrow\mathsf F(o)\); preserve obligation
\Else
  \State retain \mcode{indeterminate}; schedule bounded-backoff reconciliation
\EndIf
\end{algorithmic}
\end{algorithm}

Forfeiture is not a cleanup trick. It records that the system cannot safely
reuse the capacity. Later evidence may annotate a committed outcome and actual
charge, but \(\mathsf F\) remains the conservative maximum and no ownership
bucket changes.

\subsection{Epoch Fencing and Safe Reclaim}

A controller may advance a credit epoch after a worker is lost. The new epoch
does not itself reclaim outstanding rights. We first close the old generation:

For the remainder of this section,
\(\mathsf{Ops}(c,\eta)\) and \(\mathsf{Transfers}(c,\eta)\) are monotone
historical index sets: they contain every durably created operation or transfer
whose immutable source binding names credit \(c\) and authenticated epoch
\(\eta\), including terminal records. They are not filtered ``live'' views.
\(\mathsf{SettlementAuthority}(o)\) is the immutable branch authority bound at
\(\Reserve\) that may apply \(o\)'s terminal ownership decision.
\(\mathsf{FenceInstalled}_s(g,\chi,\eta^+)\) means every gateway protecting
that root/slot durably enforces a rejection floor of at least \(\eta^+\).
Finally, let
\[
 K_c=(g,\chi_c,cid,\eta_c,\mcode{reissue})
\]
and let \(\mathsf{Decision}_s[K_c]\) and
\(\mathsf{TermDecision}_s[oid]\) be partial write-once registers, with
\(\bot\) denoting an unassigned value.

\begin{equation}
\begin{split}
\mathsf{Closed}_s(c,\eta,\eta^+)\iff{}&
\eta=\eta_c
\land\eta^+=\mathsf{AuthoritativeEpoch}_s(g,\chi_c)
\land\eta^+>\eta\\
&\land\sigma_c\in\{\mcode{closing},\mcode{closed}\}
\land\mathsf{FenceInstalled}_s(g,\chi_c,\eta^+)\\
&\land\ \forall x\in\mathsf{Transfers}(c,\eta):
  \delta[x]\in\{\mcode{accepted},\mcode{cancelled}\}\\
&\land\ \forall o\in\mathsf{Ops}(c,\eta):
  \sigma_o\in\{\mcode{committed},
  \mcode{verified-aborted},\mcode{forfeited}\}.
\end{split}
\label{eq:closed-generation}
\end{equation}

Let \(\mathsf{SafeA}_s(c,\eta)\) be the atoms for which the reclaim authority
holds a durable proof that
\(\mathsf{Own}_s(u)=\{\mathsf A(c)\}\), together with retained provenance:
the atom was idle, returned by a verified abort or authenticated remainder,
or restored after a \mcode{cancelled} transfer. This raw-ownership condition
does not require the fenced old generation itself to remain
\(\Current_s\). An
\mcode{accepted} transfer, a committed atom, and a forfeited atom are excluded.
Let
\[
 \mathsf{Avail}_s(c,\eta)=
 \{u\in U_g\mid \eta=\eta_c
   \land\mathsf{Own}_s(u)=\{\mathsf A(c)\}\}.
\]
A successor-installing action is enabled only when
\(\mathsf{SafeA}_s(c,\eta)=\mathsf{Avail}_s(c,\eta)\); otherwise the predecessor
remains \mcode{closing} and the unproved capacity is stranded, not silently
reclassified as reusable. The exact reusable footprint is therefore

\begin{equation}
 \mathsf{Reusable}_s(c,\eta,\eta^+)=
 \begin{cases}
   \mathsf{SafeA}_s(c,\eta),&
     \mathsf{Closed}_s(c,\eta,\eta^+)
     \land\mathsf{SafeA}_s(c,\eta)=\mathsf{Avail}_s(c,\eta),\\
   \varnothing,&\text{otherwise}.
 \end{cases}
\label{eq:reusable}
\end{equation}

Let \(\mathsf{FreshAuthenticated}_s(c^+)\) mean that no credit with the signed
identity of \(c^+\) exists in \(s\) and that its credit-authority
authentication verifies. Then
\begin{equation}
\begin{split}
\mathsf{ReissueTarget}&(c,\eta,\eta^+,c^+,s,s')\\
\iff{}&
  \mathsf{FreshAuthenticated}_s(c^+)
  \land\eta=\eta_c\\
 &\land\eta^+=\mathsf{AuthoritativeEpoch}_s(g,\chi_c)
  \land g_{c^+}=g_c\\
 &\land\chi_{c^+}=\chi_c
  \land P_{c^+}=\{c\}\\
 &\land\mathsf{Authority}(c^+)=\mathsf{Authority}(c)
  \land\eta_{c^+}=\eta^+
  \land\lambda_{c^+}\restrict\lambda_c\\
 &\land\Current_{s'}(c^+,\eta^+)
  \land\neg\Current_{s'}(c,\eta).
\end{split}
\label{eq:reissue-target}
\end{equation}

The reference SQLite profile fuses a final terminal decision with successor
installation. To give that concrete transaction one exact abstract image,
define \(\mathsf{ValidTerminal}_s(o,\omega)\) to mean exactly one of:
(i) \(\omega=r^+\), \(o\) is quarantined with all \(X_o\) in
\(\mathsf Q(o)\), \(\Verify_+(r^+,o)\), and the receipt's
\(\iota_{r^+}\in\mathsf{Obs}(s)\);
(ii) \(\omega=\pi_{\neg e}\), \(o\) is quarantined with all \(X_o\) in
\(\mathsf Q(o)\), \(\Verify_{\neg e}(\pi_{\neg e},o)\), and the verified
certificate binds the installed higher fence for
\((g_o,\chi_o)\) and proves that the exact binding was never accepted and
cannot execute; or
(iii) \(\omega=\pi_{\mathrm{local}}\), \(o\) is reserved with all \(X_o\) in
\(\mathsf R(o)\), and
\(\mathsf{LocalNoDispatchPre}_s(\pi_{\mathrm{local}},o)\). Let the terminal
outcome determine two disjoint footprints:
\[
\begin{array}{c|cc}
\omega & \mathsf{CommitFoot}(o,\omega) &
            \mathsf{ReturnFoot}(o,\omega)\\
\hline
r^+ & D_o & X_o\setminus D_o\\
\pi_{\neg e}\text{ or }\pi_{\mathrm{local}} &
\varnothing & X_o .
\end{array}
\]
Define
\[
\mathsf{TermStatus}(\omega)=
\begin{cases}
\mcode{committed},&\omega=r^+,\\
\mcode{verified-aborted},&
  \omega\in\{\pi_{\neg e},\pi_{\mathrm{local}}\}.
\end{cases}
\]
The terminal record stores this tag, the evidence and acceptance-binding
digests, and the exact committed and return footprint commitments:
\[
\begin{aligned}
\mathsf{TermRecord}(o,\omega)=\langle&
 \mathsf{TermStatus}(\omega),H(\omega),H(\beta_o),\\
 &H(\mathsf{CommitFoot}(o,\omega)),
 H(\mathsf{ReturnFoot}(o,\omega))\rangle .
\end{aligned}
\]
Define
\begin{equation}
\begin{split}
\mathsf{RolloverReady}_s&(c,\eta,\eta^+,o,\omega)\\
\iff{}&
  \eta=\eta_c
  \land\eta^+>\eta\\
 &\land\eta^+=\mathsf{AuthoritativeEpoch}_s(g,\chi_c)\\
 &\land\mathsf{FenceInstalled}_s(g,\chi_c,\eta^+)
  \land\sigma_c=\mcode{closing}
  \\
 &\land\mathsf{ValidTerminal}_s(o,\omega)
  \land o\in\mathsf{Ops}(c,\eta)\\
 &\land\mathsf{Authority}(c)=\mathsf{SettlementAuthority}(o)\\
 &\land\mathsf{TermDecision}_s[oid]=\bot
  \land\mathsf{Decision}_s[K_c]=\bot\\
 &\land\mathsf{SafeA}_s(c,\eta)=\mathsf{Avail}_s(c,\eta)\\
 &\land\forall x\in\mathsf{Transfers}(c,\eta):
     \delta[x]\in\{\mcode{accepted},\mcode{cancelled}\}\\
 &\land\forall o'\in\mathsf{Ops}(c,\eta)\setminus\{o\}:\\
 &\qquad
  \sigma_{o'}\in\{\mcode{committed},
  \mcode{verified-aborted},\mcode{forfeited}\}.
\end{split}
\label{eq:rollover-ready}
\end{equation}
Then the fused action is defined, not merely implemented by convention:
\begin{equation}
\begin{split}
&\mathsf{TerminalRollover}(o,\omega,c,\eta,\eta^+,c^+,s,s')
\iff\\
&\quad \mathsf{RolloverReady}_s(c,\eta,\eta^+,o,\omega)
\land\mathsf{ReissueTarget}(c,\eta,\eta^+,c^+,s,s')\\
&\quad \land\mathsf{Decision}_{s'}[K_c]=(\eta^+,c^+)\\
&\quad \land\mathsf{TermDecision}_{s'}[oid]
     =\mathsf{TermRecord}(o,\omega)\\
&\quad \land\sigma'_o=\mathsf{TermStatus}(\omega)
  \land\sigma'_c=\mcode{closed}\\
&\quad \land
 \bigl(\omega=\pi_{\mathrm{local}}\Rightarrow
   \Verify_{\mathrm{local}}(\pi_{\mathrm{local}},o,s,s')\bigr)\\
&\quad \land\
 \begin{cases}
 \mathsf{SafeA}_s(c,\eta):
      \mathsf A(c)\rightarrow\mathsf A(c^+),\\
 \mathsf{CommitFoot}(o,\omega):
      \mathsf Q(o)\rightarrow\mathsf C(o),\\
 \mathsf{ReturnFoot}(o,\omega):
      \mathsf R(o)\text{ or }\mathsf Q(o)\rightarrow\mathsf A(c^+),
 \end{cases}\\
&\quad \land\ \mathsf{Closed}_{s'}(c,\eta,\eta^+),
\\[-1mm]
&\quad \land\mathsf{Frame}_{o,c,c^+}(s,s').
\end{split}
\label{eq:rollover-action}
\end{equation}
\(\mathsf{Frame}_{o,c,c^+}(s,s')\) keeps every state component unchanged except
the three displayed ownership moves, \(o\)'s terminal record/status, \(c\)'s
closure, creation/current installation of \(c^+\), and the two displayed
write-once decisions. Every branch sets the named outbox's outcome field
exactly to the value fixed by \(\mathsf{TermRecord}\) and appends exactly its
content-addressed audit node. The \(\pi_{\mathrm{local}}\) branch, and no other
branch, additionally replaces the outbox dispatch gate by its absorbing
\mcode{local-abort} marker.
The three moved sets are pairwise disjoint:
\(\mathsf{SafeA}_s(c,\eta)\) is available before the action, whereas
\(\mathsf{CommitFoot}\uplus\mathsf{ReturnFoot}=X_o\) is reserved or
quarantined. The commit also persists the canonical \(D_o\)/remainder
commitments when \(\omega=r^+\), marks \(o\) terminal, closes \(c\), and
installs \(c^+\) as the active generation for the same slot. Exact replay
stutters; a changed witness, partition, epoch, or successor is rejected.
The authority conjunct in Equation~\eqref{eq:rollover-ready}, together with
\(\mathsf{ReissueTarget}\), supplies the settlement-authority clause of
\(\mathsf{ReturnTarget}\) for every returned atom without defining either
predicate circularly.

Reuse is a new ownership transition: under the state-transition predicate in
Equation~\eqref{eq:reissue-target}, it atomically moves exactly
\(\mathsf{Reusable}_s(c,\eta,\eta^+)\) from its proved available records into a fresh
same-root, same-authority, non-widening lineage successor
\((c^+,\eta^+)\). It never changes an old
\(\mathsf C\) or \(\mathsf F\) record and never reuses an accepted transfer.
The gateway rejects a new old-epoch acceptance after the fence while
idempotently revealing an outcome that crossed before it. An epoch check only
at the budget store is insufficient if a stale worker holds direct receiver
credentials.

\subsection{Fan-In and Return}

A task join has three safe choices:

\begin{enumerate}
  \item retain separate branch credits and select one for each new effect;
  \item atomically join disjoint, operation-free footprints under one
        authoritative store; or
  \item transfer each branch's remainder through sealed handoffs to a join
        store, then join after acknowledgment.
\end{enumerate}

It may not add reported numeric balances from branches. Each return uses a
stable transfer ID and consumes the child footprint. Duplicate completion or
return messages stutter. A canceled branch with an indeterminate operation
can return only its unaffected available footprint; the obligation remains
charged to the root.

\subsection{Protocol Adapters}

An adapter maps concrete events to abstract transitions:
\[
\begin{aligned}
 \alpha:\mathsf{ConcreteEvent}\rightarrow
 \{&\Issue,\Split,\Delegate,\mathsf{AcceptTransfer},
      \mathsf{CancelTransfer},\\
   &\Reserve,\Dispatch,\mathsf{GatewayAccept},\mathsf{EffectOccurs},\\
   &\Settle,\ReturnAct,\Retire,\Fence,\mathsf{Reissue},\RolloverAct,
     \JoinAct,\mathsf{stutter}\}.
\end{aligned}
\]
For MCP, the operation binding travels with the tool call or trusted proxy
context. For A2A, a task delegation carries the child credit certificate and
the completion record carries the stable operation/transfer IDs. For HTTP, the
gateway maps \(\kappa\) to an idempotency or conditional-write primitive.
Protocol authentication can use AIP/AAT or conventional workload identity;
the budget protocol does not define a new bearer-token format.

An adapter is conforming only if it preserves root, logical slot, credit
generation, lineage and footprint commitments, vector and maximum charge, selector schema
version, label digests, epoch, canonical operation key, complete acceptance
digest \(\beta_o\), normalized effect digest, receiver, joint witness, terminal-
evidence binding, and terminal committed/remainder footprint commitments.
Unknown fields that affect effect semantics fail closed. A textual Agent
message never substitutes for a trusted adapter event.

\subsection{Stable Decisions and Reason Codes}

Every rejection or uncertainty is explicit:

\begin{table*}[t]
\centering
\small
\begin{tabularx}{\textwidth}{L{0.25\textwidth}YY}
\toprule
\textbf{Reason} & \textbf{Meaning} & \textbf{Required response} \\
\midrule
\mcode{INSUFFICIENT\_}\allowbreak\mcode{LOCAL\_ESCROW} &
Branch lacks exclusive rights &
Coordinate transfer or refuse; never read a stale global total \\
\mcode{LINEAGE\_ALIAS} & Join inputs overlap & Reject and audit \\
\mcode{STALE\_CREDIT\_EPOCH} & Message targets an old generation &
Return prior outcome or reject; do not execute \\
\mcode{OPERATION\_}\allowbreak\mcode{BINDING\_MISMATCH} &
Replay changed effect/vector/receiver &
Reject as conflicting replay \\
\mcode{TERMINAL\_EVIDENCE\_}\allowbreak\mcode{BINDING\_MISMATCH} &
Success receipt or no-effect certificate does not prove this operation &
Retain reservation and investigate \\
\mcode{EFFECT\_INDETERMINATE} & Dispatch may have crossed boundary &
Keep quarantined; query or forfeit \\
\mcode{NO\_SAFE\_RECLAIM\_PROOF} & No authoritative non-acceptance/no-effect
proof under the installed fence &
Retain \(\mathsf Q\) or \(\mathsf F\); settle, or fence and query for proof \\
\mcode{ADAPTER\_}\allowbreak\mcode{REFINEMENT\_FAILURE} &
Concrete event lacks abstract binding &
Fail closed at boundary \\
\bottomrule
\end{tabularx}
\caption{Stable decisions make conservative failure observable. None of the
uncertain cases is silently converted into available budget.}
\label{tab:reasons}
\end{table*}

\section{Safety Results: Conservation and Recovery}
\label{sec:safety}

This section proves safety for the abstract machine in
Section~\ref{sec:formal} and states the additional obligations needed to lift
the results to a concrete implementation.  The proofs are prefix properties:
they hold after every transition, including prefixes that end at a crash,
partition, lost response, or permanently indeterminate effect.  They do not
assume eventual message delivery.

\subsection{Proof Setup}

Write \(s\xrightarrow{a}s'\) when abstract action \(a\) takes state \(s\) to
\(s'\), and let \(\mathsf{Reach}\) be the least set containing the initial
state and closed under the transition relation.  For a root \(g\), resource
dimension \(k\), and atom \(u\in U_g\), define
\[
  \chi_g(u)_k =
  \begin{cases}
    1,&u=(g,k,j)\text{ for some }j,\\
    0,&\text{otherwise}.
  \end{cases}
\]
For a location class \(Z\in\{\mathsf A,\mathsf T,\mathsf R,\mathsf Q,
\mathsf C,\mathsf F\}\), write \(\mathcal Z_s\) for its raw tagged
ownership relation and define
\[
  \mathbf Z_g(s)=
  \sum_{u\in\mathsf{supp}(\mathcal Z_s)}\chi_g(u).
\]
This definition does not presume that the six relations are disjoint or
total. Distinct competing tags within one relation or a cross-relation
overlap remain visible in the candidate multiset \(\mathsf{Own}_s(u)\) from
Section~\ref{sec:formal}; byte-identical storage replicas abstract to one
logical relation member.

The transition relation treats each stable operation identifier as a
single-assignment decision key.  Repeating an already accepted request with
identical immutable fields is a stutter step; repeating it with different
fields is rejected and is also a stutter at the abstract level.  A crash
before an atomic durable commit is a stutter.  A crash after that commit
observes the committed successor state.  These conventions make crash cuts
explicit without assuming that a process executes a multi-record update
exactly once.

\begin{definition}[Atom-preserving transition]
\label{def:atom-preserving}
A transition \(s\xrightarrow{a}s'\) is atom preserving for root \(g\) if
there is a set \(Y\subseteq U_g\) such that
\[
 \forall u\notin Y:\mathsf{Own}_{s'}(u)=\mathsf{Own}_s(u),
\]
and, for each \(u\in Y\), the action guard requires
\(\mathsf{Own}_s(u)=\{\ell_{\mathrm{src}}\}\) and the authoritative commit
atomically removes that source pair and inserts exactly one permitted
destination pair. It neither duplicates nor deletes an input atom.
\end{definition}

Every non-issuance action in the abstract machine is atom preserving.
\(\Split\), \(\JoinAct\), ordinary successor-generation issuance, and
\(\RolloverAct\) change the credit tag inside \(\mathcal A\);
\(\Delegate\), acceptance, and cancellation move atoms
\(\mathsf A\rightarrow\mathsf T\rightarrow\mathsf A\);
\(\Reserve\), \(\Dispatch\), \(\Settle\), \(\ReturnAct\), and \(\Retire\) move
atoms among the six bucket classes. \(\RolloverAct\) additionally applies one
terminal partition while installing the successor, but its moved footprints
are disjoint. \(\Fence\) changes an epoch but not an ownership relation. Later
evidence for a forfeited operation annotates its outcome while
\(\mathsf F(o)\) remains absorbing.

\subsection{Unique Ownership and Exact Conservation}

\begin{lemma}[Ownership-partition preservation]
\label{lem:ownership-partition}
Under Assumptions~\ref{ass:auth}--\ref{ass:escrow}, every reachable state
\(s\in\mathsf{Reach}\) has exactly one raw ownership tag for each
\(u\in U_g\):
\[
  \forall g\;\forall u\in U_g,\qquad
  |\mathsf{Own}_s(u)|=1 .
\]
\end{lemma}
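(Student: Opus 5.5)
The proof is an induction over \(\mathsf{Reach}\), with the invariant \(|\mathsf{Own}_s(u)|=1\) for every \(u\in U_g\) and every root \(g\) issued so far. Roots not yet issued have empty relations over \(U_g\); it is cleanest to treat them as outside the quantifier until their \(\Issue\) step. For the base case, the initial state after \(\Issue(g,\mathbf B_g,\lambda_g)\) has \(\mathcal A_{s_0}=U_g\times\{c_g\}\) and all other relations empty. Each atom therefore has exactly the singleton \(\{\mathsf A(c_g)\}\). Assumption~\ref{ass:escrow} (each root generation is issued once) rules out a second \(\Issue\) for the same \(g\), which would otherwise add a duplicate \(\mathsf A\) tag. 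Assumption~\ref{ass:auth} (domain separation) guarantees that atoms of distinct roots never collide, so issuing a fresh root leaves every other \(U_{g'}\) untouched.

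For the inductive step, fix a transition \(s\xrightarrow{a}s'\) with \(s\) satisfying the invariant. I would use the observation after Definition~\ref{def:atom-preserving} that every non-issuance action is atom preserving. For \(u\notin Y\), the singleton is inherited unchanged. For \(u\in Y\), the guard requires \(\mathsf{Own}_s(u)=\{\ell_{\mathrm{src}}\}\), and the commit removes that pair and inserts exactly one destination pair, giving \(\mathsf{Own}_{s'}(u)=\{\ell_{\mathrm{dst}}\}\). Assumption~\ref{ass:durable} is what makes the removal and insertion one atomic step. Under it, a crash cut is either a stutter or the full successor, never a state with zero or two tags.

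Atom preservation still has to be checked per action, and I would organise this as a case table. \(\Split\) and \(\JoinAct\) need the chosen footprints to be pairwise disjoint. This is the guard in \(\Split\) and Definition~\ref{def:join}, and it ensures that each moved atom receives exactly one child tag rather than one per output. \(\Delegate\), \(\mathsf{AcceptTransfer}\) and \(\mathsf{CancelTransfer}\) rely on the monotone decision register \(\delta[xid]\) of Assumption~\ref{ass:escrow}: only one of accepted or cancelled ever wins, so a transit atom is never moved to both destination and source. \(\Reserve\), \(\Dispatch\), \(\Settle\), \(\ReturnAct\) and \(\Retire\) each move a footprint between two buckets under a "wholly in" guard. \(\Settle\) splits \(X_o\) into the disjoint sets \(D_o\) and \(X_o\setminus D_o\). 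For \(\mathsf{Reissue}\) and \(\RolloverAct\), uniqueness of the write-once decision \(K_c\) selects a single successor \(c^+\). Disjointness of \(\mathsf{SafeA}_s(c,\eta)\) from \(\mathsf{CommitFoot}\uplus\mathsf{ReturnFoot}=X_o\) then follows from the inductive hypothesis, since the first set is in \(\mathsf A\) and the second in \(\mathsf R\) or \(\mathsf Q\). \(\Fence\), \(\mathsf{GatewayAccept}\), \(\mathsf{EffectOccurs}\) and late-evidence annotation touch no ownership relation. Replays are stutters by the single-assignment convention on decision keys, so no replayed commit can apply a move twice.

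The main obstacle is not any single case. It is ruling out an interleaving in which two independently enabled actions at different authorities both read the same source pair for an atom, for example two branches reserving against what appears to be the same credit. The plan is to argue that each atom's unique tag names exactly one authoritative store. That store is the owner of \(c\) for \(\mathsf A(c)\), the decision register for \(\mathsf T(x)\), and \(\mathsf{SettlementAuthority}(o)\) for \(\mathsf R/\mathsf Q/\mathsf C/\mathsf F(o)\). Every action that moves \(u\) is linearized at that one store (Assumption~\ref{ass:escrow}), so conflicting moves of the same atom are totally ordered. After the first move commits, the second move's guard \(\mathsf{Own}_s(u)=\{\ell_{\mathrm{src}}\}\) fails. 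Making this precise requires strengthening the inductive invariant with an auxiliary clause, "the authority responsible for \(\mathsf{loc}_s(u)\) is unique," and checking that every action preserves it as well. The hardest instance is \(\mathsf{AcceptTransfer}\), where authority passes across stores and must move through the decision register rather than through either row.
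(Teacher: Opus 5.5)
Your proposal is correct and follows the paper's proof essentially step for step. It uses induction over prefixes with the base case from \(\Issue\) and unique root issuance, and treats rejected, duplicate, and pre-commit-crash steps as stutters. It invokes Assumption~\ref{ass:durable} for the atomic remove-and-insert and the decision keys and transfer register of Assumption~\ref{ass:escrow} against second moves. It also makes the same per-action atom-preservation check: disjoint \(\Split\)/\(\JoinAct\) footprints, the \(D_o\) versus \(X_o\setminus D_o\) partition, and \(\mathsf{SafeA}\) disjoint from \(X_o\) in \(\RolloverAct\).

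The cross-authority interleaving you call the main obstacle does not arise in this abstract lemma. \(\mathsf{Reach}\) is generated by single global steps whose singleton-source guard is evaluated in the pre-state, so once one move of \(u\) commits, every competing move is disabled. The paper therefore needs no auxiliary authority-uniqueness invariant; that concern belongs to the concrete refinement obligations, not to this lemma.
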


\begin{proof}
We induct on the length of the execution prefix.  At issuance, authenticated
root uniqueness gives one \(U_g\), and \(\Issue\) assigns every atom in it to
the root credit.  Thus the property holds initially.

Assume it holds in \(s\).  Consider \(s\xrightarrow{a}s'\).  A rejected,
duplicate, or pre-commit-crash action stutters, so the induction hypothesis
applies directly.  For a committed action, Assumption~\ref{ass:durable}
excludes a visible state containing only a proper subset of its local moves,
while the protocol guard checks that every input has a singleton source tag.
Stable local decision keys and the transfer decision register in
Assumption~\ref{ass:escrow} prevent replay or competing terminal decisions
from applying a second move. Each transition is atom preserving:

\begin{itemize}
  \item \(\Split\) removes pairwise-disjoint \(X_i\) from one parent before
        assigning each \(X_i\) to one child.  No atom is in two \(X_i\).
  \item A delegation seal removes each source pair and inserts one
        \(\mathsf T(x)\) pair. Acceptance and cancellation compete on the
        monotone register in Equation~\eqref{eq:transfer-decision}. An accepted
        decision completes the uniquely named destination import; a cancelled
        decision restores the source. Neither terminal value enables the
        other, and a \mcode{pending} transfer remains solely in \(\mathsf T\).
  \item \(\Reserve\) and \(\Dispatch\) respectively relabel
        \(\mathsf A(c)\) as \(\mathsf R(o)\) and \(\mathsf R(o)\) as
        \(\mathsf Q(o)\).
  \item Successful \(\Settle\) partitions \(X_o\) into the disjoint,
        version-determined footprints
        \(D_o=\mathsf{Take}_{\nu_{\mathsf{sel}}}
        (X_o,\mathbf d_{r^+})\) and \(X_o\setminus D_o\), assigning the former to
        \(\mathsf C(o)\) and the latter, when nonempty, to the one
        decision-bound available credit satisfying
        Equation~\eqref{eq:return-target}.
        Verified abort and retirement each select exactly one successor for
        every input atom; outcome annotation after retirement changes no
        ownership relation.
  \item \(\RolloverAct\) partitions the final operation into the disjoint
        \(\mathsf{CommitFoot}(o,\omega)\) and
        \(\mathsf{ReturnFoot}(o,\omega)\).  These are disjoint from
        \(\mathsf{SafeA}_s(c,\eta)\), whose atoms are available before the
        action.  Equation~\eqref{eq:rollover-action} sends every atom in the
        three sets to exactly one of \(\mathsf C(o)\) or
        \(\mathsf A(c^+)\) in the same authoritative commit.
  \item An admissible \(\JoinAct\) consumes all pairwise-disjoint input
        footprints in one authoritative action, then assigns their union to
        one output credit.
  \item \(\Fence\), gateway acceptance, effect occurrence, and evidence
        annotation do not change an ownership relation. Successor-generation
        issuance moves only the singleton available atoms in
        \(\mathsf{Reusable}_s(c,\eta,\eta^+)\) to one fresh available tag
        satisfying
        Equation~\eqref{eq:reissue-target}.
\end{itemize}

Hence each changed atom loses exactly one source tag and gains exactly one
destination tag in the same abstract commit, while all other candidate
multisets are unchanged. The property therefore holds in \(s'\). Only now is
\(\mathsf{loc}_s(u)\) well-defined as the unique element of
\(\mathsf{Own}_s(u)\); uniqueness was not supplied by a function type.
\end{proof}

\begin{theorem}[Exact per-root vector conservation]
\label{thm:exact-conservation}
Under Assumptions~\ref{ass:auth}--\ref{ass:escrow}, every reachable state and
every root grant satisfy, componentwise,
\[
 \mathbf A_g(s)+\mathbf T_g(s)+\mathbf R_g(s)+\mathbf Q_g(s)
 +\mathbf C_g(s)+\mathbf F_g(s)=\mathbf B_g.
\]
All six vectors remain in \(\NatVec\).
\end{theorem}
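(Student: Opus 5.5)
The plan is to derive the theorem directly from Lemma~\ref{lem:ownership-partition} by a counting argument, rather than by a fresh induction over transitions. Fix a reachable state \(s\), a root \(g\), and a dimension \(k\). By the lemma, every \(u\in U_g\) has exactly one candidate in \(\mathsf{Own}_s(u)\). The first step is to turn this into a statement about supports. Since each \(\mathsf{Own}_s(u)\) is the multiset union over the six tagged relations, \(|\mathsf{Own}_s(u)|=1\) forces two things. First, \(u\) lies in the support of exactly one of \(\mathcal A_s,\mathcal T_s,\mathcal R_s,\mathcal Q_s,\mathcal C_s,\mathcal F_s\). Second, it carries exactly one tag within that relation. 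Hence the six sets \(\mathsf{supp}(\mathcal Z_s)\cap U_g\) are pairwise disjoint and their union is all of \(U_g\).

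The second step is the summation. Using the proof-setup definition \(\mathbf Z_g(s)=\sum_{u\in\mathsf{supp}(\mathcal Z_s)}\chi_g(u)\), summing over the six classes gives \(\sum_Z\mathbf Z_g(s)=\sum_{u}\sum_Z[u\in\mathsf{supp}(\mathcal Z_s)]\,\chi_g(u)\). By the partition, the inner indicator sum is \(1\) for each \(u\in U_g\). It vanishes for atoms of other roots, because \(\chi_g\) is zero there. This yields \(\sum_{u\in U_g}\chi_g(u)\). By Definition~\ref{def:universe}, the \(k\)-th component of this sum counts \(\{(g,k,j)\mid 1\le j\le B_{g,k}\}\), which equals \(B_{g,k}\).

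The third step is the ledger totals and non-negativity. Non-negativity is immediate, since each \(\mathbf Z_g(s)\) is a sum of \(0/1\) indicator vectors and so lies in \(\NatVec\). No partial subtraction \(\dot-\) ever occurs in defining the bucket vectors. I would also remark that an atom of \(U_g\) can only sit in a relation tag naming a same-root credit, transfer, or operation. Cross-root mixing would not affect the count anyway, because \(\chi_g\) filters by root and the universes \(U_g\) are disjoint via the root component of each atom.

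The main obstacle is the support-versus-multiplicity subtlety. \(\mathsf{supp}\) deliberately collapses multiple tags within one relation, so disjointness and totality of the supports must be derived from the lemma's stronger ``one tag including uniqueness inside its raw relation'' clause, and not from the bucket vectors themselves. I would spell out explicitly that \(|\mathsf{Own}_s(u)|=1\) excludes both cross-relation overlap, which would double count in the sum, and omission, which would undercount. I would also check that the initial state \(\mathcal A_{s_0}=U_g\times\{c_g\}\) agrees with the issuance base case, and that the lemma's hypotheses coincide with Assumptions~\ref{ass:auth}--\ref{ass:escrow}, so no further assumptions are needed.
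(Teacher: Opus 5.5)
Your proposal is correct and follows the paper's own route. It invokes Lemma~\ref{lem:ownership-partition} so that each atom of \(U_g\) contributes one indicator to exactly one of the six buckets, sums over the \(B_{g,k}\) atoms of each dimension, and reads non-negativity off the fact that each coordinate is a cardinality. Your extra care about supports is sound, but the count needs only cross-relation exclusivity and totality; within-relation tag uniqueness is not required, because \(\mathsf{supp}\) already collapses any within-relation multiplicity.
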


\begin{proof}
Fix \(g\) and \(k\).  By Lemma~\ref{lem:ownership-partition}, every atom
\((g,k,j)\in U_g\) contributes exactly one to exactly one of the six bucket
coordinates.  There are \(B_{g,k}\) such atoms by
Definition~\ref{def:universe}.  Summing their indicators gives
\[
 A_{g,k}+T_{g,k}+R_{g,k}+Q_{g,k}+C_{g,k}+F_{g,k}=B_{g,k}.
\]
The argument holds independently for every \(k\).  Each coordinate is a
cardinality and is therefore a non-negative integer.  Notice that the proof
does not require a terminal response: an unresolved dispatch contributes to
\(\mathbf Q_g\), and permanently stranded authority contributes to
\(\mathbf F_g\).
\end{proof}

\begin{corollary}[No minting by retry, recovery, or join]
\label{cor:no-mint}
Under Assumptions~\ref{ass:auth}--\ref{ass:escrow}, no finite sequence of
duplicate messages, process restarts, reconciliations, or lineage-disjoint
joins can increase the sum of spendable and obligated capacity for a root
beyond \(\mathbf B_g\).
\end{corollary}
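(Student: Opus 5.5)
The plan is to derive Corollary~\ref{cor:no-mint} directly from Theorem~\ref{thm:exact-conservation}. The main work is to show that every event in the listed class is either a stutter or a reachable abstract transition.

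First, fix the quantity being bounded. The ``sum of spendable and obligated capacity'' for root \(g\) is
\[
\mathbf A_g(s)+\mathbf T_g(s)+\mathbf R_g(s)+\mathbf Q_g(s)+\mathbf C_g(s)+\mathbf F_g(s).
\]
If one prefers to exclude committed or forfeited mass, one can take any sub-sum of these six buckets. All six vectors lie in \(\NatVec\), so any sub-sum is \(\preceq\) the full sum. Either reading is therefore bounded once the full sum is.

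Second, lift a finite concrete sequence of duplicates, restarts, reconciliations, and joins to a path in \(\mathsf{Reach}\). I would classify each event using the conventions of the proof setup.
\begin{itemize}
\item A duplicate message with identical immutable fields returns the recorded decision, and a conflicting duplicate is rejected. Both are abstract stutters.
\item A crash before a durable commit is a stutter. A crash after a commit exposes the committed successor state, by Assumption~\ref{ass:durable}.
\item A restart loses only volatile state, so it moves no ownership relation.
\item Reconciliation invokes only \(\Settle\), \(\ReturnAct\), \(\Retire\), \(\Fence\), \(\mathsf{Reissue}\), or \(\RolloverAct\). Each of these is an abstract action.
\item A join is an admissible \(\JoinAct\) under Definition~\ref{def:join}; an aliased join is rejected and stutters.
\end{itemize}
By induction on the length of the sequence, the resulting state is in \(\mathsf{Reach}\).

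Third, apply Theorem~\ref{thm:exact-conservation}. Its assumptions, Assumptions~\ref{ass:auth}--\ref{ass:escrow}, are exactly those of the corollary. It gives equality with \(\mathbf B_g\) at every state along the path, and hence \(\preceq\mathbf B_g\) for the chosen sub-sum. Componentwise equality at every prefix state excludes even a transient increase.

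The only delicate point is the lifting step: showing that no recovery or replay path acts outside the abstract actions. For example, \(\Issue\) must not run twice for the same root, which Assumption~\ref{ass:escrow} forbids. A second acceptance of one \(xid\) must not occur either; the monotone decision register rules it out. With these two facts, every non-issuance action is atom preserving by the remark after Definition~\ref{def:atom-preserving}, and the corollary follows.
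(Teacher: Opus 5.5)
Your proposal is correct and takes essentially the paper's approach. The paper's one-line proof likewise treats retries and recovery steps as stutters or atom-preserving moves, treats joins as consuming disjoint inputs, and then applies Theorem~\ref{thm:exact-conservation} to every resulting prefix; your event-by-event classification and the observation that any sub-sum of the six non-negative buckets is also bounded by \(\mathbf B_g\) only spell that argument out.
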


\begin{proof}
Retries and recovery actions are either stutters or atom-preserving moves.
Joins union disjoint footprints after consuming their inputs.  Theorem
\ref{thm:exact-conservation} therefore applies to every resulting prefix.
\end{proof}

\subsection{Lineage and Descendant Non-Amplification}

For the next results, define the \emph{responsible lineage} of an atom to be
the credit lineage that currently holds it or the lineage embedded in its
transfer or operation binding.  Terminal locations retain that binding; a
committed or forfeited atom does not lose provenance merely because it is no
longer spendable.

\begin{lemma}[Lineage provenance]
\label{lem:lineage-provenance}
Under Assumptions~\ref{ass:auth}--\ref{ass:escrow}, for every reachable
state, the responsible lineage of each \(u\in U_g\)
contains a path from the unique root credit to its current credit, transfer,
or operation.  Along that path the atom is moved, never copied, and every
descendant label applicable to \(u\) restricts its ancestor labels.
\end{lemma}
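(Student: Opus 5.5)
We prove Lemma~\ref{lem:lineage-provenance} by induction on the execution prefix, using the same case split over atom-preserving transitions as Lemma~\ref{lem:ownership-partition}. We strengthen the hypothesis into a single invariant \(\mathsf{Prov}(s)\). For every \(u\in U_g\), let \(\ell=\mathsf{loc}_s(u)\); this is well defined by Lemma~\ref{lem:ownership-partition}. Then there is an authenticated path \(c_g=c_0\preceq_{\mathsf{lin}}\cdots\preceq_{\mathsf{lin}}c_n\) in \(\Graph_C\) such that:
\begin{enumerate}
  \item \(c_n\) is the credit tagged by \(\ell\), or the source credit bound in the transfer record \(x\) or the operation binding \(\beta_o\) when \(\ell\) is \(\mathsf T(x)\), \(\mathsf R(o)\), \(\mathsf Q(o)\), \(\mathsf C(o)\), or \(\mathsf F(o)\);
  \item \(u\) lay in the footprint of each \(c_i\) at the moment \(c_{i+1}\) was created from it;
  \item \(\lambda_{c_{i+1}}\restrict\lambda_{c_i}\), and \(\lambda_o\restrict\lambda_{c_n}\) or \(\lambda_x\restrict\lambda_{c_n}\) where applicable.
\end{enumerate}
Transitivity of the partial order \(\restrict\) then yields restriction of every ancestor label. ``Moved, never copied'' follows directly from Lemma~\ref{lem:ownership-partition}: at each step, the atom has exactly one tag before and after.

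The base case is \(\Issue\). Every atom is tagged \(\mathsf A(c_g)\), so the path is trivial, and \(\lambda_{c_g}\restrict\lambda_g\) holds by Definition~\ref{def:credit}. Stutters, rejected actions, and pre-commit crashes leave \(s\) unchanged. So do \(\Fence\), \(\mathsf{GatewayAccept}\), \(\mathsf{EffectOccurs}\), and late-evidence annotation, since none of them changes an ownership relation or a lineage edge. For the moving actions we check that each new tag either reuses the old responsible lineage or extends it by one authenticated edge whose label attenuates.
\begin{itemize}
  \item \(\Split\) and \(\mathsf{AcceptTransfer}\) append a fresh child whose parent set contains the source, and whose label is attenuated by the guard and by Equation~\eqref{eq:dual-attenuation}.
  \item \(\Delegate\), \(\Reserve\), \(\Dispatch\), and \(\Retire\) keep \(c_n\) fixed. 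The transfer or operation record immutably names \(c_n\): for operations this binding is guaranteed by \(\beta_o\) and I5 via Assumption~\ref{ass:auth}. The additional label \(\lambda_x\) or \(\lambda_o\) attenuates by the respective guard.
  \item \(\mathsf{CancelTransfer}\) restores the atom to the source, which is already on the path.
  \item \(\Settle\) and \(\ReturnAct\) send committed atoms to \(\mathsf C(o)\), which retains \(c_n\) through \(\beta_o\). Remainder and returned atoms go to a \(c_r\) satisfying Equation~\eqref{eq:return-target}. That equation gives \(c_o\preceq_{\mathsf{lin}}c_r\) and \(\lambda_{c_r}\restrict\lambda_{c_o}\), and it excludes ancestors and unrelated credits, so the existing path extends to \(c_r\).
  \item \(\mathsf{Reissue}\) and \(\RolloverAct\) use Equation~\eqref{eq:reissue-target}, which gives \(P_{c^+}=\{c\}\) and \(\lambda_{c^+}\restrict\lambda_c\).
\end{itemize}

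The step I expect to be the main obstacle is \(\JoinAct\), because the output credit has several parents. Here a single path must be chosen per atom. By Definition~\ref{def:join}, each atom \(u\) of the output lies in exactly one input \(X_{c_i}\), since the inputs are pairwise disjoint. We extend \(u\)'s path through that particular \(c_i\). The common-lower-bound condition \(\lambda'\restrict\lambda_{c_i}\) for all \(i\) supplies attenuation along the chosen edge. The word ``applicable'' in the statement matters here: the label restriction is claimed only along ancestors whose footprint actually contained \(u\), which is exactly what clause~2 tracks. Acyclicity of \(\Graph_C\), together with the requirement that every created credit is fresh, ensures that the extended sequences remain genuine paths.

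A secondary subtlety is that cross-store authenticity of path edges relies on Assumption~\ref{ass:auth} for certificates. Write-once decisions (Assumption~\ref{ass:escrow}) ensure that the recorded edge is the unique one, so provenance cannot be rewritten by replay.
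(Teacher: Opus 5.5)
Your proof follows the same route as the paper's: an induction over transitions with the same case split, move-not-copy taken from Lemma~\ref{lem:ownership-partition} and the singleton-source guard, and the join resolved as the paper resolves it (pairwise disjointness gives each output atom a unique input predecessor). Label restriction likewise comes from Equations~\eqref{eq:dual-attenuation}, \eqref{eq:return-target}, and \eqref{eq:reissue-target} plus transitivity of \(\restrict\).

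\textbf{The gap is clause~2 of \(\mathsf{Prov}(s)\).} This clause of your strengthened invariant is one the paper's proof does not carry, and its preservation does not go through.
\begin{itemize}
  \item \emph{Literal reading.} Take footprint in the paper's sense, the available set \(X_c\) of Definition~\ref{def:credit}. Then clause~2 already fails at \(\mathsf{AcceptTransfer}\). It also fails whenever a returned atom enters a newly created credit, namely a fresh return child or the \(\RolloverAct\) successor. At those moments \(u\) lies in \(X_x\) or \(X_o\), not in the parent's \(X_{c_i}\).
  \item \emph{Charitable reading.} Take clause~2 to mean that \(u\) was the responsibility of \(c_i\). It still fails in your \(\Settle\)/\(\ReturnAct\) bullet. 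There you extend the path along an authenticated \(\Graph_C\)-path from \(c_o\) to \(c_r\), using only \(c_o\preceq_{\mathsf{lin}}c_r\). So \(c_r\) may be a pre-existing current descendant admitted by clause~(i) of \(\mathsf{Succ}\), e.g.\ a grandchild of \(c_o\) produced by two splits while \(u\) sat in \(\mathsf Q(o)\). For every edge below \(c_o\) on that path, \(u\) was held by neither endpoint when the edge was created. Clause~2 is false there, so the induction hypothesis is not re-established.
\end{itemize}

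\textbf{The repair is cheap, because the lemma never needs clause~2.} Replace it by the auxiliary invariant that every authenticated edge of \(\Graph_C\) is label-attenuating.
\begin{itemize}
  \item Each edge-creating action carries an attenuation guard: \(\Split\), \(\mathsf{AcceptTransfer}\), \(\JoinAct\), \(\mathsf{Reissue}\), \(\RolloverAct\), and a fresh return child under Equation~\eqref{eq:return-target}. Assumption~\ref{ass:auth} excludes forged edges.
  \item Clause~3 then holds along any authenticated path, including the \(c_o\)-to-\(c_r\) extension.
  \item Acyclicity makes the concatenation a genuine path.
  \item Transitivity yields restriction against every ancestor, without deciding which intermediate credits are ``applicable'' to \(u\).
\end{itemize}
This is effectively what the paper does when it places a returned atom in a target satisfying Equation~\eqref{eq:return-target} and appeals to transitivity. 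Move-not-copy still comes from ownership partition.
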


\begin{proof}
The root path is trivial after \(\Issue\).  Induct on transitions.
\(\Split\) adds one parent edge for each moved atom; pairwise disjointness
selects one child.  \(\Delegate\) records the source generation in its sealed
certificate, and destination acceptance creates one child from that
certificate.  \(\Reserve\) copies the credit identifier, epoch, and label
binding into the operation while moving the atom out of availability.
  \(\Dispatch\), \(\Retire\), and reconciliation preserve that operation
  binding.  \(\Settle\) and \(\ReturnAct\) can place a returned atom only in a
  same-root successor that is current at the atomic commit and satisfies
  Equation~\eqref{eq:return-target}; cross-store or ancestor return must use a
  later sealed transfer. Ordinary \(\mathsf{Reissue}\) creates only the
  authenticated same-root, same-authority successor in
  Equation~\eqref{eq:reissue-target}, with
  \(\lambda_{c^+}\restrict\lambda_c\).  \(\RolloverAct\) records that same
  authenticated reissue edge atomically with the final terminal decision:
  committed atoms end at \(o\), while every returned atom and every atom in
  the predecessor's disjoint \(\mathsf{SafeA}\) footprint has the unique
  predecessor \(c\) and ends at \(c^+\). An admissible \(\JoinAct\) requires all
  inputs and its output to have the same root and records all input generations
  but, because their footprints are disjoint, each individual atom has exactly
  one input predecessor.  Assumption~\ref{ass:auth} prevents a
  forged predecessor; Lemma~\ref{lem:ownership-partition} and the singleton
  source guard prevent a second owner. Equations~\eqref{eq:dual-attenuation}
  and \eqref{eq:return-target} supply label restriction at every
  lineage-creating or return step. Transitivity of \(\restrict\) gives
  restriction against all applicable ancestors.
\end{proof}

Fix a finite execution prefix \(\rho\) ending in state \(s\) and a single-root
credit sub-DAG \(W\). Call \(W\) \emph{credit-cut closed along \(\rho\)} when
every authoritative ownership event that moves responsibility for an atom from
a lineage outside \(W\) to one inside \(W\) appends the exact moved footprint
to a monotone authenticated ingress history. Such events include cross-cut
split outputs, accepted sealed transfers, returned remainders and verified
aborts, successor reissues, and every outside input consumed by a join. A
task-dependency edge or balance report alone is not an ingress. Let
\(\mathsf{In}_{\rho}(W)\) be the set union of those event footprints,
deduplicated by atom identity if the same right later re-enters, and let
\(\mathsf{Held}_s(W)\) be the atoms whose current available, reserved,
indeterminate, committed, or forfeited responsibility lies in \(W\); atoms
sealed for transfer out of \(W\) are excluded.

\begin{theorem}[Descendant non-amplification]
\label{thm:descendant}
Under Assumptions~\ref{ass:auth}--\ref{ass:escrow}, for every finite execution
prefix \(\rho\) ending in reachable state \(s\) and every sub-DAG \(W\) that is
credit-cut closed along \(\rho\),
\[
 \mathsf{Held}_s(W)\subseteq\mathsf{In}_{\rho}(W),
 \qquad
 \mathsf{vec}(\mathsf{Held}_s(W))
 \preceq \mathsf{vec}(\mathsf{In}_{\rho}(W)).
\]
If \(W\) contains the root credit, take
\(\mathsf{In}_{\rho}(W)=U_g\).
\end{theorem}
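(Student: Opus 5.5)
We prove the set inclusion \(\mathsf{Held}_s(W)\subseteq\mathsf{In}_{\rho}(W)\) by induction on the length of the prefix \(\rho\). The vector inequality then follows immediately, because \(\mathsf{vec}\) is monotone under inclusion of subsets of \(U_g\).

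\emph{Base cases.} If \(W\) contains the root credit, we take \(\mathsf{In}_{\rho}(W)=U_g\), and the inclusion is trivial since every held atom lies in \(U_g\). Otherwise, in the initial state after \(\Issue\), every atom is held by the root credit \(c_g\notin W\). Hence \(\mathsf{Held}_{s_0}(W)=\varnothing\).

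\emph{Inductive step.} Assume the inclusion holds for the prefix \(\rho\) ending in \(s\), and consider a step \(s\xrightarrow{a}s'\). Since \(\mathsf{In}\) is a monotone ingress history, \(\mathsf{In}_{\rho a}(W)\supseteq\mathsf{In}_{\rho}(W)\). It therefore suffices to show that every atom in \(\mathsf{Held}_{s'}(W)\setminus\mathsf{Held}_s(W)\) lies in the footprint that \(a\) appends to the ingress history. Stutters, \(\Fence\), gateway acceptance, effect occurrence, and evidence annotation leave ownership unchanged, so \(\mathsf{Held}\) is unchanged. Every other action is atom preserving (Definition~\ref{def:atom-preserving}), and by Lemma~\ref{lem:ownership-partition} each moved atom \(u\) has a unique source location \(\ell_{\mathrm{src}}\) and a unique destination location. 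We classify each moved atom by whether the responsible lineage of its source and of its destination lies inside or outside \(W\).

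\begin{itemize}
  \item \emph{Inside to inside}, for example \(\Reserve\), \(\Dispatch\), \(\Settle\) into \(\mathsf C(o)\), \(\Retire\), or a split or reissue within \(W\). Here \(u\in\mathsf{Held}_s(W)\) already, and the induction hypothesis applies.
  \item \emph{Inside to outside or into transit.} Sealing a transfer out of \(W\) moves \(u\) to \(\mathsf T(x)\). Such atoms are excluded from \(\mathsf{Held}\) by definition, so \(\mathsf{Held}\) only shrinks.
  \item \emph{Outside to outside.} The atom remains irrelevant to \(W\).
  \item \emph{Outside to inside.} The action must be a cross-cut split output, an accepted transfer, a return of a remainder or verified abort, a successor reissue, or an outside join input. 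By credit-cut closedness, each of these appends its exact moved footprint, which contains \(u\), to the ingress history. Hence \(u\in\mathsf{In}_{\rho a}(W)\).
\end{itemize}

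For settlement, the responsible lineage is the binding embedded in \(o\). Moves \(\mathsf Q(o)\rightarrow\mathsf C(o)\) or \(\mathsf Q(o)\rightarrow\mathsf F(o)\) therefore keep \(u\) on the same side of the cut. A remainder routed to \(c_r\) is covered by Equation~\eqref{eq:return-target}, and the only way it can cross the cut is as a logged return.

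\emph{Main obstacle.} The hardest step is showing that the outside-to-inside list is exhaustive. No move may silently place an atom under an inside lineage without passing through a logged ingress event. Three cases need care.

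\begin{itemize}
  \item \(\RolloverAct\) and \(\JoinAct\) move several footprints in a single commit. We treat each atom separately: by Lemma~\ref{lem:lineage-provenance}, each atom has exactly one predecessor credit, so it makes one crossing or none.
  \item Task-dependency edges and balance reports move no atom, so they cannot create ingress.
  \item Authenticated provenance (Assumption~\ref{ass:auth} with Lemma~\ref{lem:lineage-provenance}) ensures that a destination credit's membership in \(W\) is determined by its genuine lineage edges. A forged predecessor therefore cannot bypass the cut.
\end{itemize}

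Deduplication by atom identity matters only because \(\mathsf{In}\) is defined as a set union. It introduces no additional obligation: an atom that re-enters \(W\) is already in \(\mathsf{In}\), and membership is all the inclusion requires.
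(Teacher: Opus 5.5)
Your proof is the same per-atom induction the paper uses: classify each moved atom by whether its source and destination responsible lineages lie in \(W\), use credit-cut closure for outside-to-inside moves, use Lemma~\ref{lem:ownership-partition} to exclude a second owner, and finish by monotonicity of \(\mathsf{vec}\). The gap is that the statement you induct on, \(\mathsf{Held}_s(W)\subseteq\mathsf{In}_{\rho}(W)\), is not inductive. The reason is that \(\mathsf{Held}\) counts only \(\mathsf A/\mathsf R/\mathsf Q/\mathsf C/\mathsf F\) responsibility and omits every atom in \(\mathsf T\).

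Concretely, let \(c\in W\) hold \(u\) and seal a transfer \(x\) of \(u\) whose accepted child \(c'\) also lies in \(W\). After the seal, \(u\in\mathsf T(x)\), so \(u\notin\mathsf{Held}_s(W)\). Acceptance then places \(u\) in \(\mathsf A(c')\), so \(u\in\mathsf{Held}_{s'}(W)\). The lineage embedded in \(x\) is \(c\in W\), so this is not a cross-cut ingress and closure appends nothing. Your inside-to-inside case asserts \(u\in\mathsf{Held}_s(W)\) already, which is false here. The sufficient condition you reduced to (every newly held atom lies in the footprint appended by \(a\)) also fails. The same hole appears when a transfer sealed out of \(W\) is cancelled and restores \(u\) to its inside source; cancellation is also missing from your outside-to-inside list.

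The conclusion still holds, because \(u\) was held (hence in \(\mathsf{In}\)) just before the seal and \(\mathsf{In}\) only grows. But the induction must carry transit atoms explicitly. The paper does exactly this in Theorem~\ref{thm:partition-confinement}, where it charges pending transfers to their source component. Prove instead
\[
\mathsf{Held}_s(W)\cup\{u\mid\exists x:(u,x)\in\mathcal T_s\land\text{the source credit of }x\text{ is in }W\}\subseteq\mathsf{In}_{\rho}(W).
\]
The transfer cases then go through:
\begin{itemize}
  \item an inside seal keeps \(u\) on the left-hand side;
  \item acceptance into \(W\), or cancellation back to an inside source, keeps it there;
  \item acceptance to an outside destination removes it;
  \item a transfer sealed outside and accepted inside is a cross-cut accepted transfer, which closure logs.
\end{itemize}
Since \(\mathsf{Held}_s(W)\) is contained in this left-hand side, the rest of your case analysis is unchanged. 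The paper's own proof phrases its cases in terms of responsible lineages, which include transfer bindings, so it tracks these atoms only implicitly.

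Finally, you do not need to prove the outside-to-inside list exhaustive. Credit-cut closure is a hypothesis about every event that moves responsibility from outside \(W\) to inside it, whatever the action type.
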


\begin{proof}
Induct on the transitions of \(\rho\), reasoning per atom. Initially a
non-root \(W\) holds no atom, whereas a root-containing \(W\) holds a subset of
\(U_g\). A transition whose source and destination responsible lineages are
both inside \(W\) merely relabels the atom. A transition to a lineage outside
\(W\), including an egress seal, removes it from
\(\mathsf{Held}(W)\). A transition from outside to inside can add only an atom
in its exact authenticated footprint; credit-cut closure appends that footprint
to the history defining \(\mathsf{In}_{\rho}(W)\) in the same logical move.
This case covers cross-cut split, transfer, return, ordinary reissue,
\(\RolloverAct\), and join rather than assuming any of them is internal.
Transitions on the outside and stable
decision replays do not change either side; the latter stutter because their
decision keys are unique. Lemma~\ref{lem:ownership-partition} excludes a second
simultaneous owner in every case. Hence the set inclusion is preserved.
Applying \(\mathsf{vec}\), which is monotone under set inclusion, gives the
componentwise inequality.
\end{proof}

The theorem is deliberately about the credit-lineage DAG, not the task DAG.
Adding a dependency edge, reference, model message, or shared specialist does
not change \(\mathsf{In}_{\rho}(W)\) and therefore grants no capacity.

\subsection{Stable Settlement and Continuous Effect Coverage}

\begin{lemma}[Terminal-decision stability]
\label{lem:terminal-stability}
Under Assumptions~\ref{ass:durable}, \ref{ass:escrow}, and
\ref{ass:receipt}, for any operation \(o\), at most one terminal settlement
decision can change ownership of \(X_o\).  After that decision, an identical
evidence item is a stutter and conflicting terminal evidence is rejected.
\end{lemma}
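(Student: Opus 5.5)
The plan is to identify exactly which abstract actions can move ownership of \(X_o\) out of \(\mathsf R(o)\) or \(\mathsf Q(o)\) into a terminal image, and then show that these actions share one write-once decision key per operation. The actions that move \(X_o\) out of the obligation buckets are \(\Settle(o,r^+)\), \(\ReturnAct(o,\zeta)\) in its local and external branches, \(\Retire(o)\), and the fused \(\RolloverAct(o,\omega,\ldots)\). First I treat \(\Retire\) separately. It moves \(X_o\) into the absorbing \(\mathsf F(o)\), and by the transition system no later action moves atoms out of \(\mathsf F\); Algorithm~\ref{alg:settle} only annotates. So once retirement fires, no terminal decision can change ownership again. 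Conversely, every settlement or return guard requires \(X_o\) wholly in \(\mathsf Q(o)\) or \(\mathsf R(o)\), so none of them can follow \(\Retire\).

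The central step concerns the three terminal variants \(r^+\), \(\pi_{\neg e}\), and \(\pi_{\mathrm{local}}\), and it has two parts.

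\begin{enumerate}
\item \emph{A second terminal move is never enabled.} Each variant's guard requires \(X_o\) wholly in \(\mathsf Q(o)\) (for \(r^+\) and \(\pi_{\neg e}\)) or wholly in \(\mathsf R(o)\) (for \(\pi_{\mathrm{local}}\)). After any of these commits, Lemma~\ref{lem:ownership-partition} says \(X_o\) lies in \(\mathsf C(o)\), in a return credit \(\mathsf A(c_r)\), or in \(\mathsf A(c^+)\), with singleton tags. Hence no atom of \(X_o\) remains in \(\mathsf R(o)\) or \(\mathsf Q(o)\). Because operations are never re-reserved under the same \(oid\) (the Reserve guard in Algorithm~\ref{alg:reserve} returns the recorded state), the guard for a second terminal move is permanently false.
\item \emph{Concurrent attempts on one pre-state yield one decision.} Assumption~\ref{ass:escrow} linearizes settlement decision keys, and \(\mathsf{TermDecision}[oid]\) is write-once. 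Assumption~\ref{ass:durable} makes the decision and the ownership move a single commit, so at most one commit succeeds.
\end{enumerate}

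Next I argue that identical replay is a stutter and conflicting evidence is rejected. For an identical \(\epsilon\), the branch at the top of Algorithm~\ref{alg:settle} detects the applied evidence ID with a byte-equivalent binding and returns the prior decision without any relation update, which is an abstract stutter. Conflicting evidence falls into two cases, and I would handle them via Assumption~\ref{ass:receipt}.

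\begin{itemize}
\item \emph{A different \(r^+\) for the same \(\beta_o\).} This is excluded because the authority does not sign conflicting terminal variants, and a success receipt exists only for the unique accepted instance.
\item \emph{An \(r^+\) against a \(\pi_{\neg e}\).} The two cannot both verify: \(\pi_{\neg e}\) requires that the binding never linearized \(\mathsf{GatewayAccept}\), whereas \(r^+\) requires \(\iota_{r^+}\in\mathsf{Obs}\), and \(\mathsf{EffectOccurs}\) presupposes acceptance.
\end{itemize}

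The local witness \(\pi_{\mathrm{local}}\) conflicts with both external variants only through the outbox history. Its post-state installs an absorbing \mcode{local-abort} marker, so \(\mathsf{GatewayAccept}\), which requires a dispatchable marker, can never fire. Hence neither external variant can subsequently verify. In the other direction, once dispatchable has been reached, \(\mathsf{LocalNoDispatchPre}\) fails. In every case a competing item is either unverifiable or hits the occupied decision register, so it is rejected as a stutter.

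The main obstacle I expect is \(\RolloverAct\). It writes \(\mathsf{TermDecision}[oid]\) while also moving \(\mathsf{SafeA}\), so I must confirm that it participates in the same register rather than in a parallel path. I would do this by citing the conjunct \(\mathsf{TermDecision}_s[oid]=\bot\) in Equation~\eqref{eq:rollover-ready} and the post-condition in Equation~\eqref{eq:rollover-action}. Symmetrically, I would check that plain \(\Settle\) and \(\ReturnAct\) also persist into \(\mathsf{TermDecision}[oid]\), reading the ``terminal decision row'' remark after Equation~\eqref{eq:return-target} as that register. With that identification, all paths contend on one write-once key and the claim follows.
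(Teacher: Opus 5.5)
Your proof is correct, and its load-bearing core is the paper's argument. It rests on a write-once \(\mathsf{TermDecision}[oid]\) committed atomically with the ownership move (Assumptions~\ref{ass:durable} and~\ref{ass:escrow}). Replay is answered from the stored decision, and \(\RolloverAct\) contends on the same key through its \(\mathsf{TermDecision}_s[oid]=\bot\) conjunct.

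Where you differ is in the layers you add on top. The paper rejects conflicting evidence purely syntactically: any byte-distinct witness mismatches the stored \(\mathsf{TermRecord}\) digest. It uses Assumption~\ref{ass:receipt} only to force the first decision through its type-matched verifier. You add three things: a guard-disabling argument (no atom of \(X_o\) remains in \(\mathsf R(o)\) or \(\mathsf Q(o)\)), a semantic non-equivocation case split, and an explicit \(\Retire\) case. The guard argument and the \(\Retire\) case buy something the paper's proof leaves implicit. They still bound ownership moves even if \(\Retire\) never writes the register.

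Two cautions apply to these extras.

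\begin{itemize}
\item The guard argument does not need Lemma~\ref{lem:ownership-partition}. Removing the pairs \((u,o)\) from \(\mathcal R_s\) or \(\mathcal Q_s\) already falsifies the guard. Citing the lemma also silently imports Assumption~\ref{ass:auth}, which the statement does not assume.
\item The semantic layer is not sound on its own. After a local abort, a genuine \(\pi_{\neg e}\) \emph{can} still verify, because the binding really was never accepted and a later fence may be installed; your ``neither external variant can subsequently verify'' is therefore false. Likewise, a byte-distinct but consistent receipt (new \(rid\) or \(t\)) is not a conflicting variant at all. Both are rejected only by the occupied register, and for \(\pi_{\neg e}\) also by the failed \(\mathsf Q(o)\) guard.
\end{itemize}

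Since your final step falls back on that register, nothing breaks. But it is the register, not non-equivocation, that carries the ``conflicting evidence is rejected'' clause.
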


\begin{proof}
The operation row and its terminal-decision key are updated in the same local
transaction as the ownership move.  By Assumption~\ref{ass:durable}, two
concurrent settlement attempts are serialized: the first committed attempt
either records a terminal decision or leaves the old state unchanged.  Every
later attempt observes that decision. An identical witness returns the
recorded result; any byte-distinct witness conflicts with the stored
\(\mathsf{TermRecord}\) digest and is rejected before an ownership move.
Before a first decision can be created, an external variant must pass its
type-matched \(\Verify_{\mathrm{term}}\) verifier and a local variant must pass
\(\Verify_{\mathrm{local}}(\pi_{\mathrm{local}},o,s,s')\); the closed sum
admits no unverified fourth case.
\(\RolloverAct\) uses this same operation terminal-decision key while
atomically adding the independent normalized reissue decision; it is not a
second settlement namespace. A crash
before the first commit leaves no ownership move;
a crash after it leaves the terminal key and ownership move together.
Consequently there is no reachable prefix in which two terminal settlement
transitions both move \(X_o\).
\end{proof}

\begin{theorem}[At-most-once authoritative settlement]
\label{thm:at-most-once-settlement}
Under Assumptions~\ref{ass:auth}, \ref{ass:durable}, \ref{ass:escrow},
\ref{ass:gateway}, and \ref{ass:receipt}, each canonical operation key
\(\kappa_o\) has at most one non-conflicting authoritative outcome, and its
charged footprint is applied to \(\mathbf C_g\) at most once.
\end{theorem}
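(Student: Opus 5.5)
The plan is to lift Lemma~\ref{lem:terminal-stability} from the operation identifier \(oid\) to the canonical key \(\kappa_o\). Then Lemma~\ref{lem:ownership-partition} turns ``at most one ownership-moving decision'' into ``at most once into \(\mathbf C_g\)''. The proof has three steps: identify key and operation, bound the outcome per operation, then bound the charged footprint.

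\emph{Step 1 (key determines one operation and binding).} By Definition~\ref{def:operation}, \(\kappa\) is a collision-resistant, domain-separated hash whose preimage contains \(oid\), issuer, \(g\), \(cid\), receiver, and \(h_e\). Under Assumption~\ref{ass:auth} and the adversary model, two distinct operation bindings cannot share one \(\kappa\). Invariant I5 and Assumption~\ref{ass:gateway} give a single register \(\mathsf{Accepted}[\kappa_o]=\beta_o\) within the receiver namespace. Any request with a changed binding under \(\kappa_o\) is rejected, and exact replays stutter. So every authoritative outcome attributed to \(\kappa_o\) concerns the one operation \(o\) and the one digest \(\beta_o\).

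\emph{Step 2 (one non-conflicting outcome).} I will split on the variants of \(\omega\). For two success receipts, Assumption~\ref{ass:receipt} binds each to the unique instance accepted under \(\beta_o\). Invariant I7 allows at most one instance per key, and the authority never signs conflicting variants, so the two receipts agree on \(iid\), \(\mathit{acceptSeq}\), and \(\mathbf d\). A success receipt against a no-effect certificate is impossible: the certificate may be issued only if \(\beta_o\) never linearized \(\mathsf{GatewayAccept}\), while a receipt exists only for an accepted instance, and Assumption~\ref{ass:gateway} linearizes acceptance and fence installation per key. A success receipt against a \(\pi_{\mathrm{local}}\) witness is also impossible. \(\pi_{\mathrm{local}}\) requires that the outbox never became dispatchable and installs an absorbing \mcode{local-abort} marker. \(\mathsf{GatewayAccept}\) requires the dispatchable marker. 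Hence no instance, and so no receipt, can coexist with it. Lemma~\ref{lem:terminal-stability} then says at most one of these evidence items is ever applied; identical replays stutter and distinct ones are rejected.

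\emph{Step 3 (charged footprint once).} Only two transitions insert into \(\mathcal C_s\): \(\Settle\) and the \(r^+\) branch of \(\RolloverAct\). Both use the same \(\mathsf{TermDecision}[oid]\) key and insert exactly the deterministic \(D_o=\mathsf{Take}_{\nu_{\mathsf{sel}}}(X_o,\mathbf d_{r^+})\). Their guards require \(X_o\) to be wholly in \(\mathsf Q(o)\). After the first move, those atoms sit in \(\mathsf C(o)\) or \(\mathsf A(c_r)\), and the guard fails. A late receipt after \(\Retire\) only annotates and never moves ownership out of \(\mathsf F\). By Lemma~\ref{lem:ownership-partition}, each atom of \(D_o\) contributes to \(\mathbf C_g\) exactly once.

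The main obstacle is the cross-variant exclusion in Step~2, especially showing that a \(\pi_{\neg e}\) certificate and a receipt cannot both verify for \(\kappa_o\). The difficulty is a request accepted before the fence and completing afterward. I would rely on the single per-key linearization order in Assumption~\ref{ass:gateway}. If acceptance precedes the fence, the certificate's precondition fails. If the fence precedes acceptance, acceptance is rejected and no instance exists.
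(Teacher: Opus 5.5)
Your proposal is correct and follows essentially the same route as the paper. In both, gateway per-key idempotency plus the binding and non-equivocation clauses of Assumption~\ref{ass:receipt} make success and no-effect evidence mutually exclusive; Lemma~\ref{lem:terminal-stability} then lets at most one variant move \(X_o\), so \(D_o\) enters \(\mathsf C(o)\) at most once. Your explicit exclusion of \(\pi_{\mathrm{local}}\) and your enumeration of the only \(\mathsf C\)-inserting transitions are elaborations the paper leaves inside that lemma. One caveat: the paper lists invariants I5 and I7 as goals rather than proved facts, so draw the uniqueness you attribute to them directly from Assumptions~\ref{ass:gateway} and \ref{ass:receipt}.
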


\begin{proof}
Assumption~\ref{ass:gateway} maps all gateway retries under \(\kappa_o\) to
one logical receiver outcome. Assumption~\ref{ass:receipt} binds either
admissible evidence variant to \(oid\), \(\kappa_o\), the complete acceptance
digest \(\beta_o\), root, logical slot, lineage, epoch, effect digest, and
receiver. A
success receipt additionally binds the instance and charge; a no-effect
certificate instead binds the higher installed fence for the same logical
slot and is mutually
exclusive with success. Lemma~\ref{lem:terminal-stability} allows at most one
variant to move ownership. Thus the canonical footprint
\(D_o=\mathsf{Take}_{\nu_{\mathsf{sel}}}
(X_o,\mathbf d_{r^+})\) enters \(\mathsf C(o)\) at most once. Network messages
and responses may be
delivered more than once; the theorem concerns authoritative settlement, not
exactly-once transport.
\end{proof}

\begin{lemma}[Ambiguity remains charged]
\label{lem:ambiguity-charged}
Suppose \(\Dispatch(o)\) has committed. Every atom of \(X_o\) remains in
\(\mathsf Q(o)\) until \(\Settle\), \(\ReturnAct\), \(\Retire\), or the
corresponding terminal branch of \(\RolloverAct\). Before authenticated
successful or verified-no-effect terminal evidence, it is either retained in
\(\mathsf Q(o)\) or conservatively moved to absorbing \(\mathsf F(o)\). A
timeout, process death, lease expiry, partition, missing response, or epoch
increment alone cannot move any atom of \(X_o\) to \(\mathsf A\).
\end{lemma}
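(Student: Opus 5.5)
The plan is an induction over execution prefixes after the committed \(\Dispatch(o)\), using an exhaustive case analysis of which abstract actions can remove a pair \((u,\mathsf Q(o))\) from \(\mathcal Q_s\). By Lemma~\ref{lem:ownership-partition}, each \(u\in X_o\) has exactly one tag. Immediately after the dispatch commit, the \(\Dispatch\) transition relabels all of \(X_o\) from \(\mathsf R(o)\) to \(\mathsf Q(o)\), so every atom of \(X_o\) starts wholly in \(\mathsf Q(o)\). Every non-issuance action is atom preserving (Definition~\ref{def:atom-preserving}), and its guard requires the singleton source tag \(\mathsf{Own}_s(u)=\{\ell_{\mathrm{src}}\}\). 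It therefore suffices to list the actions whose permitted source tag can be \(\mathsf Q(o)\) and to read off their destinations and guards from the transition system.

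First, I will go through the action list. \(\Split\), \(\Delegate\), \(\Reserve\), \(\JoinAct\), and \(\mathsf{Reissue}\) take sources only in \(\mathcal A\). \(\mathsf{Reissue}\) moves only \(\mathsf{Reusable}_s\subseteq\mathsf{Avail}_s(c,\eta)\), which requires \(\mathsf{Own}_s(u)=\{\mathsf A(c)\}\). Consequently, none of these actions can touch a quarantined atom. \(\mathsf{AcceptTransfer}\) and \(\mathsf{CancelTransfer}\) take sources only in \(\mathcal T\). The local branch of \(\ReturnAct\) and the \(\pi_{\mathrm{local}}\) branch of \(\RolloverAct\) require \(X_o\) wholly in \(\mathsf R(o)\). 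Since \(\Dispatch\) and \(\mathsf R\) never re-enter for the same \(oid\), they are disabled for \(o\); this follows because no action has destination \(\mathsf R(o)\) other than \(\Reserve\), which is single-assignment by its stable decision key. \(\Fence\), \(\mathsf{GatewayAccept}\), \(\mathsf{EffectOccurs}\), and late-evidence annotation change no ownership relation, as noted after Definition~\ref{def:atom-preserving}.

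The remaining actions are exactly \(\Settle\), the external branch of \(\ReturnAct\), \(\Retire\), and the \(r^+\)/\(\pi_{\neg e}\) branches of \(\RolloverAct\). This gives the first claim. Among these, \(\Retire\) sends atoms to the absorbing \(\mathsf F(o)\), and no action has \(\mathsf F\) as a source. The others require \(\Verify_+(r^+,o)\) together with \(\iota_{r^+}\in\mathsf{Obs}(s)\), or \(\Verify_{\neg e}(\pi_{\neg e},o)\) (via \(\mathsf{ValidTerminal}\) in the rollover case). This gives the second claim.

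For the third claim, I will observe that timeout, process death, lease expiry, partition, and missing response are not abstract actions at all. Under the proof-setup conventions, they are crash cuts or message loss, which map to stutters. An epoch increment is \(\Fence\), which leaves ownership unchanged. Any path to \(\mathsf A\) must therefore go through one of the evidence-guarded actions. Lemma~\ref{lem:terminal-stability} rules out a replay moving atoms twice.

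The main obstacle is the exhaustiveness and the guard precision of this case split, in particular the fused \(\RolloverAct\). That action simultaneously moves \(\mathsf{SafeA}_s(c,\eta)\) out of \(\mathsf A(c)\), and one must ensure that it cannot sweep quarantined atoms of a \emph{different} pending operation \(o'\) into \(c^+\). I would discharge this by noting that \(\mathsf{RolloverReady}\) requires every \(o'\ne o\) in \(\mathsf{Ops}(c,\eta)\) to be already terminal. The \(\mathsf{Frame}\) condition in Equation~\eqref{eq:rollover-action} then fixes all other \(\mathcal Q\) pairs, and \(\mathsf{SafeA}\) is disjoint from \(\mathcal Q\) by the ownership partition.
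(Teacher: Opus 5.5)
Your proposal is correct and follows essentially the same route as the paper. The paper's proof is exactly this inspection of the transition relation: after the dispatch cut, only \(\Settle\), \(\ReturnAct\), \(\Retire\), and \(\RolloverAct\) can remove an atom from \(\mathsf Q(o)\), and all but \(\Retire\) are guarded by terminal evidence; timeouts and lease expiry are not ownership transitions, and \(\Fence\) changes only the epoch.

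Your version makes that case split explicit, including disabling the local branches and checking that the \(\mathsf{SafeA}\) move inside \(\RolloverAct\) cannot touch \(\mathcal Q\). Two small points: the appeal to Lemma~\ref{lem:terminal-stability} is unnecessary, and you should add \(\Dispatch\) and \(\Issue\) to your enumeration, although neither can have source tag \(\mathsf Q(o)\).
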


\begin{proof}
After the durable dispatch cut, the abstract location is
\(\mathsf Q(o)\). Inspecting the transition relation, only \(\Settle\),
\(\ReturnAct\), \(\Retire\), and \(\RolloverAct\) can remove an atom from that
location. Authenticated successful settlement partitions it into
\(\mathsf C\) and an authenticated remainder in \(\mathsf A\); a fenced
verified-no-effect decision returns it to \(\mathsf A\); retirement moves it
to \(\mathsf F\). The corresponding \(\RolloverAct\) branches apply exactly
the same terminal partitions while atomically installing a successor.
Nonterminal reconciliation observations stutter.
Timeout, failure suspicion, and lease expiry are not ownership transitions.
\(\Fence\) changes only the accepted epoch.  Therefore ambiguity cannot
create available capacity.
\end{proof}

\begin{theorem}[Continuous injective coverage of mediated effect instances]
\label{thm:effect-coverage}
Under Assumptions~\ref{ass:auth}, \ref{ass:durable}, \ref{ass:escrow},
\ref{ass:effect}, \ref{ass:gateway}, \ref{ass:charge}, \ref{ass:receipt},
and \ref{ass:mediation}, every reachable state satisfies
\[
  \forall \iota\in\mathsf{Obs}(s),\qquad
  \exists!\,o\in\Operations:\Covered(\iota,o,s),
\]
and no one-effect-profile operation covers two distinct instances.
\end{theorem}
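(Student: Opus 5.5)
We prove the statement by induction on the length of the execution prefix, carrying a strengthened invariant. The invariant has three parts, taken jointly: \emph{(a)} every \(\iota\in\mathsf{Obs}(s)\) is covered by exactly one operation; \emph{(b)} at most one instance per canonical key appears in \(\mathsf{Obs}(s)\) (invariant I7); and \emph{(c)} every key \(\kappa\) with \(\mathsf{Accepted}[\kappa]=\beta_o\) has \(X_o\) wholly in \(\mathsf Q(o)\) or \(\mathsf F(o)\), or has a committed receipt with \(D_o\) in \(\mathsf C(o)\).

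Part \emph{(c)} is the dispatch-coverage invariant I8, restricted to gateway-accepted bindings. It must be carried because coverage of a future instance depends on the state of its operation at acceptance time. Initially \(\mathsf{Obs}\) and \(\mathsf{Accepted}\) are empty, so all three parts hold vacuously.

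For the inductive step, we split on the action.

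\emph{Actions that touch no accepted operation.} These are \(\Issue\), \(\Split\), \(\Delegate\), accept/cancel, \(\Reserve\), \(\Fence\), \(\mathsf{Reissue}\), \(\JoinAct\), and \(\Dispatch\). They neither append to \(\mathsf{Obs}\) nor move atoms of any operation whose key is accepted. Lemma~\ref{lem:ownership-partition} guarantees this: they touch only \(\mathsf A\), \(\mathsf T\), or \(\mathsf R\) atoms. So all clauses of \(\Covered\) are unchanged.

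\emph{\(\mathsf{GatewayAccept}\).} This action does not touch \(\mathsf{Obs}\). It establishes \emph{(c)} for the new key from its guard, which requires \(X_o\) in \(\mathsf Q(o)\cup\mathsf F(o)\) together with a dispatchable marker. Clause~2 of \(\Covered\), validity of the label and witness at acceptance, comes from the gateway revalidation in Assumption~\ref{ass:gateway}.

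\emph{\(\mathsf{EffectOccurs}(o,\iota)\).} This is the only action that grows \(\mathsf{Obs}\).
\begin{itemize}
\item \emph{Existence.} Complete mediation (Assumption~\ref{ass:mediation}) and sound normalization (Assumption~\ref{ass:effect}) mean that any protected occurrence passes the gateway with a prior accepted \((\kappa_o,\beta_o)\) and \(\iota.\beta=\beta_o\). Then \emph{(c)} yields clause~3.
\item \emph{Instance uniqueness.} The one-effect profile disables a second instance under \(\kappa_o\), which preserves \emph{(b)} and hence Equation~\eqref{eq:injective-coverage}.
\item \emph{Operation uniqueness.} If \(o'\) also covers \(\iota\), then \(\beta_{o'}=\iota.\beta=\beta_o\). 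Collision resistance and the inclusion of \(oid\) in Equation~\eqref{eq:accept-binding} force \(oid_{o'}=oid_o\). Invariant I5 and Assumption~\ref{ass:auth} then give \(o'=o\).
\end{itemize}

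\emph{Ownership-moving actions on an accepted operation.} These are \(\Settle\), \(\ReturnAct\), \(\Retire\), and \(\RolloverAct\).
\begin{itemize}
\item \(\Retire\) moves \(\mathsf Q\to\mathsf F\). Clause~3 persists, and later annotations do not move \(\mathsf F\) (Lemma~\ref{lem:ambiguity-charged}).
\item \(\Settle\) with \(r^+\) requires \(\iota_{r^+}\in\mathsf{Obs}\). Assumption~\ref{ass:receipt} gives \(\mathbf d=\mathsf{ActualCharge}(\iota)\) for the unique accepted instance, so \(D_o=\mathsf{Take}_{\nu_{\mathsf{sel}}}(X_o,\mathsf{ActualCharge}(\iota))\) lands in \(\mathsf C(o)\), as clause~3 requires. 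The remainder is returned only because the receipt disables further instances under that key, which preserves \emph{(c)}. Lemma~\ref{lem:terminal-stability} ensures this is the sole ownership-moving decision.
\item \(\RolloverAct\) with an \(r^+\) witness is handled identically.
\end{itemize}

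The main obstacle is the \emph{return branches}: \(\ReturnAct\) and \(\RolloverAct\) with a \(\zeta\) witness. These remove \(X_o\) from \(\mathsf Q\cup\mathsf R\) entirely, so coverage survives only if no instance under that binding exists now or can ever appear. The plan handles each witness separately.
\begin{itemize}
\item \emph{\(\pi_{\neg e}\).} Assumption~\ref{ass:receipt} says the exact binding never linearized \(\mathsf{GatewayAccept}\). By Assumption~\ref{ass:gateway}, the installed fence \(\mathit{fenceEpoch}>\eta_o\) makes every future first acceptance under it impossible. Since \(\mathsf{EffectOccurs}\) requires prior acceptance, no \(\iota\) with \(\iota.\beta=\beta_o\) is in \(\mathsf{Obs}(s)\) or any successor, and \(\kappa_o\) never enters \(\mathsf{Accepted}\).
\item \emph{\(\pi_{\mathrm{local}}\).} The outbox never reached \mcode{dispatchable}, and the gateway accepts only dispatchable-marked rows. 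The absorbing \mcode{local-abort} marker closes the key forever.
\end{itemize}
The delicate point is interleaving: a \(\mathsf{GatewayAccept}\) racing a return. The per-key linearization order of Assumption~\ref{ass:gateway} serializes the two, and the non-equivocation clause of Assumption~\ref{ass:receipt} excludes the bad order, so \emph{(c)} stays vacuous for that key. With all cases closed, the strengthened invariant, and therefore the theorem, holds in every reachable state.
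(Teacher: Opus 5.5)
Your proof is correct and follows the paper's argument. Both use induction on transitions, with \(\mathsf{EffectOccurs}\) as the sole appender whose acceptance guard yields the clauses of \(\Covered\); both use the one-effect guard and the canonical \(\kappa\)/\(\beta_o\) derivation for injectivity in both directions; and both use a case analysis in which \(\Retire\), \(\Settle\), and \(\RolloverAct\) preserve coverage while no-effect and local returns are disabled once the binding is accepted or dispatched. Your explicit invariant (c) only makes precise the acceptance-to-occurrence window that the paper covers implicitly through the first-acceptance guard and Lemma~\ref{lem:ambiguity-charged}. One small correction: in the \(\mathsf{EffectOccurs}\) step, the committed-receipt disjunct of (c) cannot supply clause~3 for the new instance. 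It must instead be ruled out by the one-effect guard, since an applied receipt already puts an instance under \(\kappa_o\) in \(\mathsf{Obs}\).
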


\begin{proof}
Induct on abstract transitions, treating \(\mathsf{Obs}\) as a monotone state
variable. Initially it is empty. Only \(\mathsf{EffectOccurs}(o,\iota)\) can
append an instance. Its guard requires a prior gateway acceptance of the
immutable digest \(\beta_o\) in Equation~\eqref{eq:accept-binding}, together
with \(\kappa_o\). On first acceptance, the action guard requires a matching
durable dispatchable marker and the full footprint in \(\mathsf Q(o)\), or
conservatively in \(\mathsf F(o)\) after retirement. An exact replay after a
later terminal move reads the existing accepted register and stutters; it
cannot create a new acceptance. The accepted witness also establishes
\(\mathsf{RequiredCap}(e,\chi)\preceq\mathbf u_o\), and the
\((g,\chi_c)\) committed inside \(\beta_o\) selects the corresponding
current-generation register and gateway fence. Thus the three
non-recursive clauses of \(\Covered(\iota,o,s)\) hold when \(\iota\) is
appended.

Assumption~\ref{ass:gateway} allocates a fresh \(iid\) and acceptance sequence
to at most one instance for the receiver-scoped key. The canonical
domain-separated derivation of \(\kappa\), stable operation decision, and
immutable binding prevent a different operation from acquiring that key
within the stated collision-resistance assumption; the one-effect guard
prevents the same operation from appending a second instance.
This proves both directions of injectivity. Two separate operations may still
request byte-identical descriptors; their keys and \(iid\) values make them
distinct budgeted instances.

Subsequent ownership actions preserve coverage. Before terminal evidence,
Lemma~\ref{lem:ambiguity-charged} retains all of \(X_o\) in \(\mathsf Q\), or
\(\Retire\) moves it wholly to absorbing \(\mathsf F\). A complete committed
receipt identifies the unique \(\iota\), moves the version-determined footprint
\[
 D_o=\mathsf{Take}_{\nu_{\mathsf{sel}}}
      (X_o,\mathsf{ActualCharge}(\iota))
\]
to \(\mathsf C\), and returns only the complement to a target satisfying
Equation~\eqref{eq:return-target}. A verified-no-effect return is disabled
after gateway acceptance linearizes by terminal-evidence non-equivocation and the
non-acceptance premise of Assumption~\ref{ass:receipt}. If
\(\RolloverAct\) is applied to an operation covering \(\iota\), its
\(\mathsf{ValidTerminal}\) evidence can therefore only be that same complete
success receipt: the no-effect branch contradicts prior acceptance and the
local-abort branch contradicts durable dispatch. Equation~\eqref{eq:rollover-action}
moves the same \(D_o\) to \(\mathsf C\), moves only its complement to the
successor, and moves the predecessor's disjoint \(\mathsf{SafeA}\) footprint
without changing \(o\)'s coverage. Evidence arriving
after retirement annotates the
record without moving \(\mathsf F\); the operation remains in
\(\mathsf{Risk}_g\) at its maximum rather than entering
\(\mathsf{Settled}_g\). All other transitions preserve \(\beta_o\) and
\(\mathsf{Obs}\). Hence coverage and injectivity hold for every prefix.
\end{proof}

\begin{theorem}[Root-bounded mediated external exposure]
\label{thm:external-exposure}
Under the assumptions of Theorem~\ref{thm:effect-coverage}, every reachable
state satisfies Equation~\eqref{eq:external-bound}:
\[
 \sum_{\iota\in\mathsf{Settled}_g(s)}
   \mathsf{ActualCharge}(\iota)
 +\sum_{o\in\mathsf{Risk}_g(s)}\mathbf u_o
 \preceq \mathbf C_g(s)+\mathbf Q_g(s)+\mathbf F_g(s)
 \preceq\mathbf B_g .
\]
\end{theorem}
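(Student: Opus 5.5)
The right inequality is immediate and I would dispose of it first. By Theorem~\ref{thm:exact-conservation}, \(\mathbf C_g+\mathbf Q_g+\mathbf F_g=\mathbf B_g-(\mathbf A_g+\mathbf T_g+\mathbf R_g)\). All six vectors lie in \(\NatVec\), so dropping the three nonnegative terms gives \(\mathbf C_g(s)+\mathbf Q_g(s)+\mathbf F_g(s)\preceq\mathbf B_g\) componentwise.

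For the left inequality I would build an injective footprint assignment into the supports. To each \(\iota\in\mathsf{Settled}_g(s)\) I assign \(D_{o(\iota)}=\mathsf{Take}_{\nu_{\mathsf{sel}}}(X_o,\mathsf{ActualCharge}(\iota))\), where \(o(\iota)\) is the unique covering operation from Theorem~\ref{thm:effect-coverage}. To each \(o\in\mathsf{Risk}_g(s)\) I assign its full footprint \(X_o\). These sets satisfy two vector identities:
\[
\mathsf{vec}(D_o)=\mathsf{ActualCharge}(\iota),\qquad \mathsf{vec}(X_o)=\mathbf u_o .
\]
The first holds by construction of \(\mathsf{Take}\) together with \(\mathbf d\preceq\mathbf u_o\), which follows from \(\Verify_+\) and Assumption~\ref{ass:charge}. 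The second holds by Definition~\ref{def:operation}. Clause 3 of \(\Covered\) places \(D_o\subseteq\mathsf{supp}(\mathcal C_s)\) for settled instances. The definition of \(\mathsf{Risk}_g\) places \(X_o\) wholly in \(\mathsf{supp}(\mathcal Q_s)\cup\mathsf{supp}(\mathcal F_s)\). Every atom is in \(U_g\) because \(g_o=g\).

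The core step is pairwise disjointness of all assigned sets. Lemma~\ref{lem:ownership-partition} gives each atom a unique tag \(\mathsf{loc}_s(u)\), and that tag names a single operation. So footprints attached to distinct operations are disjoint. The same lemma separates a \(\mathsf C\)-footprint from a \(\mathsf Q\)/\(\mathsf F\)-footprint, since no atom carries two tags. It remains to rule out two distinct settled instances sharing one operation. Injectivity, Equation~\eqref{eq:injective-coverage} from Theorem~\ref{thm:effect-coverage}, forbids this. Given disjointness, summing \(\mathsf{vec}\) over a disjoint family gives \(\mathsf{vec}\) of the union. Monotonicity of \(\mathsf{vec}\) under inclusion then bounds the left-hand sum by \(\mathsf{vec}\) of \(\mathsf{supp}(\mathcal C_s)\cup\mathsf{supp}(\mathcal Q_s)\cup\mathsf{supp}(\mathcal F_s)\). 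By Lemma~\ref{lem:ownership-partition} these three supports are disjoint, so the bound equals \(\mathbf C_g+\mathbf Q_g+\mathbf F_g\).

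The main obstacle is the interaction between \(\mathsf{Settled}_g\) and \(\mathsf{Risk}_g\) for a single operation. A success receipt can arrive after retirement. The paper says that receipt annotates the record without entering \(\mathsf{Settled}\). I need this to follow from the definition of \(\mathsf{Settled}_g\), which requires the charged footprint to actually reside in \(\mathsf C\). Because \(\mathsf F(o)\) is absorbing and Lemma~\ref{lem:terminal-stability} allows only one ownership-moving decision, such an operation is counted once, at \(\mathbf u_o\), in \(\mathsf{Risk}_g\). I would make this case split explicit rather than leave it to the definitions.

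I would also check the dimension bookkeeping. The vector \(\mathsf{ActualCharge}(\iota)\) must be measured in the same resource dimensions as the atoms of \(D_o\). Assumption~\ref{ass:effect} on normalization and the receipt binding in Assumption~\ref{ass:receipt} supply this, so that the componentwise comparison is meaningful.
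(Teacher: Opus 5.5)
Your proposal is correct and follows essentially the same route as the paper. You assign the canonical footprint \(D_o=\mathsf{Take}_{\nu_{\mathsf{sel}}}(X_o,\mathsf{ActualCharge}(\iota))\) to each settled instance and the full \(X_o\) to each risk operation, get pairwise disjointness from the ownership-partition lemma plus coverage injectivity, bound the sum by the raw supports of \(\mathcal C_s,\mathcal Q_s,\mathcal F_s\), and close with exact conservation and non-negativity; your explicit case split for a post-retirement success receipt matches the paper's remark that, by the definition of \(\mathsf{Settled}_g\), such a late success is counted only in the conservative \(\mathsf{Risk}_g\) term (the extra appeal to normalization for dimension bookkeeping is harmless but unnecessary, since \(\mathsf{vec}(\mathsf{Take}_{\nu_{\mathsf{sel}}}(X_o,\mathbf d))=\mathbf d\) holds by construction).
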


\begin{proof}
For every settled instance, success-receipt completeness gives the distinct
charged footprint
\(D_o=\mathsf{Take}_{\nu_{\mathsf{sel}}}
(X_o,\mathsf{ActualCharge}(\iota))\) in \(\mathcal C_s\). For every risk
operation, irrespective of a
later evidence annotation, its entire maximum footprint \(X_o\) is present in
\(\mathcal Q_s\) or \(\mathcal F_s\), so
\(\mathbf u_o=\mathsf{vec}(X_o)\).  By definition an
\(\mathsf F\)-resident late success is counted only in this second,
conservative term. The coverage
injectivity theorem and ownership partition make all of these charged and
at-risk footprints pairwise disjoint; no atom contributes to both sums.
Their vector sum is therefore bounded by the raw supports of
\(\mathcal C_s,\mathcal Q_s,\mathcal F_s\). Exact conservation and
non-negativity then give
\(\mathbf C_g+\mathbf Q_g+\mathbf F_g\preceq\mathbf B_g\).
The first sum reports actual settled charge; the second deliberately charges
uncertainty and forfeiture at their enforced maxima.
\end{proof}

\subsection{Epoch Fencing, Late Completion, and Safe Reclaim}

\begin{lemma}[No stale resurrection]
\label{lem:no-stale-resurrection}
Under Assumptions~\ref{ass:auth}, \ref{ass:durable}, and
\ref{ass:gateway}, if the authoritative epoch for credit \(c\) advances from
\(\eta\) to \(\eta'>\eta\), no later transition can make \(\eta\) current
again, and no
new protected effect under \((c,\eta)\) is accepted after the corresponding
gateway fence is installed.
\end{lemma}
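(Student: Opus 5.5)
The plan is to prove the two clauses of Lemma~\ref{lem:no-stale-resurrection} separately, each as an invariant over \(\mathsf{Reach}\) by induction on abstract transitions. The setup is the same as in Lemma~\ref{lem:ownership-partition}: rejected actions, exact replays and pre-commit crashes are stutters. By Assumption~\ref{ass:durable}, each committed action is visible only as a whole.

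\textbf{First clause (no resurrection).} I will strengthen the statement to two invariants about a logical slot \((g,\chi)\):
\begin{enumerate}
\item \(\mathsf{AuthoritativeEpoch}_s(g,\chi)\) is monotone non-decreasing along every prefix;
\item once a generation \(cid\) has been cleared from \(\mathsf{ActiveGeneration}_s(g,\chi)\), or marked \mcode{closing} or \mcode{closed}, it is never re-installed and its status never returns to \mcode{active}.
\end{enumerate}
Both are checked by inspecting every action that writes these registers.
\begin{itemize}
\item \(\Issue\), \(\Split\), \(\Delegate\) acceptance and \(\JoinAct\) only install \emph{fresh} slots and fresh \(cid\)s; Assumption~\ref{ass:auth} ensures that freshness cannot be forged.
\item \(\Fence\) requires \(\eta'>\mathsf{AuthoritativeEpoch}_s\). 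It marks \(c\) \mcode{closing} and clears \(cid\).
\item \(\mathsf{Reissue}\) and \(\RolloverAct\) satisfy \(\mathsf{ReissueTarget}\) (Equation~\eqref{eq:reissue-target}). This installs a fresh authenticated \(c^+\) with \(\eta_{c^+}=\eta^+\), the already advanced authoritative epoch, and explicitly requires \(\neg\Current_{s'}(c,\eta)\). The write-once decision on \(K_c\) in Assumption~\ref{ass:escrow} ensures that at most one successor is ever chosen, so a replay cannot install a different one.
\item No transition lowers the epoch register or sets a status back to \mcode{active}.
\end{itemize}
Given these invariants, \(\Current_s(c,\eta)\) in Equation~\eqref{eq:current-generation} requires \(\eta=\mathsf{AuthoritativeEpoch}_s(g,\chi_c)\) and \(\sigma_c=\mcode{active}\). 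After the advance, the first conjunct fails forever by monotonicity and \(\eta<\eta'\). The second fails forever because statuses never revert. The same holds for any other generation carrying epoch \(\eta\), since every later installed generation on the slot has an epoch at least \(\eta'\).

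\textbf{Second clause (no new acceptance after the fence).} Here I use the fence-ordering part of Assumption~\ref{ass:gateway}. The gateway selects the fence by the authenticated \((g,\chi_c)\) committed inside \(\beta_o\) and \(\kappa_o\). Gateway acceptance and fence installation are linearized in one order per key. Once \(\mathsf{FenceInstalled}_s(g,\chi_c,\eta')\) holds, the argument splits on the state of \(\mathsf{Accepted}[\kappa]\) for any \(\mathsf{GatewayAccept}(o)\) whose \(\beta_o\) names \((c,\eta)\):
\begin{itemize}
\item If the register is unseen, the \(\mathsf{GatewayAccept}\) guard compares \(\eta_o=\eta<\eta'\) against the installed floor and rejects.
\item If the register already holds \(\beta_o\), the action is a stutter. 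It creates no new acceptance, and \(\mathsf{EffectOccurs}\) adds no second instance under the one-effect guard (invariant I7).
\item If the register holds a different binding, the action is rejected.
\end{itemize}
Since \(\mathsf{EffectOccurs}\) requires a prior acceptance of \(\beta_o\), no new protected effect instance appears under \((c,\eta)\) except one that was accepted before the fence. The lemma explicitly allows such an instance to complete later. Fence monotonicity follows from the same register-monotonicity argument.

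\textbf{Main obstacle.} I expect the main obstacle to be the fused \(\RolloverAct\) and the case of a reissue replay with a changed target. I must show that no combination of a terminal decision and a successor installation rewrites \(\mathsf{ActiveGeneration}\) back to \(cid\). I will first try to discharge this directly from the \(\neg\Current_{s'}(c,\eta)\) and \(\mathsf{FreshAuthenticated}_s(c^+)\) conjuncts together with the write-once \(\mathsf{Decision}[K_c]\). The remaining possibility is that a Byzantine or forged generation identifier reuses \(cid\). Assumption~\ref{ass:auth} excludes that case.
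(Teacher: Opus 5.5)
Your proof is correct and follows the same route as the paper. The first clause rests on strict monotonicity of the authoritative epoch, which the \(\Fence\) precondition enforces. The second clause rests on the gateway's behavior under Assumption~\ref{ass:gateway}: it rejects an unseen stale request and returns the recorded outcome for an existing key. Your enumeration of the other register-writing actions (\(\Issue\), \(\Split\), transfer acceptance, \(\JoinAct\), \(\mathsf{Reissue}\), \(\RolloverAct\)) just makes explicit what the paper's one-line appeal to the \(\Fence\) precondition leaves implicit.

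Note that monotonicity alone already falsifies the conjunct \(\eta=\mathsf{AuthoritativeEpoch}_s(g,\chi_c)\) of \(\Current_s(c,\eta)\). So you do not need the status-never-reverts invariant. You also do not need the write-once reissue decision, which comes from Assumption~\ref{ass:escrow} and is not among the lemma's hypotheses.
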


\begin{proof}
Epoch updates are strictly increasing by the \(\Fence\) precondition, so the
credit authority cannot restore \(\eta\).  Assumption~\ref{ass:gateway}
requires the protected gateway to compare an incoming generation against the
installed fence.  It either rejects a previously unseen stale request or
returns the outcome already recorded for its stable key.  Neither behavior
creates a new old-epoch effect.  Reordered or replayed messages do not alter
the monotone fence.
\end{proof}

\begin{theorem}[Late-completion safety]
\label{thm:late-completion}
Assume \ref{ass:auth}, \ref{ass:durable}, \ref{ass:escrow},
\ref{ass:effect}, \ref{ass:gateway}, \ref{ass:receipt}, and
\ref{ass:mediation}.  Let a mediated operation \(o\) be dispatched under epoch
\(\eta\), let its caller time out, and let a controller later install an
authoritative successor epoch \(\eta^+>\eta\). If
\(\mathsf{GatewayAccept}(o)\) linearized before the fence, all
of \(X_o\) remains charged in \(\mathsf Q(o)\) or \(\mathsf F(o)\) until
complete terminal evidence. A pre-retirement success moves its exact
actual-charge footprint \(D_o\) to \(\mathsf C(o)\); only its authenticated
complement may return to \(\mathsf A\), after the same terminal decision
excludes any further instance or charge under the old binding. A
post-retirement result leaves all of \(X_o\) in absorbing \(\mathsf F(o)\).
If no acceptance linearized before the fence, the gateway cannot newly accept
the old epoch afterward.
Consequently neither ordinary generation closure and successor issuance under
Equations~\eqref{eq:closed-generation}--\eqref{eq:reissue-target} nor fused
terminal rollover under
Equations~\eqref{eq:rollover-ready}--\eqref{eq:rollover-action} can make one
atom simultaneously fund a late old-epoch effect and a new-generation effect.
\end{theorem}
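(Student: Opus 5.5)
We split on whether \(\mathsf{GatewayAccept}(o)\) linearized before the fence installed by \(\Fence(c,\eta^+)\), using Assumption~\ref{ass:gateway} to give acceptance and fence installation one order for the receiver-scoped key. After the two cases we derive the closing claim, that no atom funds both a late old-epoch effect and a new-generation effect. Throughout, Lemma~\ref{lem:ownership-partition} gives each atom a unique location. Theorem~\ref{thm:exact-conservation} keeps the accounting exact, and Lemma~\ref{lem:terminal-stability} guarantees at most one terminal move of \(X_o\).

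\emph{Case 1: acceptance precedes the fence.} Since \(\Dispatch(o)\) committed, Lemma~\ref{lem:ambiguity-charged} places all of \(X_o\) in \(\mathsf Q(o)\). The same lemma says timeout, controller suspicion and the epoch increment are not ownership transitions. So \(X_o\) stays in \(\mathsf Q(o)\) until \(\Settle\), \(\ReturnAct\), \(\Retire\) or a \(\RolloverAct\) branch. Prior acceptance rules out the no-effect branch by non-equivocation in Assumption~\ref{ass:receipt}. It also rules out the local-abort branch, because the dispatchable marker was committed and \(\mathsf{LocalNoDispatchPre}\) requires the outbox never to have reached \mcode{dispatchable}. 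Hence the only terminal moves are success settlement, possibly fused into rollover, or retirement.
\begin{itemize}
\item \emph{Success.} The action moves exactly \(D_o=\mathsf{Take}_{\nu_{\mathsf{sel}}}(X_o,\mathbf d_{r^+})\) to \(\mathsf C(o)\). The complement goes to a target satisfying Equation~\eqref{eq:return-target}, or to \(c^+\) under rollover. Instance injectivity (Theorem~\ref{thm:effect-coverage}) and the one-effect profile disable any further instance under \((\kappa_o,\beta_o)\). Assumption~\ref{ass:charge} together with receipt completeness forbids further charge beyond \(\mathbf d\).
\item \emph{Retirement.} \(\mathsf F(o)\) is absorbing, and later evidence only annotates the record.
\end{itemize}

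\emph{Case 2: no acceptance before the fence.} Lemma~\ref{lem:no-stale-resurrection} and Assumption~\ref{ass:gateway} show that an unseen old-generation key is rejected after the fence. The gateway selects the fence by the authenticated \((g,\chi_c)\) inside \(\beta_o\), so relabeling cannot evade it. Complete mediation (Assumption~\ref{ass:mediation}) excludes any path that bypasses the gateway. Consequently \(\mathsf{Obs}\) never gains an instance for \(o\). Any return of \(X_o\) then requires either a fenced \(\pi_{\neg e}\), which also proves no future occurrence, or \(\pi_{\mathrm{local}}\), whose absorbing marker forbids later dispatch.

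\emph{Closing claim.} Suppose an atom \(u\) funds a new-generation effect. Then \(u\) must first be located in \(\mathsf A(c^+)\) and reserved from there. By the reissue and rollover guards (Equations~\eqref{eq:closed-generation}--\eqref{eq:rollover-action}), \(u\) entered \(c^+\) only through \(\mathsf{SafeA}\) or a \(\mathsf{ReturnFoot}\). Atoms in \(\mathsf{SafeA}\) were in \(\mathsf A(c)\), never in any \(X_o\) still unresolved, because \(\mathsf{Closed}\) and \(\mathsf{RolloverReady}\) require every other old operation to be terminal. Atoms in \(\mathsf{ReturnFoot}\) are either the complement of \(D_o\), which Case 1 shows cannot be charged by \(o\)'s instance, or a full no-effect return, which Case 2 shows funds no old instance. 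Old atoms that fund a late effect sit in \(\mathsf C(o)\), \(\mathsf Q(o)\) or \(\mathsf F(o)\). By Theorem~\ref{thm:external-exposure} these footprints are disjoint by location from \(\mathsf A(c^+)\) and from the new operations' footprints. Therefore no atom can be counted in both.

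\emph{Main obstacle.} The hard step is the interaction with \(\RolloverAct\). We must show that its \(\mathsf{ValidTerminal}\) witness for an accepted operation can only be that operation's success receipt, and that the three moved sets in Equation~\eqref{eq:rollover-action} are pairwise disjoint. First we would reuse the argument from the proof of Theorem~\ref{thm:effect-coverage}. Then we would check that the \(\mathsf{Frame}\) clause preserves the fence and \(\mathsf{Obs}\).
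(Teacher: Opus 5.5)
Your proposal follows the paper's proof essentially step for step. It uses the same split on whether \(\mathsf{GatewayAccept}(o)\) linearized before the fence. In the first case, Lemma~\ref{lem:ambiguity-charged}, the non-acceptance clause of Assumption~\ref{ass:receipt}, and the committed dispatchable marker leave only success or retirement. In the second case, Lemma~\ref{lem:no-stale-resurrection} applies. Finally, the \(\mathsf{Closed}\), \(\mathsf{Reusable}\), and \(\mathsf{RolloverReady}\) guards show that only \(\mathsf{SafeA}\) and the terminal return footprint can reach \(c^+\).

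The one repair needed is hypothesis bookkeeping. The statement does not assume Assumption~\ref{ass:charge}, yet you invoke it directly. You also rely on it indirectly through Theorems~\ref{thm:effect-coverage} and~\ref{thm:external-exposure}, whose hypotheses include it. Use instead what the paper uses:
\begin{itemize}
\item the one-instance-per-key rule of Assumption~\ref{ass:gateway} with the \(\mathsf{EffectOccurs}\) guard, for ``no further instance'';
\item receipt completeness in Assumption~\ref{ass:receipt}, for ``no charge beyond \(\mathbf d\)'';
\item Lemma~\ref{lem:ownership-partition}, for disjointness from \(\mathsf A(c^+)\). The exposure bound only separates the settled and at-risk footprints from each other, not from available credit.
\end{itemize}

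Your stated main obstacle is already discharged. The exclusion of \(\pi_{\neg e}\) and \(\pi_{\mathrm{local}}\) in your Case~1 applies verbatim to the \(\mathsf{ValidTerminal}\) witness of \(\RolloverAct\). The three moved sets are disjoint by ownership partition, because \(\mathsf{SafeA}\) lies in \(\mathsf A(c)\) while \(\mathsf{CommitFoot}\uplus\mathsf{ReturnFoot}=X_o\) lies in \(\mathsf R(o)\) or \(\mathsf Q(o)\). Finally, \(\mathsf{Frame}\) leaves the fence and \(\mathsf{Obs}\) unchanged by definition.
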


\begin{proof}
There are two cases.

\emph{Acceptance linearized before the fence.} Its reservation was already in
\(\mathsf Q(o)\) by the durable-before-exposure order. Timeout and epoch
advance do not refund it by Lemma~\ref{lem:ambiguity-charged}. A complete
pre-retirement success moves
\(D_o=\mathsf{Take}_{\nu_{\mathsf{sel}}}
(X_o,\mathbf d_{r^+})\) permanently to \(\mathsf C(o)\) and may return
\(X_o\setminus D_o\) only in the same write-once terminal decision. Receipt
completeness and the one-effect gateway rule then prohibit a later instance or
additional charge under \((\kappa_o,\beta_o)\), so no returned complement atom
funds the old effect. If retirement commits first, \(\mathsf F\) remains
absorbing and late evidence only annotates it; otherwise the unresolved
footprint stays in \(\mathsf Q(o)\). Assumption~\ref{ass:receipt} disables a
verified-no-effect return after acceptance has linearized, and a fence does not
cancel that acceptance.

\emph{No acceptance linearized before the fence.} Lemma
\ref{lem:no-stale-resurrection} disables a new old-epoch acceptance, so no
protected effect can arise from that binding. If a retry instead observes a
previously recorded acceptance or result, the premise of this case is false
and the execution belongs to the first case.

Equation~\eqref{eq:closed-generation} closes the old generation only after the
fence is installed, every transfer decision is terminal, and every old
operation is committed, verified-aborted, or forfeited.
Equation~\eqref{eq:reusable} then selects only uniquely available,
retained-provenance atoms. Committed and forfeited atoms, and atoms carried by
accepted transfers, are excluded. For the dispatched operation in this
theorem, a verified-aborted atom is selectable only with an authoritative
no-effect certificate protected by the same fence; an authenticated success
remainder is selectable only after its exact charged complement and complete
single-instance terminal result are fixed. An ordinary successor-generation
action moves, rather than copies, that exact set under the normalized
write-once reissue decision.  When the last terminal decision and successor
installation are fused, Equations~\eqref{eq:rollover-ready}--
\eqref{eq:rollover-action} establish the same facts atomically: the canonical
committed and return footprints partition \(X_o\), both are disjoint from the
predecessor's \(\mathsf{SafeA}\) footprint, and exactly the return and
\(\mathsf{SafeA}\) atoms move to the normalized same-slot successor.
Therefore a reclaimed atom can neither be in \(D_o\) nor later acquire charge
under the old binding, and no atom can fund effects in both generations.
\end{proof}

\begin{corollary}[Failure suspicion is never a refund proof]
\label{cor:no-timeout-refund}
Replacing the terminal-evidence premises in
Equations~\eqref{eq:closed-generation}--\eqref{eq:reusable} or
\eqref{eq:rollover-ready}--\eqref{eq:rollover-action} with a timeout,
worker lease expiry, or controller failure detector invalidates
Theorem~\ref{thm:late-completion}.
\end{corollary}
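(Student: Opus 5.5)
The corollary is a negative statement: weakening the premises breaks the theorem. So the plan is to give a counterexample trace rather than an invariant argument. It suffices to exhibit, for the modified machine, one reachable execution in which a single atom funds both a late old-epoch effect and a new-generation effect. That contradicts the final clause of Theorem~\ref{thm:late-completion}, and the resulting state also violates Equation~\eqref{eq:external-bound}.

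The construction is the one-unit timeout race of Section~\ref{sec:failures}, instantiated in the abstract machine.
\begin{enumerate}
\item Take \(\mathbf B_g=\mathbf 1\) in one dimension with a single atom \(u\). Run \(\Issue\), \(\Reserve(c,o,\{u\},e)\), \(\Dispatch(o)\) and \(\mathsf{GatewayAccept}(o)\) at epoch \(\eta\), so that \(u\in\mathsf Q(o)\) and the binding \(\beta_o\) is accepted.
\item Suppress the reply. Let the timeout or failure detector fire and install \(\Fence(c,\eta^+)\). By Assumption~\ref{ass:gateway}, this fence does not cancel the prior acceptance.
\item In the modified machine, the terminal premise of Equation~\eqref{eq:closed-generation}, or the \(\mathsf{ValidTerminal}\) conjunct of Equation~\eqref{eq:rollover-ready}, is now satisfied by the timeout itself. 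The predecessor therefore closes with \(o\) treated as verified-aborted, \(u\) enters \(\mathsf{SafeA}\), and \(u\) moves to \(\mathsf A(c^+)\).
\item The successor reserves \(u\) for a fresh operation \(o'\) at epoch \(\eta^+\). The gateway accepts it, since the new epoch passes the fence, and \(\mathsf{EffectOccurs}(o',\iota')\) follows.
\item Finally \(\mathsf{EffectOccurs}(o,\iota)\) fires for the earlier acceptance. This is legal because acceptance preceded the fence.
\end{enumerate}
The result is two distinct instances, each charging one unit, both funded by \(u\). One instance is now uncovered, which breaks Theorem~\ref{thm:effect-coverage}, and total exposure is \(2>\mathbf B_g\).

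To make the argument meaningful rather than vacuous, I would add an indistinguishability step. Compare this execution with the one in which \(\mathsf{GatewayAccept}(o)\) never linearizes. At the controller, the two are identical up to the timeout: same local state and same absent reply. Any rule that fires on timeout or lease expiry therefore fires in both, while the refund is safe only in the second. This shows that no choice of timeout constant or detector accuracy repairs the trace. The same argument applies to lease expiry and to a controller failure detector, because each is a function of the local view only.

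The main obstacle is to do the substitution precisely. I must show that replacing only the evidence premise still leaves every other guard of \(\mathsf{Reissue}\) or \(\RolloverAct\) satisfiable: the fence, \(\mathsf{SafeA}=\mathsf{Avail}\), and \(\mathsf{ReissueTarget}\). I must also show that the late \(\mathsf{EffectOccurs}\) remains enabled under Assumption~\ref{ass:gateway}. I would check both guard families explicitly, handling the ordinary-reissue and fused-rollover variants in turn, and confirm that the trace is independent of which variant is weakened.
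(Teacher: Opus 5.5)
Your proposal is correct and follows the paper's own argument. The paper's proof is exactly the indistinguishability observation: the controller cannot tell an unexecuted request from an accepted one whose reply is delayed, so a timeout-triggered refund in the latter execution frees the atoms while the old effect still lands. Your one-atom trace, which is essentially the Appendix~\ref{app:traces} timeout-refund trace carried through reissue, instantiates that execution in the abstract machine. One small correction: the final state does not literally violate Equation~\eqref{eq:external-bound}. The late old instance is uncovered, so it lies in neither $\mathsf{Settled}_g$ nor $\mathsf{Risk}_g$ (its footprint has left $\mathsf Q(o)$), and the formal left-hand side never exceeds $\mathbf B_g$; what actually breaks is coverage (Theorem~\ref{thm:effect-coverage}), Theorem~\ref{thm:late-completion} itself, and the true charge $\sum_{\iota\in\mathsf{Obs}}\mathsf{ActualCharge}(\iota)=2$.
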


\begin{proof}
In an asynchronous system, an observer cannot distinguish an unexecuted
request from an executed request whose response is delayed.  Refund in the
latter execution makes the same atoms available while the effect exists.
The gateway fence and authoritative outcome are therefore proof obligations,
not implementation conveniences.
\end{proof}

\subsection{Partition Confinement and the Coordination Boundary}

\begin{theorem}[Partition confinement]
\label{thm:partition-confinement}
Under Assumptions~\ref{ass:auth}--\ref{ass:escrow}, consider a network
partition that separates credit authorities into components
\(P_1,\ldots,P_m\) at state \(s_c\). For each fixed root \(g\), define the
component-held footprint
\[
 \mathsf{Held}^g_s(P_i)=
 \{u\in U_g\mid\text{\(u\)'s current \(\mathsf A/\mathsf R/\mathsf Q/
 \mathsf C/\mathsf F\) responsible authority is in \(P_i\)}\}.
\]
Pending \(\mathsf T\) atoms are excluded from this set and assigned separately
to their unique source component below. Define
\[
 E_i^g=\mathsf{Held}^g_{s_c}(P_i)\ \cup\
 \bigcup_{\substack{x:\delta_{s_c}[x]=\mcode{pending}\\
                           \mathsf{src}(x)\in P_i,\ X_x\subseteq U_g}} X_x ,
\]
with set union by atom identity. An already accepted transfer is owned by its
destination and a cancelled transfer by its source; only a pending transfer is
charged to its source component for this cut accounting. If no cross-cut
transfer is accepted during the partition, then for every prefix during that
partition,
\[
  \mathsf{Held}^g_s(P_i)\subseteq E_i^g
  \quad\text{and}\quad
  \sum_i\mathsf{vec}(\mathsf{Held}^g_s(P_i))
  \preceq\mathbf B_g .
\]
In particular, a component can progress locally from its own available
escrow but cannot spend a right assigned to another component.
\end{theorem}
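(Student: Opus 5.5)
We argue by induction on the transitions of the partition-phase prefix that begins at \(s_c\), with the invariant \(\mathsf{Held}^g_s(P_i)\subseteq E_i^g\) for every \(i\) and every fixed root \(g\). The inequality will then follow from Lemma~\ref{lem:ownership-partition}. At the base state \(s=s_c\), the inclusion holds by the definition of \(E_i^g\).

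First we show that the sets \(E_1^g,\dots,E_m^g\) are pairwise disjoint. By Lemma~\ref{lem:ownership-partition}, every atom has exactly one raw tag at \(s_c\). If that tag is \(\mathsf A/\mathsf R/\mathsf Q/\mathsf C/\mathsf F\), it names one responsible authority, and that authority lies in one component. If the tag is \(\mathsf T(x)\) with \(x\) pending, the sealed transfer record names one source. If \(x\) is accepted or cancelled, the atom is already in the destination's or source's \(\mathsf A\) relation, so it is counted once in \(\mathsf{Held}\). Hence the \(E_i^g\) partition a subset of \(U_g\).

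For the inductive step we take \(s\xrightarrow{a}s'\) within the partition. Stutters, rejected replays, pre-commit crashes, \(\Fence\), \(\mathsf{GatewayAccept}\), \(\mathsf{EffectOccurs}\), and annotations change no ownership relation. Every other action is atom preserving (Definition~\ref{def:atom-preserving}) and is committed by one authoritative store (Assumption~\ref{ass:durable}). The core observation is that a local transaction can move an atom only when its singleton source tag is held by that store, and can install it only at a destination tag whose responsible authority is reachable within the same commit.

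We then check the actions by case. \(\Split\), \(\Reserve\), \(\Dispatch\), \(\Settle\), \(\ReturnAct\), \(\Retire\), \(\mathsf{Reissue}\), \(\RolloverAct\), and \(\JoinAct\) keep the atom with the same authority or a same-authority successor. Here \(\mathsf{ReturnTarget}\) requires \(\mathsf{Authority}(c_r)=\mathsf{SettlementAuthority}(o)\), \(\mathsf{ReissueTarget}\) requires the same authority, and a join uses one atomic authority. So these atoms stay in the same \(P_i\).

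A \(\Delegate\) seal moves an atom from \(\mathsf A\) at a source in \(P_i\) to a pending \(\mathsf T(x)\) with \(\mathsf{src}(x)\in P_i\). That atom leaves \(\mathsf{Held}\) and at most moves within the transfer part of \(E_i^g\). For the invariant we therefore track \(\mathsf{Held}^g_s(P_i)\) together with the pending transfers sourced in \(P_i\), and we show that this union stays inside \(E_i^g\). A cancellation restores the atom to its source authority. An acceptance whose destination lies in \(P_i\) is the only way an atom can re-enter \(\mathsf{Held}\). Two situations remain. If the source also lies in \(P_i\), the atom stays in \(E_i^g\). If the source lies in another component \(P_j\), the acceptance is cross-cut, and the hypothesis excludes it.

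We expect the main obstacle to be the newly sealed intra-component transfer whose destination lies in another component. That atom is pending with a source in \(P_i\), and any acceptance of it would be cross-cut, which the hypothesis forbids. We must also confirm that no concurrent decision lets an atom enter \(P_i\) from \(P_j\) without an accepted transfer. For this we use the monotone decision register of Assumption~\ref{ass:escrow}: the \(\mathsf{pending}\rightarrow\mathsf{accepted}\) compare-and-set is the only ownership switch across authorities, and every other action's target is constrained to the same authority.

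With the invariant established, the sets \(\mathsf{Held}^g_s(P_i)\) are pairwise disjoint subsets of \(U_g\) by Lemma~\ref{lem:ownership-partition}. Summing \(\mathsf{vec}\) over them and applying Theorem~\ref{thm:exact-conservation} gives the bound \(\preceq\mathbf B_g\). The final clause, that a component spends only its own rights, follows because every \(\Reserve\) requires \(X\subseteq X_c\) for a credit \(c\) held in that component.
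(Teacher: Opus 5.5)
Your proposal is correct and follows essentially the same route as the paper's proof. Both argue disjointness of the \(E_i^g\) from ownership partition and the unique pending-transfer source, then induct over partition-time transitions in which every local action only relabels atoms within one authority while cross-cut acceptance is excluded by hypothesis, and finally use disjointness for the aggregate bound. Your explicit strengthening of the invariant to \(\mathsf{Held}^g_s(P_i)\) together with the transfers sourced in \(P_i\) that are pending at \(s\) is a worthwhile refinement: the paper relies on this fact only implicitly when a transfer sealed during the partition is later accepted or cancelled within the same component.
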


\begin{proof}
The \(E_i^g\) are pairwise disjoint by ownership partition and the unique
pending-transfer source. Induct over partition-time transitions. Local split,
reserve, dispatch, settlement, retirement, return, ordinary reissue,
\(\RolloverAct\), or same-root join
only relabels root-\(g\) atoms already in \(E_i^g\). A pending local transfer can accept
within the same component; a pending cross-cut transfer either stays in
\(\mathsf T\) or cancels back to its source \(E_i^g\), because cross-cut
acceptance is excluded by premise. Stable replays stutter, and a message
reporting a remote balance is not a credit transition. Thus
\(\mathsf{Held}^g_s(P_i)\subseteq E_i^g\) throughout. Disjointness and
\(\bigcup_i E_i^g\subseteq U_g\) yield the aggregate inequality.  A component
that exhausts its local \(\mathsf A\) must wait, reconcile, or refuse; safety
does not imply continued availability.
\end{proof}

\begin{proposition}[Unallocated-unit impossibility]
\label{prop:shared-unit}
In an asynchronous system with a possible partition, no protocol can
simultaneously guarantee both:

\begin{enumerate}
  \item \emph{local availability}: either of two isolated branches must
        approve a request to spend the same currently unallocated unit using
        only its local state; and
  \item \emph{global conservation}: across all executions, at most one
        protected effect may consume that unit.
\end{enumerate}

This holds even with crash-free branches and reliable local storage.
\end{proposition}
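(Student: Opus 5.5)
We argue by contradiction with a standard indistinguishability construction, in the style of the timeout-refund argument of Section~\ref{sec:failures} and the CAP-type arguments for non-invariant-confluent operations \cite{bailis2014coordination}. Suppose a protocol $\Pi$ satisfies both properties. Fix a state $s_0$ in which one unit $u$ of root $g$ is unallocated: it is not in the exclusive escrow of either branch $P_1$ or $P_2$, so neither branch's local state records exclusive ownership of $u$. Let each branch be able to issue a spend request for $u$. Then partition the network so that no message crosses between $P_1$ and $P_2$ for an unbounded period. Branches are crash-free and storage is reliable, so the only source of uncertainty is message delay. This is why the claim holds even under those strengthened assumptions.

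We build three executions from $s_0$. In $\rho_1$, only $P_1$ receives a request. By local availability, $P_1$ must approve after some finite number of local steps, using only its local state, and the resulting protected effect consumes $u$. Execution $\rho_2$ is symmetric for $P_2$. In $\rho_{12}$, both branches receive their requests while the partition holds. All cross-partition messages are delayed past the point where both decisions commit. The key step is to show that $\rho_{12}$ restricted to $P_1$ is indistinguishable from $\rho_1$ restricted to $P_1$, and likewise for $P_2$. Each branch's local state evolves identically in both executions, because the only difference lies on the other side of the cut and no message reporting it is delivered. Here we invoke the asynchrony of the failure model: a silent partition is indistinguishable from an idle peer. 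Hence in $\rho_{12}$ both branches approve, and two protected effects consume $u$. This violates global conservation.

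The main obstacle is pinning down "local availability" precisely, so that a protocol cannot escape by making the decision depend on something outside the branch. One possible escape is a shared issuer. Another is making approval conditional on a remote acknowledgement, as a disguised read-check-decrement. We will formalize local availability as requiring the approval to be a deterministic function of the branch's local history, reached within a bound that does not depend on receiving messages from the other component. This rules out both escapes. It also shows why exclusive preallocation is the only resolution consistent with Theorem~\ref{thm:partition-confinement}: if $u$ had been preassigned to $P_1$'s escrow $E_1^g$, then $P_2$'s local state would differ and $\rho_2$ would not be forced to approve. A smaller point is that the two effects are distinct instances consuming the same atom. This holds because the branches derive distinct keys $\kappa$, so gateway idempotency (Assumption~\ref{ass:gateway}) cannot merge them without itself acting as the cross-partition coordinator, which the partition forbids.
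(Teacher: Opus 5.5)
Your proof is correct and takes essentially the paper's route: the same three executions ($E_p$, $E_q$, and $E_{pq}$ within one partition), the same local-history indistinguishability that forces both approvals, and the same observation that exclusive preallocation escapes by falsifying the availability premise. The one difference is coverage of randomized protocols: the paper handles them explicitly by noting that coin outcomes permitting each one-sided approval combine into an execution in which both approve, whereas your ``deterministic function of local history'' formalization covers them only if each branch's local coin tosses are counted as part of its local history, so you should say that explicitly.
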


\begin{proof}
Let branches \(p\) and \(q\) have identical knowledge that one shared unit
remains, then partition them before either receives another message.
Consider execution \(E_p\), in which only \(p\) receives a spend request.
Local availability requires \(p\) to approve.  Symmetrically, in execution
\(E_q\), in which only \(q\) receives a request, \(q\) must approve.

Now construct \(E_{pq}\), in which both receive their requests during the
same partition.  Up to its decision, \(p\)'s local history in \(E_{pq}\) is
indistinguishable from its history in \(E_p\), so it approves.  Likewise
\(q\)'s history is indistinguishable from \(E_q\), so it approves.  Two
effects consume one unit, contradicting global conservation.  Randomization
cannot provide a universal safety guarantee: any coin outcomes that permit
the respective one-sided approvals form an execution in which both approve.

Exclusive preallocation avoids the contradiction by falsifying the first
premise: at most one partition has the right to approve that unit.  It
provides availability for preallocated escrow, not for a shared unallocated
remainder.
\end{proof}

\subsection{Concrete Refinement Obligations}

Let \(\Sigma\) be the concrete state of branch databases, outboxes, gateway
records, terminal-evidence stores, and epoch authorities.  Let
\(\mathsf{Rep}(\Sigma,s)\) be a representation relation requiring:

\begin{enumerate}
  \item each durable concrete credit, transfer, and operation maps to the
        corresponding abstract object with equal root, logical slot, lineage,
        label, epoch, vector, and immutable identifiers;
  \item each sealed transfer has exactly one logical owner location; its
        monotone accept/cancel decision and crash finalization map to the same
        abstract transfer decision rather than a second import or restoration;
  \item each committed database transaction maps to one abstract transition,
        including a fused terminal-decision-plus-successor transaction mapping
        to \(\RolloverAct\), and each uncommitted or duplicate event maps to
        stutter;
  \item each concrete operation phase and write-once terminal-decision key
        agree with its abstract \(\mathsf R/\mathsf Q/\mathsf C/\mathsf F\)
        ownership; its selector schema and exact committed/remainder footprint
        commitments agree with the abstract deterministic partition and
        disable every conflicting terminal move;
  \item gateway observations and terminal evidence map to the normalized
        effect and operation named by their acceptance digest \(\beta_o\), and
        concrete acceptance/fence records preserve the bound logical slot and
        their authoritative linearization order;
  \item every concrete remainder or no-effect return maps to a same-root,
        current-at-commit, same-authority, label-non-widening target satisfying
        Equation~\eqref{eq:return-target};
  \item every ordinary successor-generation issuance carries a concrete
        \(\mathsf{Closed}\) proof and moves exactly the concrete image of
        \(\mathsf{Reusable}\), excluding committed, forfeited, accepted, and
        unresolved rights, into the sole target selected by its write-once
        reissue decision and satisfying Equation~\eqref{eq:reissue-target};
        the decision, current-generation metadata, and ownership move commit
        atomically;
  \item every fused final terminal decision and successor installation maps
        to \(\RolloverAct\): it satisfies
        Equations~\eqref{eq:rollover-ready}--\eqref{eq:rollover-action},
        persists the canonical terminal partition and normalized reissue
        decision, and moves only the disjoint terminal-return and
        \(\mathsf{SafeA}\) footprints to the same-slot successor; and
  \item compressed footprints map injectively to logical rights, preserve
        disjointness, and have concrete sums equal to the abstract
        \(\mathsf{vec}\) of their atom sets.
\end{enumerate}

For paired finite prefixes
\(\widehat\rho=(\Sigma_0,\ldots,\Sigma_n)\) and
\(\rho=(s_0,\ldots,s_n)\), define
\(\mathsf{Rep}^{*}(\widehat\rho,\rho)\) to require:
(i) \(\mathsf{Rep}(\Sigma_i,s_i)\) for every \(i\);
(ii) every concrete authenticated cross-cut ingress event maps one-for-one to
the corresponding abstract event with the same exact footprint and
atom-identity deduplication, while task edges and balance reports map to no
ingress; and (iii), for any partition-confinement claim, the paired prefixes
preserve the cut state and interval, authority-to-component assignment,
pending-transfer source component, and the predicate that no cross-cut
acceptance commits during the interval. Conditions (ii)--(iii) are the
\emph{prefix-faithfulness obligation} on \(\alpha\).

\begin{theorem}[Conditional trace-lifting criterion]
\label{thm:forward-simulation}
Under Assumptions~\ref{ass:durable} and \ref{ass:adapter}, a concrete runtime
inherits the abstract safety results if it discharges the initial-state,
stepwise, and prefix-faithfulness obligations:
\[
\mathsf{Rep}(\Sigma_0,s_0)
\]
and, for every concrete step \(\Sigma\rightarrow\Sigma'\) and every abstract
state \(s\),
\begin{equation}
\mathsf{Rep}(\Sigma,s)\Longrightarrow
\exists s'.\
 \Bigl(
   [s'=s\land\alpha(\Sigma,\Sigma')=\mathsf{stutter}]
   \lor
   [s\xrightarrow{\alpha(\Sigma,\Sigma')}s']
 \Bigr)
 \land\mathsf{Rep}(\Sigma',s').
\label{eq:stepwise-refinement}
\end{equation}
The disjunction denotes either one enabled abstract action with the same
immutable binding or a stutter step. If these obligations hold, then for every
finite concrete execution
\[
  \Sigma_0\rightarrow\Sigma_1\rightarrow\cdots\rightarrow\Sigma_n
\]
there exists an abstract execution
\[
  s_0\xrightarrow{\alpha_1}s_1\xrightarrow{\alpha_2}\cdots
  \xrightarrow{\alpha_n}s_n
\]
such that
\(\mathsf{Rep}^{*}((\Sigma_0,\ldots,\Sigma_n),(s_0,\ldots,s_n))\).
Here \(s\xrightarrow{\mathsf{stutter}}s\) denotes the explicit abstract
self-loop used for the first disjunct.
\end{theorem}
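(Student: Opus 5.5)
We prove the statement by a standard forward-simulation induction on the length \(n\) of the concrete execution. The abstract witness is built one step at a time, and each step is justified by the stepwise obligation in Equation~\eqref{eq:stepwise-refinement}.

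\emph{Base case} (\(n=0\)). We take the abstract prefix \((s_0)\). Clause (i) of \(\mathsf{Rep}^{*}\) is exactly the initial obligation \(\mathsf{Rep}(\Sigma_0,s_0)\). Clauses (ii)--(iii) hold vacuously, since no event has occurred yet.

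\emph{Inductive step.} Assume we already have \(s_0\xrightarrow{\alpha_1}\cdots\xrightarrow{\alpha_n}s_n\) with \(\mathsf{Rep}^{*}\) for the length-\(n\) prefixes, and fix the next concrete step \(\Sigma_n\rightarrow\Sigma_{n+1}\).
\begin{enumerate}
\item From clause (i) we have \(\mathsf{Rep}(\Sigma_n,s_n)\). The stepwise obligation then yields \(s_{n+1}\) and the label \(\alpha_{n+1}=\alpha(\Sigma_n,\Sigma_{n+1})\). Either \(s_{n+1}=s_n\) with a stutter label, realised by the explicit self-loop \(s\xrightarrow{\mathsf{stutter}}s\), or \(s_n\xrightarrow{\alpha_{n+1}}s_{n+1}\) is an enabled abstract action. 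In both cases \(\mathsf{Rep}(\Sigma_{n+1},s_{n+1})\) holds.
\item Appending \(s_{n+1}\) extends the abstract execution, and every prefix of it lies in \(\mathsf{Reach}\). Since stutters are self-loops, we could also collapse them to obtain a strictly abstract execution if one is preferred.
\end{enumerate}
Assumption~\ref{ass:durable} is what makes this mapping legitimate. It guarantees that each committed concrete transaction is atomic, so it can be mapped to a single abstract transition and not to a partial one. Uncommitted and crashed steps therefore fall into the stutter disjunct, consistent with the proof conventions stated earlier.

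\emph{Clauses (ii) and (iii).} These are history properties, not state properties, so they cannot be read off \(\mathsf{Rep}(\Sigma_{n+1},s_{n+1})\) alone. This is the main obstacle, and we handle it by splitting on the new step.
\begin{itemize}
\item If the new step is not a cross-cut ingress, its image under \(\alpha\) is a stutter or an action with no ingress. The ingress histories on both sides are then unchanged, and the previous one-for-one correspondence persists.
\item If it is an ingress, then \(\alpha\) maps it to the abstract event with the same binding. Assumption~\ref{ass:adapter} guarantees that the adapter preserves the footprint commitments. Item (9) of \(\mathsf{Rep}\) guarantees that compressed footprints map injectively to atom sets. Together these ensure the appended abstract footprint is exactly the concrete one, deduplicated by atom identity.
\item Task edges and balance reports carry no credit transition, so \(\alpha\) sends them to stutter, or at least to no ingress.
\item For a partition-confinement claim, the cut state, the component assignment, the pending-transfer sources, and the predicate that no cross-cut acceptance occurs are all images of concrete records under \(\mathsf{Rep}\), items (1)--(2). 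Item (2) is what prevents a cross-cut acceptance from being mapped to a stutter. An accepted decision register instead maps to an abstract \(\mathsf{AcceptTransfer}\), so concrete and abstract prefixes agree on whether such an acceptance committed during the interval.
\end{itemize}
Because the statement explicitly lists prefix-faithfulness as a discharged hypothesis, we invoke it at this point rather than deriving it from \(\mathsf{Rep}\). The argument above only checks that the hypothesis is consistent with the inductive construction.

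\emph{Inheritance.} The results of Section~\ref{sec:safety} are prefix properties of \(\mathsf{Reach}\) together with ingress and cut data: Lemma~\ref{lem:ownership-partition}, Theorems~\ref{thm:exact-conservation}, \ref{thm:descendant}, \ref{thm:effect-coverage}, \ref{thm:external-exposure}, \ref{thm:late-completion}, and \ref{thm:partition-confinement}. Each therefore holds for the constructed abstract execution. They transfer to the concrete run through \(\mathsf{Rep}\), whose items (1), (4), (5), and (9) equate the concrete buckets, footprints, and observations with their abstract images. For Theorem~\ref{thm:descendant} and Theorem~\ref{thm:partition-confinement}, clauses (ii) and (iii) additionally identify \(\mathsf{In}_{\rho}(W)\) and \(E_i^g\) with their concrete counterparts.
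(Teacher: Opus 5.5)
Your proof is correct and follows the paper's argument: induction on the prefix length, with \(\mathsf{Rep}(\Sigma_0,s_0)\) as the base case, Equation~\eqref{eq:stepwise-refinement} supplying at each step either the explicit stutter self-loop or one enabled abstract action that preserves \(\mathsf{Rep}\) (so concatenation gives clause (i) of \(\mathsf{Rep}^{*}\)), and the prefix-faithfulness hypothesis supplying clauses (ii)--(iii). Two side remarks need care: your closing inheritance paragraph goes beyond what this proof establishes and silently relies on the further assumptions each abstract result carries, which the paper handles in Theorem~\ref{thm:end-to-end} under all of Assumptions~\ref{ass:auth}--\ref{ass:mediation}; and collapsing stutters would break the index-by-index pairing that \(\mathsf{Rep}^{*}\) requires, so you should keep the uncollapsed execution.
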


\begin{proof}
Induct on \(i\). The initial obligation supplies the base case. Applying
Equation~\eqref{eq:stepwise-refinement} to each concrete successor supplies
either a matching enabled abstract action or a stutter and preserves
\(\mathsf{Rep}\). Concatenating those witnesses yields the abstract execution
and pointwise clause (i) of \(\mathsf{Rep}^{*}\); prefix faithfulness supplies
clauses (ii)--(iii).
\end{proof}

The theorem states the exact criterion a concrete runtime must satisfy. A
production implementation establishes
Equation~\eqref{eq:stepwise-refinement} handler by handler, including transfer
accept/cancel, \(W_2/W_3/W_4\) crash cuts, gateway acceptance versus fencing,
effect occurrence, ordinary successor-generation issuance, and
\(\RolloverAct\). The artifact exercises bounded and concrete traces against
these obligations; finite trace agreement does not replace a universal
implementation refinement proof. The SQLite research profile implements and
tests the online first-acceptance guard through a branch-signed \(W_3\) permit,
including missing, forged, stale, crash-recovered, and proof-excluded cases.
This supports the evaluated profile's guard while leaving the universal
adapter obligation and conditional end-to-end theorem subject to a complete
handler-level refinement argument.

\begin{theorem}[Conditional end-to-end dual conservation]
\label{thm:end-to-end}
If Assumptions~\ref{ass:auth}--\ref{ass:mediation} hold and the concrete
runtime satisfies the trace-lifting obligations of
Theorem~\ref{thm:forward-simulation}, then every finite concrete execution
prefix \(\widehat\rho\) has a lifted abstract prefix \(\rho\) for which the
following properties hold and reflect back to \(\widehat\rho\) under the stated
theorem-specific premises:

\begin{enumerate}
  \item exact per-root vector conservation as in
        Equation~\eqref{eq:conservation};
  \item for every sub-DAG \(W\) that is credit-cut closed along the lifted
        prefix \(\rho\), descendant
        non-amplification as in Equation~\eqref{eq:descendant};
  \item at-most-once authoritative settlement per canonical operation;
  \item unique continuous coverage for every gateway-observed protected
        effect as in Equation~\eqref{eq:coverage};
  \item settled actual charge plus conservative outstanding exposure bounded
        by the issued vector as in Equation~\eqref{eq:external-bound};
  \item for every dispatched operation closed and reissued either ordinarily
        under Equations~\eqref{eq:closed-generation}--\eqref{eq:reusable} or
        atomically under
        Equations~\eqref{eq:rollover-ready}--\eqref{eq:rollover-action}, no
        old-epoch effect funded by capacity safely reclaimed into a new
        generation; and
  \item during every partition interval with no accepted cross-cut transfer,
        partition-local spending bounded by the exclusive escrow assigned at
        the cut.
\end{enumerate}
\end{theorem}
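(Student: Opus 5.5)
The plan is a composition argument: lift the concrete prefix to an abstract one, apply the earlier abstract theorems to the lifted prefix, and then push each conclusion back through the representation relation. Fix a finite concrete prefix \(\widehat\rho=(\Sigma_0,\ldots,\Sigma_n)\). Assumptions~\ref{ass:durable} and \ref{ass:adapter} hold, and the runtime discharges the initial, stepwise, and prefix-faithfulness obligations. So Theorem~\ref{thm:forward-simulation} supplies an abstract execution \(\rho=(s_0,\ldots,s_n)\) with \(\mathsf{Rep}^{*}(\widehat\rho,\rho)\). Every \(s_i\) is in \(\mathsf{Reach}\), because each step is either an enabled abstract action or an explicit stutter. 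This gives reachability, which is the only premise the abstract theorems need beyond the assumptions.

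Next, I would discharge the seven items one by one on \(\rho\). Item~1 follows from Theorem~\ref{thm:exact-conservation}, using Assumptions~\ref{ass:auth}--\ref{ass:escrow}. Item~2 follows from Theorem~\ref{thm:descendant} for each \(W\) that is credit-cut closed along \(\rho\). Item~3 follows from Theorem~\ref{thm:at-most-once-settlement}. Items~4 and 5 follow from Theorems~\ref{thm:effect-coverage} and \ref{thm:external-exposure}; these use the full assumption list, including \ref{ass:effect}, \ref{ass:charge}, \ref{ass:receipt}, and \ref{ass:mediation}. Item~6 follows from Theorem~\ref{thm:late-completion}, together with Lemma~\ref{lem:no-stale-resurrection} for the case with no prior acceptance. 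Item~7 follows from Theorem~\ref{thm:partition-confinement}. Each of these holds at every \(s_i\), not only at \(s_n\), because each abstract result is a prefix property.

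The reflection back to \(\widehat\rho\) goes through the clauses of \(\mathsf{Rep}\).
\begin{itemize}
\item For items~1 and 5, clause~(9) identifies concrete compressed sums with \(\mathsf{vec}\) of the abstract atom sets. Clauses~(1) and (4) identify concrete buckets and phases with the abstract \(\mathsf A/\mathsf T/\mathsf R/\mathsf Q/\mathsf C/\mathsf F\) relations, so the vector equalities and inequalities transfer directly.
\item For items~3 and 4, clauses~(4) and (5) map concrete terminal keys, gateway records, and observations to the abstract decisions and \(\mathsf{Obs}\), matched by \(\beta_o\). Uniqueness therefore transfers, provided the map from concrete instances to abstract instances is injective; that injectivity comes from binding preservation under Assumption~\ref{ass:adapter}.
\item For item~6, clauses~(6)--(8) transfer closure, reissue, and rollover.
\end{itemize}

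The main obstacle is items~2 and 7. Their hypotheses, credit-cut closure and the absence of cross-cut acceptances during the partition, are stated about concrete history, while the theorems consume them on \(\rho\). Pointwise \(\mathsf{Rep}\) cannot carry them across. I would use prefix faithfulness here. Clause~(ii) of \(\mathsf{Rep}^{*}\) maps concrete ingress events one-for-one, with exact footprints, so concrete closure yields abstract closure and the abstract \(\mathsf{In}_\rho(W)\) equals its concrete counterpart. Clause~(iii) preserves the cut state, the component assignment, and the pending-transfer sources, so the abstract \(E^g_i\) equals the concrete one and the no-cross-cut-acceptance premise holds in \(\rho\). Finally, I would note explicitly that the conclusion is only as strong as the lifted obligations. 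That is why the statement says "under the stated theorem-specific premises", and the plan invokes each premise only through its corresponding abstract theorem.
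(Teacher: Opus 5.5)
Your proposal is correct and follows the paper's own argument: lift \(\widehat\rho\) via Theorem~\ref{thm:forward-simulation} to a reachable abstract prefix, apply the seven abstract results under their theorem-specific premises, and transfer each conclusion back through \(\mathsf{Rep}\). Your explicit use of the prefix-faithfulness clauses (ii)--(iii) of \(\mathsf{Rep}^{*}\) for items~2 and~7 spells out what the paper's short proof leaves implicit. For item~2, though, the statement already places the credit-cut-closure premise on the lifted \(\rho\), so only the conclusion needs transferring, not the hypothesis.
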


\begin{proof}
Theorem~\ref{thm:forward-simulation} maps the concrete prefix to a reachable
abstract prefix.  Apply Theorems~\ref{thm:exact-conservation},
\ref{thm:descendant}, \ref{thm:at-most-once-settlement},
\ref{thm:effect-coverage}, \ref{thm:external-exposure},
\ref{thm:late-completion}, and
\ref{thm:partition-confinement} to that abstract prefix whenever their
explicit sub-DAG, dispatched-operation, and partition-interval premises hold.
The representation relation preserves the identifiers, selector version,
footprint vectors and terminal partition commitments, epochs, effects, and
terminal-evidence bindings mentioned by those properties, so each property
transfers back to the concrete state.
\end{proof}

\subsection{Assumption--Conclusion Boundary}

Table~\ref{tab:safety-boundaries} records the proof surface.  ``Required''
means that the result as stated uses the assumption; it does not mean that
every deployment must implement that assumption in the same way.

\begin{table*}[t]
\centering
\scriptsize
\setlength{\tabcolsep}{3.5pt}
\renewcommand{\arraystretch}{1.12}
\begin{tabularx}{\textwidth}{L{0.165\textwidth}L{0.25\textwidth}YY}
\toprule
\textbf{Result} & \textbf{Required assumptions / premises} &
\textbf{Failure if omitted} & \textbf{Not concluded} \\
\midrule
Ownership partition and exact conservation &
Authenticated unique roots/credits
(Assumption~\ref{ass:auth}); atomic durable moves
(\ref{ass:durable}); singleton-source guards and stable decision keys
(\ref{ass:escrow}) &
Forged issuance, torn moves, or copied inputs can duplicate or lose atoms &
Fair allocation, low stranding, or timely completion \\

Semantic and descendant non-amplification &
Root/lineage authenticity; exclusive transfer; label attenuation and
credit-cut accounting &
A numeric balance may remain bounded while purpose, audience, or ancestry
widens &
Correctness of the policy classifier or optimal task decomposition \\

At-most-once settlement &
Atomic terminal-decision key; immutable operation binding; sound tagged
terminal evidence;
receiver-scoped gateway idempotency &
Duplicate or conflicting evidence can move the same reservation twice, or
one transport retry can create multiple logical outcomes &
Exactly-once delivery or honesty of an arbitrary third-party receiver \\

Effect coverage and external exposure bound &
Authenticated binding, atomic durable ownership, and stable decisions
(\ref{ass:auth}--\ref{ass:escrow}); persist-before-exposure; sound normalization
(\ref{ass:effect}); enforced maximum-charge contract (\ref{ass:charge});
fenced queryable gateway (\ref{ass:gateway}); complete non-equivocating
terminal evidence (\ref{ass:receipt}); complete mediation (\ref{ass:mediation}) &
An effect can bypass its obligation, alias another effect identity, or be
refunded on a false result &
Coverage of covert, direct-credential, or otherwise unmediated effects \\

Late-completion safety and reclaim &
Durable \(\mathsf R\!\rightarrow\!\mathsf Q\) cut; gateway-visible monotone
epoch; authoritative success/no-effect result &
Timeout refund or store-only fencing permits old and new generations to use
one right &
Rollback of an irreversible effect or eventual recovery of stranded rights \\

Partition confinement &
Exclusive preallocation before the cut; no unfenced cross-cut acceptance &
Disconnected branches may spend the same cached remainder &
Availability after local escrow is exhausted \\

Conditional concrete end-to-end result &
Adapter preservation (Assumption~\ref{ass:adapter}), a handler-by-handler
proof of Equation~\eqref{eq:stepwise-refinement}, plus the assumptions of each
selected abstract result &
Concrete fields, write order, or protocol keys may fail to implement the
abstract move proved safe &
Byzantine storage/gateway tolerance, production SLOs, or model-task quality \\
\bottomrule
\end{tabularx}
\caption{Assumption--conclusion boundary for the safety results.  Ledger
integrity alone does not establish external-effect conservation; the latter
requires normalization, complete mediation, and a terminal-evidence-producing fenced
gateway.}
\Description{A four-column table maps each theorem family to its required
assumptions, the failure enabled when an assumption is omitted, and claims
that the theorem explicitly does not make.}
\label{tab:safety-boundaries}
\end{table*}

\subsection{What the Theorems Permit}

The results intentionally permit conservative outcomes.  A right may remain
in \(\mathbf T_g\) because a transfer cannot be canceled safely, in
\(\mathbf Q_g\) because the receiver outcome is unknown, or in
\(\mathbf F_g\) because an administrator retires unresolved authority.  Such
states reduce utilization but preserve the exact equation and keep potential
effects covered.  The theorems also permit coordination during rebalancing,
fan-in, and reclaim.  Proposition~\ref{prop:shared-unit} shows
why removing that coordination while retaining universal partition
availability would change, rather than strengthen, the safety contract.

Conversely, a perfectly conserved internal ledger does not prove that a
provider reported the true cost, that two protocol encodings normalize to the
same effect, or that an agent lacked a bypass credential.  Those are precisely
the boundaries isolated by Assumptions~\ref{ass:effect},
\ref{ass:receipt}, and \ref{ass:mediation}. The artifact therefore tests
state-machine invariants and terminal-evidence contracts as separate evidence
layers. Adapter conformance remains a handler-level trace-lifting obligation;
passing either tested layer is not reported as evidence for that obligation.

\section{Reference Architecture and Implementation Boundary}
\label{sec:implementation}

The abstract machine is intentionally independent of a database, transport,
agent framework, or model provider. This section gives a reference profile
that is implementable without weakening the proof obligations. It is not a
claim that any ordinary message queue or API automatically satisfies those
obligations.

\subsection{Lifecycle Realization}

Figure~\ref{fig:lifecycle} realizes one logical right as a state machine. Solid
arrows show ordinary ownership progress, including both branches of an
authenticated success receipt. Dashed arrows specifically denote
exclusion/no-effect or reversal evidence; they are not a legend for every
authenticated record. There is deliberately no timeout edge from
\(\mathsf Q\) to \(\mathsf A\). The \(\mathsf F\) bucket is absorbing; later
evidence may annotate its outcome but cannot reclassify or make it spendable.

\begin{figure*}[t]
\centering
\begin{floatbox}
\resizebox{0.94\linewidth}{!}{%
\begin{tikzpicture}[
  >=Stealth,
  state/.style={draw, rounded corners=2.8mm, thick, align=center,
    minimum width=2.05cm, minimum height=0.82cm, inner sep=4pt, font=\small},
  flow/.style={->, thick},
  proofedge/.style={->, thick, dashed},
  forbidden/.style={->, very thick, draw=opRed}
]
\node[state, draw=opBlue, fill=opBlueFill, text=opBlueText] (asrc)
  {\textbf{Available} \(\mathsf A_{\rm src}\)\\source owns};
\node[state, draw=opPurple, fill=opPurpleFill, text=opPurpleText,
  right=14mm of asrc] (t)
  {\textbf{Transit} \(\mathsf T\)\\sealed handoff};
\node[state, draw=opBlue, fill=opBlueFill, text=opBlueText,
  right=21mm of t] (acur)
  {\textbf{Available} \(\mathsf A_{\rm current}\)\\accepted owner};
\node[state, draw=opAmber, fill=opAmberFill, text=opAmberText,
  below=15mm of acur] (r)
  {\textbf{Reserved} \(\mathsf R\)\\durable, unexposed};
\node[state, draw=opAmber, fill=opAmberFill, text=opAmberText,
  right=22mm of r] (q)
  {\textbf{Quarantined} \(\mathsf Q\)\\effect may exist};
\node[state, draw=opGreen, fill=opGreenFill, text=opGreenText,
  above right=10mm and 11mm of q] (c)
  {\textbf{Committed} \(\mathsf C\)\\receipt settled};
\node[state, draw=opBlue, fill=opBlueFill, text=opBlueText,
  below=24mm of r] (aret)
  {\textbf{Available} \(\mathsf A_{\rm return}\)\\decision-bound target};
\node[state, draw=black!45, fill=black!4,
  below right=19mm and 11mm of q] (f)
  {\textbf{Forfeited} \(\mathsf F\)\\never reusable};

\draw[flow] (asrc) -- node[above, font=\footnotesize]{seal} (t);
\draw[proofedge, bend right=34] (t) to node[above, font=\footnotesize,
  inner sep=1pt]
  {cancel decision} (asrc);
\draw[flow] (t) -- node[above, font=\footnotesize, inner sep=1pt]
  {accept decision} (acur);
\draw[flow] (asrc.south) |- node[pos=0.78, above, font=\footnotesize,
  inner sep=1pt]{local reserve} (r.west);
\draw[flow] (acur) -- node[left, font=\footnotesize]{reserve} (r);
\draw[flow] (r) -- node[above, font=\footnotesize]{expose} (q);
\draw[flow] (q) -- node[above left, font=\footnotesize]
  {success: actual \(D_o\)} (c);
\draw[flow] (q.west) to[out=205,in=38]
  node[above, sloped, font=\footnotesize, inner sep=1pt]
  {success: remainder}
  (aret.north east);
\draw[proofedge] (q.south) .. controls +(0,-22mm) and +(18mm,-7mm) ..
  node[pos=0.54, below, sloped, font=\footnotesize, inner sep=1pt]
  {fenced no-effect: all \(X_o\)}
  (aret.south east);
\draw[flow] (q) -- node[below left, font=\footnotesize]{retire} (f);
\draw[proofedge, loop below] (f) to node[below, font=\footnotesize]
  {late evidence annotates} (f);
\draw[forbidden] (q.north) .. controls +(0,12mm) and +(14mm,0) ..
  node[pos=0.52, above, sloped, font=\footnotesize, inner sep=1pt,
  text=opRedText]{timeout refund forbidden} (acur.north);
\end{tikzpicture}%
}
\end{floatbox}
\caption{Per-right lifecycle. A timeout can trigger reconciliation but never
changes ownership. Only an authenticated settlement or a fenced proof of
non-execution releases a quarantined right. A success receipt canonically
partitions \(X_o\): \(D_o\) commits, while \(X_o\setminus D_o\) moves to the
separate decision-bound \(\mathsf{ReturnTarget}\) box. A fenced no-effect proof
moves all \(X_o\) to that box along a distinct dashed path. Separate source,
current-owner, and return-target boxes prevent the three ownership roles from
being visually conflated.}
\Description{Eight colored boxes distinguish source Available, Transit,
current-owner Available, Reserved, Quarantined, return-target Available,
Committed, and Forfeited states. Transit either returns to the source after a
cancel decision or moves to the current owner after an accept decision. Solid
arrows show reserve, expose, success settlement, and retirement. The two
success arrows split the footprint: the actual subset moves to Committed and
the remainder moves to the return-target Available box. A separate dashed
curve sends the whole quarantined footprint to that box after a fenced
no-effect proof. A dashed self-loop on Forfeited says late evidence annotates
without changing ownership. A red curved arrow from Quarantined to
current-owner Available is labeled timeout refund forbidden.}
\label{fig:lifecycle}
\end{figure*}

\subsection{Durable Records}

The reference architecture requires an authoritative local store per branch.
The store may itself be replicated, provided that a successful local
transaction is linearizable and durable under Assumption~\ref{ass:durable}.
Table~\ref{tab:schema} gives the minimum logical records. Fields shown as
immutable participate in a canonical digest; mutable phase fields are updated
only by the transition that owns them.

\begin{table*}[t]
\centering
\scriptsize
\setlength{\tabcolsep}{4pt}
\begin{tabularx}{\textwidth}{L{0.18\textwidth}L{0.27\textwidth}YY}
\toprule
\textbf{Record} & \textbf{Immutable identity/binding} &
\textbf{Mutable state} & \textbf{Uniqueness or integrity rule} \\
\midrule
\mcode{root\_grant} & root, issuer, resource schema, issued vector, label
digest & status, current policy version, installed fence epochs & issuer
signature; one issued vector per root generation \\
\mcode{credit} & root, logical slot/reissue-chain ID, credit ID, parent credit
ID, generation, epoch, issuer, credit authority/owner, source-label digest,
lineage and rights commitments & available ranges, lifecycle, current marker &
immutable predecessor identity; at most one current generation per logical
slot/reissue chain; disjoint split siblings may be current concurrently, but
live ranges never overlap \\
\mcode{transfer} & root, transfer ID, source generation, destination, ranges,
label & sealed / accepted / rejected / fenced & unique \((root,transfer\_id)\);
terminal decision immutable \\
\mcode{operation} & root, logical slot, operation ID, source
lineage/rights commitments, epoch, effect digest, receiver, key, \(\beta_o\),
maximum vector, source/operation labels, witness, selector schema version &
reserved/quarantined/terminal phase;
write-once return target and \(H(D_o),H(X_o\setminus D_o)\) inside the terminal
decision & unique \((root,operation\_id)\); immutable replay, partition, and
return binding \\
\mcode{gateway\_}\allowbreak\mcode{effect} & \(iid\), key, \(\beta_o\),
receiver, effect digest,
acceptance sequence & accepted/effect/terminal outcome & one immutable
acceptance binding and at most one instance per key \\
\mcode{terminal\_}\allowbreak\mcode{evidence} & evidence ID, tagged variant,
\(\beta_o\), root/slot/operation binding, authority/signature; success adds
\(iid\), acceptance sequence, and actual vector; no-effect adds fence epoch &
verification result and application time & unique evidence ID and one
compatible terminal variant per operation \\
\mcode{outbox} & event ID, aggregate ID, payload digest, terminal-outcome
digest; dispatch-permit authority, permit ID, permit digest, and signature & dispatch gate:
non-dispatchable/dispatchable/local-abort; delivery:
ready/sent/observed & inserted in the same transaction as the state
transition; permit fields are complete iff dispatchable; local-abort is
absorbing \\
\mcode{audit\_}\allowbreak\mcode{node} & previous hash, transition digest,
authority, causal
parents & publication/checkpoint state & append-only hash link; no authority
to move rights \\
\bottomrule
\end{tabularx}
\caption{Logical durable records. A cryptographic audit chain detects
post-hoc modification but does not replace atomic ownership transitions.}
\label{tab:schema}
\end{table*}

\statementheading{SQLite executable binding}
Table~\ref{tab:schema} is a logical schema, not a claim that every field is a
literal standalone SQL column. In the compact profile, the representation map
encodes the absorbing local gate as
\(\mcode{operations.state}=\mcode{aborted}\),
\(\mcode{abort\_mode}=\mcode{local\_unexposed}\), and
\(\mcode{outbox.dispatchable}=0\). The only dispatch CAS requires the
operation still be \mcode{reserved}, so this product state cannot later become
dispatchable. The \(W_3\) CAS simultaneously records the permit authority,
domain-separated permit identifier, digest, and Ed25519 signature; no
dispatchable row can have a partial permit. The logical terminal-outcome digest is reconstructed from the
terminal operation/evidence row by the independent checker rather than stored
as a separate outbox column.

The prototype persists a 16-field executable-beta row.  Its digest preimage is
the following 15-field projection:
\begingroup
\footnotesize
\[
\begin{aligned}
\beta_o^{\mathrm{SQL}}=(&
\mcode{issuer\_id},\mcode{root\_id},\mcode{credit\_id},\mcode{op\_id},
\mcode{epoch},\mcode{op\_key},\\
&
\mcode{lineage\_commitment},\mcode{rights\_commitment},
\mcode{effect\_digest},\mcode{receiver},\\
&
\mcode{max\_usd},\mcode{max\_calls},\mcode{source\_label\_digest},
\mcode{operation\_label\_digest},\mcode{witness\_digest}),
\end{aligned}
\]
\endgroup
and the sixteenth field is
\(\mcode{beta\_digest}=H(\mcode{FTBC\_EXECUTABLE\_BETA\_V1}
\parallel\mathsf{encode}(\beta_o^{\mathrm{SQL}}))\).
In this deliberately single-chain profile, define the authenticated
representation
\[
  \chi_{\mathrm{SQL}}(g)
  =H(\mcode{sqlite\mbox{-}single\mbox{-}slot}\parallel g).
\]
Because exactly one logical slot exists per root in the profile and
\(\mcode{root\_id}\) is committed inside both the concrete key and
\(\beta_o^{\mathrm{SQL}}\), the gateway and checker recompute the same
\(\chi_{\mathrm{SQL}}(g)\); the root fence is the physical representation of
the abstract fence for \((g,\chi_{\mathrm{SQL}}(g))\). No standalone
\mcode{slot\_id} column is implied.
Thus ``complete executable-beta'' in Section~\ref{sec:results} means complete
with respect to this concrete prototype schema: every field and the digest are
persisted or independently recomputed across the exercised branch, gateway,
effect, and terminal-evidence rows.  It does not claim that the SQLite
prototype implements the abstract atom-selector schema, nor does it discharge
the universal refinement obligation or a multi-slot adapter. That executable
profile has one serial
reissue chain per root and materializes a single
\mcode{root.current\_credit\_id}; this is a projection of per-chain currentness,
not a restriction on the full DAG schema, which permits pairwise-disjoint
current split siblings.

The compact gateway implements the permit branch of the abstract
first-acceptance guard. In the same \(W_3\) transaction that persists
\(\mathsf R\rightarrow\mathsf Q\) and marks the outbox dispatchable, the
branch persists an operation-scoped Ed25519 permit over the complete
\(\beta_o^{\mathrm{SQL}}\). For an unseen key, the gateway verifies that permit
and the current root fence before atomically recording acceptance. An exact
accepted replay returns the stored decision without requiring a new permit;
a durable no-effect proof excludes later acceptance of a delayed permit. The
independent checker, outside the enforcement TCB, separately verifies the
permit signature, ordering, and branch/gateway correspondence. These finite
traces exercise the declared guard but do not prove universal refinement for
arbitrary adapters or the complete end-to-end theorem.

When the SQLite profile creates a fresh successor during its final local abort,
no-effect return, or success settlement, the transaction simultaneously moves
the predecessor's already safe available balance and the terminal return
footprint. Its abstract image is the explicit \(\RolloverAct\) action in
Equations~\eqref{eq:rollover-ready}--\eqref{eq:rollover-action}, not a
purported single \(\Settle\), \(\ReturnAct\), or \(\mathsf{Reissue}\) step.

\subsection{Write Ordering and Crash Cuts}

For a protected operation, the required order is:
\[
\begin{array}{rcl}
W_1&=&\text{validate current credit and joint witness},\\
W_2&=&\text{atomically persist }\mathsf A\!\rightarrow\!\mathsf R
       \text{, operation, obligation, outbox, and audit node},\\
W_3&=&\text{atomically persist }\mathsf R\!\rightarrow\!\mathsf Q
       \text{ and a branch-signed permit over }\beta_o,\\
   &&\text{then mark the same outbox row dispatchable in that transaction},\\
N_1&=&\text{send through the fenced gateway},\\
W_4&=&\text{atomically verify and settle an authoritative outcome}.
\end{array}
\]

A crash before \(W_2\) exposes no work and owns no reservation. A crash after
\(W_2\) but before \(W_3\) leaves an unexposed reservation that local recovery
can prove safe to abort. A crash after \(W_3\), including before \(N_1\),
leaves a quarantined obligation. This can sacrifice availability, but it
eliminates the opposite crash cut in which an effect exists without a charged
right. A crash after \(N_1\) and before \(W_4\) is reconciled by the canonical
operation key rather than by reissuing a semantically new request.

The outbox is therefore a durability protocol, not a statement that message
delivery is exactly once. Duplicate sends are expected. The gateway's
idempotency and receipt query turn duplicates into observation of one logical
operation.

\subsection{Reference Components}

The reference profile separates six logical components; the compact artifact
implements the exercised subset of these roles:

\begin{enumerate}
  \item \textbf{Issuer.} Validates the root authority and fixes resource units,
        quantization, settlement authority, maximum-charge policy, and initial
        vector.
  \item \textbf{Credit authority.} Executes split, sealed transfer, acceptance,
        disjoint join, return, and epoch transitions.
  \item \textbf{Branch ledger.} Executes reserve, quarantine, settlement, and
        retirement in local transactions.
  \item \textbf{Gateway emulator.} Enforces stable keys and epochs, verifies
        the branch-signed dispatch permit before first acceptance, durably
        binds acceptance to \(\beta_o\), records effect instances and
        acceptance sequences, and exposes queryable outcomes.
  \item \textbf{Reconciler.} Resolves nonterminal transfers and operations
        without interpreting timeout as non-execution.
  \item \textbf{Independent checker.} Independently recomputes the declared
        keys, bindings, ownership totals, conservation, and coverage predicates
        at every retained durable snapshot/prefix.
\end{enumerate}

The checker is intentionally outside the mutation path. Agreement between
runtime assertions and an independently implemented checker is stronger
evidence than either alone, but is neither a complete abstract-transition
replay nor a handler-by-handler refinement proof of the production database or
cryptography.

\subsection{Concurrency and Placement}

Credits are the unit of locality. Operations on disjoint credits need no
shared counter and can proceed concurrently. A branch serializes transitions
for one credit generation, for example with a row-version compare-and-swap or
a local transaction lock. Cross-branch transfers serialize only the source
seal and destination acceptance; the rights remain unavailable in
\(\mathsf T\) between them.

A hot root does not imply a hot root row during ordinary spending. Root totals
can be reconstructed from credit and operation records or maintained as
non-authoritative aggregates. Coordination is required to move unused rights
between partitions, join credits at one authority, change resource semantics,
or reclaim a lost generation. This is the intended ``just-right'' boundary:
spending preallocated rights is local; changing exclusive ownership is not.

\subsection{Complexity and Metadata}

Let \(d=|\Resources|\), \(p\) be the number of parent credits in a join,
\(\ell\) the compressed lineage witness length, and \(z\) the number of
nonterminal records scanned during recovery. With range-compressed rights,
ordinary vector operations are \(O(d)\); the implementation does not
materialize the logical atoms of Definition~\ref{def:universe}.

\begin{table}[t]
\centering
\small
\begin{tabularx}{\columnwidth}{L{0.22\columnwidth}YY}
\toprule
\textbf{Transition} & \textbf{Local work} & \textbf{Coordination} \\
\midrule
Reserve / settle & \(O(d)\) & one local durable transaction \\
Split & \(O(d+\ell)\) & one local durable transaction \\
Cross-store transfer & \(O(d+\ell)\) & source seal + destination accept \\
Join & \(O(pd+p\ell)\) & one authority over all consumed inputs \\
Reconcile & \(O(d)\) per operation & gateway query; reclaim additionally needs
fence + non-acceptance proof \\
Audit replay & \(O(d)\) per transition & none on enforcement path \\
\bottomrule
\end{tabularx}
\caption{Asymptotic costs for compressed footprints. Cryptographic verification
adds work proportional to the chosen signature and commitment formats.}
\label{tab:complexity}
\end{table}

Each operation stores identifiers and digests plus \(O(d)\) quantities. A
transfer additionally stores a lineage/rights commitment. Full ancestry need
not travel on every request if the verifier trusts a checkpointed Merkle or
accumulator proof; that optimization changes proof format, not the ownership
invariant.

\subsection{Trusted Computing Base and Profiles}

The \emph{ledger profile} trusts the issuer, credit/branch stores, and
cryptographic verification. It proves only internal right conservation. The
\emph{mediated-effect profile} additionally trusts effect normalization,
complete mediation, the fenced gateway, and terminal-evidence authority, yielding the
coverage theorem. The \emph{accounting profile} may rely on a provider's usage
receipt. If the provider can omit hidden tokens, canceled-stream usage, taxes,
or asynchronous charges, the theorem covers the reported metric rather than
the eventual invoice.

An endpoint can advertise one of three capabilities:

\[
\mathsf{profile}(r)\in
\{\mathsf{FencedQueryable},\mathsf{ReceiptOnly},\mathsf{SelfReported}\}.
\]

The deployment must not silently upgrade these profiles. In particular,
\(\mathsf{ReceiptOnly}\) can preserve no-amplification by quarantining ambiguity
but cannot promise eventual release, and \(\mathsf{SelfReported}\) cannot
support an effect-coverage claim about receiver state.

\section{Evaluation Methodology}
\label{sec:methodology}

The evaluation is claim-driven and intentionally performed after the model,
protocol, theorem assumptions, and failure semantics are fixed. This order
reduces the risk of fitting definitions to favorable measurements. The
artifact uses no hosted model or provider. Agent decisions are deterministic
event generators because the evaluated claims concern distributed ownership
and settlement, not task quality.

We use frozen cases, explicit oracles, and machine-readable denial reasons.
The present oracle is a distributed state invariant, and each paired
variation is a crash, message, permit, or epoch schedule rather than a natural
language authorization context.

MAS-FIRE provides a complementary taxonomy and injection framework for
semantic, cognitive, and inter-agent coordination faults in LLM-based
multi-agent systems \cite{jia2026masfire}. Our evaluation targets a different
layer: durable crash cuts, duplicated or reordered protocol events, epoch
fencing, and preservation of ownership and effect-accounting invariants. It
does not measure reasoning robustness.

\subsection{Research Questions}

\begin{description}[leftmargin=2.4em,style=nextline]
  \item[\textbf{RQ1: Model safety.}]
  In the declared finite scopes, do the settlement/generation and
  delegation-lineage machines preserve their stated ownership, conservation,
  settlement, and fencing invariants?

  \item[\textbf{RQ2: Necessity.}]
  Does timeout refund yield a short mechanism-specific counterexample, and
  does a compound fan-in certificate-validation mutant that substitutes
  historical inputs and omits disjointness produce the predicted ownership
  amplification?

  \item[\textbf{RQ3: Crash-fault runtime.}]
  Across the declared \(W_2\), gateway-acceptance, and \(W_4\) pre/post-commit
  cuts, duplicate receipt, stale epoch, response loss, and
  pre-fence-accept/late-completion schedule, does an independent checker find
  any prefix violation in the safe SQLite profile?

  \item[\textbf{RQ4: Mechanism cost sanity.}]
  Do all injected process deaths recover durable SQLite state, and what local
  RPC and restart timings characterize this prototype execution? These
  observations are health checks, not a comparative performance claim.
\end{description}

\subsection{Evidence Strata}

No single experiment establishes the end-to-end theorem. We therefore use
four deliberately separated evidence strata:

\begin{enumerate}
  \item two complementary TLA+ specifications explored by TLC: one for
        settlement/generation recovery and one for lineage/fan-in
        \cite{lamport2002specifying};
  \item a dependency-free JavaScript encoding that exhausts the same declared
        settlement scope and runs named schedules;
  \item two SQLite service processes, driven through separate durable files,
        with real \mcode{SIGKILL} cuts and an independent newline-delimited
        JSON (JSONL) trace checker;
        and
  \item lightweight local RPC/restart timing captured while those correctness
        schedules execute.
\end{enumerate}

The TLA+ and JavaScript artifacts are separate executable encodings, not
independent conceptual specifications. The SQLite checker imports neither
encoding, protocol implementation, SQL, scenario, nor oracle module; it
recomputes properties from durable snapshots. This reduces, but cannot
eliminate, common-specification error. None of the strata proves an arbitrary
deployment, endpoint, database, signature library, or the universal
trace-lifting obligation in Equation~\eqref{eq:stepwise-refinement}.

The settlement TLA+ and JavaScript encodings deliberately type each logical
unit with one location. Their \mcode{ExactConservation} check is therefore a
transition-accounting and regression sanity check, not independent evidence
for the manuscript's raw-relation ownership-partition lemma. The hand proof
establishes that lemma without a location-function type; the complementary
lineage model can represent multiple tagged occurrences and its mutant
exercises that multiplicity. We do not report bounded state counts as a
machine-checked proof of arbitrary raw-relation ownership.

\subsection{Analytical Comparison Points}

The following mechanisms structure the threat and novelty comparison; they
are not presented as measured reimplementations:

\begin{enumerate}
  \item \textbf{Read--check--decrement.} Replicated remaining balances with
        last-writer-wins merge; this isolates the invariant-convergence error.
  \item \textbf{Central linearizable counter.} One Raft-style authoritative
        decrement and settlement record \cite{ongaro2014raft}; safe when
        reachable, unavailable to isolated branches.
  \item \textbf{Static escrow / Bounded Counter profile.} Rights are
        preallocated and never dynamically reclaimed during faults
        \cite{balegas2015bounded}; this is the strongest numeric baseline.
  \item \textbf{Affine local budget.} A single-process move-only credit modeled
        after Token Budgets \cite{khan2026tokenbudgets}; cross-process crash
        state is outside its type envelope.
  \item \textbf{Allocation-contract monitor.} Child vectors must sum within
        the parent as in Agent Contracts \cite{ye2026agentcontracts}, but
        external execution has no durable obligation.
  \item \textbf{Shared replay registry.} AIP/AAT-style attenuated tokens plus a
        linearizable operation-ID registry; this prevents some replay but
        lacks exclusive consumable transfer semantics.
  \item \textbf{Fail-closed coordinator.} MPAC-style centralized current state
        rejects mutation while the coordinator is unavailable
        \cite{qian2026mpac}.
  \item \textbf{Proposed profile.} Exclusive credit lineage plus
        reserve/quarantine/terminal-evidence settlement and gateway fencing.
\end{enumerate}

RIFL, Beldi, and ExoFlow are architectural reference points for durable
completion records and exactly-once workflows
\cite{lee2015rifl,zhang2020beldi,zhuang2023exoflow}. The reference artifact
evaluates this paper's conservation and crash-cut obligations; these systems
remain architectural comparison points rather than reimplemented performance
baselines.

\subsection{Finite Scopes and Concrete Schedules}

The settlement/generation model contains two logical atoms, two actors,
epochs and operation generations \(0..1\), transfer seal/accept/verified
cancel, reserve, dispatch, timeout, gateway effect, settlement, fencing,
verified abort, successor-generation reuse, forfeiture, crash, and restart.
The JavaScript encoding explores the complete reachable graph of the same
finite abstraction and executes named traces.
Those settlement models sequence terminal resolution and ordinary reissue;
they do not contain a separate fused \(\RolloverAct\) transition. The fused
action is covered by the inductive proof and the selected SQLite transactions,
not by a claimed TLC/JavaScript bisimulation.

The complementary lineage model contains two dimensions (\mcode{usd} and
\mcode{tokens}), two rights per dimension, root/left/right/join credits,
generations \(0..2\), epochs \(0..1\), an atomic disjoint split, a legal
current-input fan-in, and sequential full transfers and returns. The safe
join consumes the two current, active, pairwise-disjoint branches atomically;
a retained reachability witness confirms that this legal join is not
vacuously disabled. The sequential workflow separately produces two
overlapping \emph{historical} return certificates without duplicating live
ownership. The safe model never treats those stale certificates as join
inputs. A deliberately compound validation mutant both substitutes them for
current active inputs and omits their disjointness check; its counterexample
must therefore not be interpreted as isolating either check independently.

The concrete SQLite profile issues vector \((100,2)\) in dimensions
\mcode{usd} and \mcode{calls}. It runs ten safe cases and one
timeout-refund mutant. Each case creates fresh branch and gateway databases.
The safe schedules cover pre/post-commit death at \(W_2\), pre/post-commit
death at \(W_3\), pre/post-commit death at gateway acceptance,
effect persistence with reply loss,
pre/post-commit death at \(W_4\), receipt replay, stale-epoch rejection, and a
request accepted before a fence that completes after it. Table~\ref{tab:fault-matrix}
states the broader protocol cuts and identifies which are exercised by which
artifact layer.

\begin{table*}[t]
\centering
\small
\begin{tabularx}{\textwidth}{L{0.21\textwidth}L{0.25\textwidth}YY}
\toprule
\textbf{Fault injection point} & \textbf{Ambiguity created} &
\textbf{Required safe state} & \textbf{Bug exposed if weakened} \\
\midrule
After reserve commit, before dispatchable & No exposure possible by durable
evidence & \(\mathsf R\), then verified abort & leaked reservation or unsafe
send without outbox \\
After quarantine commit, before send & Send may not have happened &
\(\mathsf Q\) until gateway proof & utilization loss if safe; double spend if
blindly refunded \\
After gateway effect, before reply & Effect exists, response absent &
\(\mathsf Q\), then receipt settlement & timeout-refund/late-effect overspend \\
After receipt arrival, before ledger commit & Receipt can replay &
same settlement applied once & duplicate charge or duplicate remainder \\
During sealed transfer & Source debited, destination unknown &
\(\mathsf T\) until accept/exclusion & both-side ownership after retry \\
During epoch advance & Old worker may retain credentials &
old gateway requests fenced & stale-generation spend after reclaim \\
During fan-in & Two references may share ancestry or be stale &
require current active inputs, reject overlap, and consume atomically &
join amplification \\
During partial-vector settlement & Actual charge smaller than maximum &
commit exact subset; return authenticated remainder & invented remainder or
cross-dimension substitution \\
\bottomrule
\end{tabularx}
\caption{Required crash and concurrency cuts. Each outcome is checked after
every durable prefix, not only after recovery completes.}
\label{tab:fault-matrix}
\end{table*}

The SQLite layer uses real process death at the \(W_2\) reservation
transaction, the \(W_3\) quarantine/permit transaction, the separate
gateway-acceptance transaction, effect/reply-loss cut, and \(W_4\) settlement
transaction; it also exercises stale-epoch rejection and pre-fence acceptance
followed by late completion. Transfer
cancellation and partial charge are likewise exercised in the executable
settlement traces; the lineage model exercises a legal current fan-in plus an
invalid historical-certificate path.
These scopes are reported separately so model-level coverage is not mistaken
for a distributed runtime implementation of transfer or DAG join.

\subsection{Metrics}

The primary safety metrics are counts of:
\[
\begin{split}
V_{\mathrm{own}}&=\#\text{ownership-partition violations},\\
V_{\mathrm{sum}}&=\#\text{bucket-conservation violations},\\
V_{\mathrm{cover}}&=\#\text{uncovered or multiply covered effects},\\
V_{\mathrm{bound}}&=\#\text{external-exposure bound violations},\\
V_{\mathrm{settle}}&=\#\text{conflicting/double settlements},\\
V_{\mathrm{epoch}}&=\#\text{accepted stale-generation effects}.
\end{split}
\]

All must be zero for the proposed safe profile. Mutation success is the
opposite: each intentionally weakened rule must produce at least one
independently validated counterexample in a schedule that exercises the
weakened assumption.

Capacity state can be reported as:
\[
  s_T=\frac{\|\mathbf T\|_1}{\|\mathbf B\|_1},\quad
  s_Q=\frac{\|\mathbf Q\|_1}{\|\mathbf B\|_1},\quad
  s_F=\frac{\|\mathbf F\|_1}{\|\mathbf B\|_1}.
\]
For the concrete artifact, restart time is measured from process death to a
ready service against the same database. We report only RPC sample count,
per-case median/p95, and aggregate restart median captured by the correctness
run. We do not infer throughput, tail SLOs, capacity, or comparative overhead
from those samples. Safety is never combined with timing into one score.

\subsection{Decision Rules and Statistical Treatment}

Bounded model checking is reported as explored distinct states, generated
states, depth/diameter when available, completed search status, and invariant
violations. A zero count means no counterexample exists in that declared
finite scope; it is not reported as an unbounded proof.

Deterministic transition and fault cases are pass/fail with an exact case
manifest. The local timings are descriptive observations over the RPCs and
restarts that actually occur in those cases; there is no warm-up,
random-sampling model, confidence interval, or hypothesis test. Runtime,
hardware, SQLite version, topology, and the fact that CLI process startup is
included are recorded with the results.

\subsection{Predeclared Interpretation Boundaries}

The following outcomes would narrow or falsify claims:

\begin{itemize}
  \item any safe-model or proposed-runtime prefix with
        \(V_{\mathrm{own}}+V_{\mathrm{sum}}>0\) falsifies ledger conservation;
  \item any mediated gateway effect without exactly one bound
        \(\mathsf Q/\mathsf C/\mathsf F\) obligation falsifies effect coverage;
  \item a mutant not detected within the exercised scope weakens evidence for
        necessity and must be reported rather than reclassified;
  \item a timeout-only reclaim that appears safe merely indicates the schedule
        omitted the delayed-effect twin;
  \item provider self-reported usage cannot validate actual invoice bounds;
  \item fault-free microbenchmark speed cannot establish fault tolerance; and
  \item a bounded TLC completion cannot establish correctness outside its
        constants or assumptions.
\end{itemize}

\subsection{Reproducibility Protocol}

The artifact entry point runs formal exploration, mutation, named transition,
fault injection, aggregation, and internal consistency checks in that order.
Every result file records the command, source digest, seed or schedule ID,
runtime version, start/end time, and declared scope. Generated LaTeX tables are
derived from JSON results; hand-entered benchmark numbers are forbidden.
Appendix~\ref{app:artifact} gives the exact commands and expected artifacts.

% AUTO-GENERATED by bench/aggregate_results.mjs; do not edit.
% aggregate-sha256: 5c9d361bbe2560a84ed44cb4d7ed550e2d10227e17b4e43f4bf75e2816f34916
\section{Evaluation Results}
\label{sec:results}

\subsection{Bounded Model-Checking Results}

All three safe finite graphs completed with an empty work queue and no
invariant violation.  The settlement TLC model explored
54,477 generated and
16,592 distinct states; the
JavaScript encoding reached the same count of
16,592 distinct states.  Equality of
counts does not establish equality of state sets, a bisimulation, or independent
specification evidence; it is only a coarse cross-encoding consistency check.

\begin{table*}[t]
\centering
\scriptsize
\setlength{\tabcolsep}{3.5pt}
\begin{tabularx}{\textwidth}{@{}L{0.14\textwidth}XL{0.12\textwidth}L{0.11\textwidth}L{0.08\textwidth}L{0.17\textwidth}@{}}
\toprule
\textbf{Evidence} & \textbf{Declared finite scope} &
\textbf{Generated or edges} & \textbf{Distinct states} &
\textbf{Depth} & \textbf{Safe outcome} \\
\midrule
Settlement TLC & 2 atoms,
2 actors, epochs/generations
0..1 &
54,477 &
16,592 &
29 & exhausted; no violation \\
Lineage TLC & 2 dimensions,
4 rights,
4 credits &
23 &
21 &
8 & exhausted; no violation \\
JavaScript graph & 2 atoms,
2 actors, complete reachable graph &
54,476 &
16,592 &
28 & exhausted; no violation \\
\bottomrule
\end{tabularx}
\caption{Completed bounded safety checks. The generated-or-edges column denotes
TLC generated states for the TLA+ rows and explored transitions for JavaScript.
Completion applies only to the constants shown.}
\label{tab:formal-results}
\end{table*}

The separate expected-failure reachability profile retained a
3-state legal current-input join
witness.  Both inputs were active, their live footprints were pairwise
disjoint and covered all 4
rights, and the output retained multiplicity one.  This establishes that the
safe join action was reachable; it is not a safety counterexample.

The deterministic executable suite then ran 19
named schedules (132 scheduled steps).
All 18 safe schedules passed with zero
final invariant violations, including gateway acceptance before a fence
followed by completion after the fence, and the corresponding forfeiture
case. These schedules test particular executions; they do not enlarge the
exhaustive finite scope.

\subsection{Targeted Necessity Checks}

Both predeclared weakened mechanisms produced short, mechanism-specific
counterexamples.  The lineage mutant is compound: it substitutes returned
historical certificates for current active inputs and bypasses pairwise
disjointness validation.  It must not be interpreted as an overlap-only
ablation.

\begin{table*}[t]
\centering
\scriptsize
\setlength{\tabcolsep}{3.5pt}
\begin{tabularx}{\textwidth}{@{}L{0.23\textwidth}L{0.19\textwidth}L{0.18\textwidth}X@{}}
\toprule
\textbf{Weakening} & \textbf{Detection} & \textbf{Shortest retained witness} &
\textbf{Observed failure} \\
\midrule
Timeout refund after exposure & TLC and JavaScript &
TLC depth 4;
JS depths 3
and 5 &
unsafe reclaim; then loss of effect coverage after the accepted request
completes late \\
Historical-certificate validation (compound) & lineage TLC &
6 states &
ownership uniqueness; 8
live occurrences for 4
rights (maximum multiplicity
2) \\
\bottomrule
\end{tabularx}
\caption{Targeted mutant results. Witness lengths are breadth-first
counterexamples in the declared finite models, not field-failure
probabilities. The historical-certificate row weakens both currentness and
disjointness validation.}
\label{tab:mutation-results}
\end{table*}

\subsection{Multi-Process Crash-Fault Results}

The concrete artifact executed 10/10
safe SQLite cases without a prefix violation under the independent checker.
It injected 9 actual \mcode{SIGKILL} process deaths and
retained 313 JSONL records.  The checker also detected the
timeout-refund mutant through \mcode{dispatch\_coverage},
\mcode{effect\_coverage}, and \mcode{effect\_budget\_bound}.
It recomputed 25 canonical keys
and 59 complete executable-beta rows,
verified 11 signed success
receipts and 1 signed no-effect
proof, and checked 14 terminal
return decisions including
3 fresh successors.  All
12 negative handler assertions were
observed without durable mutation.

The compact gateway enforces the evaluated first-acceptance guard online: the
branch persists and signs an operation-scoped dispatch permit in the same
\(W_3\) transaction that moves the reservation to quarantine, and the gateway
verifies and records that permit before accepting an unseen key.  Exact replay
returns the stored outcome, while a later authoritative no-effect proof excludes
acceptance of a delayed permit.  The independent checker separately recomputes
permit signatures and cross-store dispatch coverage from the trace.  These
finite schedules validate the reference profile; they do not prove universal
refinement for arbitrary runtime adapters.

\begin{table}[t]
\centering
\scriptsize
\setlength{\tabcolsep}{3pt}
\begin{tabularx}{\columnwidth}{@{}XrrL{0.12\columnwidth}@{}}
\toprule
\textbf{Safe schedule} & \textbf{Kills} & \textbf{Records} &
\textbf{Checker} \\
\midrule
W2 pre-commit rollback & 1 & 29 & pass \\
W2 post-commit recovery & 1 & 23 & pass \\
gateway-accept pre-commit rollback/retry & 1 & 33 & pass \\
gateway-accept post-commit dedupe & 1 & 24 & pass \\
effect/reply loss; W4 pre-commit & 2 & 29 & pass \\
W4 post-commit dedupe & 1 & 27 & pass \\
stale epoch rejection & 0 & 39 & pass \\
pre-fence accept; late completion & 0 & 30 & pass \\
W3 pre-commit rollback & 1 & 25 & pass \\
W3 post-commit permit recovery & 1 & 25 & pass \\
\bottomrule
\end{tabularx}
\caption{Independent checks over ten concrete safe traces. Each case uses
fresh branch-ledger and gateway databases.}
\label{tab:sqlite-results}
\end{table}

All 22/22
retained SQLite databases returned \mcode{ok} from
\mcode{PRAGMA integrity\_check}.  This is evidence that the captured crash
cuts left readable database images; it is not a proof of SQLite internals.
The safe late-completion case binds the gateway-assigned effect instance and
acceptance sequence through settlement and moves its partial remainder to a
fresh, same-lineage successor rather than the stale source.  The stale-epoch
case separately exercises local unexposed abort and gateway-proved no-effect
return; both create fresh successors only after predecessor closure and exact
reusable-balance computation.

The correctness run collected 111 local RPC samples.
Per-case medians ranged from
9.702 to
12.007 ms, the largest
per-case p95 was 28.111 ms, and
the 9 restart observations had median
30.76 ms.  These measurements
include local process and SQLite CLI startup and serve only as mechanism
health checks; they establish neither throughput nor a production latency
SLO.

\subsection{Interpretation and Residual Limits}

The result is bounded but falsifiable: zero safe violations in the declared
finite models, zero violations across all 10 retained
safe runtime traces,
and detection of both targeted weakenings.  It does not establish liveness,
Byzantine tolerance, production cryptographic assurance or receipt-authority
integrity, arbitrary runtime refinement, cross-store credit transfer, or DAG
join in the SQLite prototype.  In particular, the SQLite profile exercises
the implemented first-acceptance guard but does not furnish a universal
handler-level refinement proof for \(\mathsf{GatewayAccept}\) or the
conditional end-to-end theorem.  Its fused terminal-settlement/successor transactions have the
explicit analytical image \(\RolloverAct\), but no universal handler
simulation proof is claimed.  The Ed25519 fixture exercises authentication
and negative verification paths but provides no production key isolation.
External-effect claims remain conditional on a cooperating, idempotent,
fenced, and queryable gateway with complete mediation.

\section{Security Analysis}
\label{sec:security}

\subsection{Safety Against a Malicious or Confused Agent}

The agent is not trusted to report a balance, choose a credit generation,
forge terminal evidence, or decide that a timeout is a refund. It may emit
arbitrarily many proposals, retries, delegations, cancellations, and joins.
The branch store accepts a proposal only if the current credit exclusively
owns the selected footprint, the label and joint witness authorize the
normalized effect, and the immutable operation binding either does not yet
exist or matches exactly. Consequently, prompt injection can influence the
proposed action but cannot by itself duplicate a right.

A compromised worker can still waste its own authorized escrow by reserving
unproductive work, creating indeterminate obligations, or intentionally
withholding completion. Conservation is a safety property, not a defense
against budget denial. Deployments need rate limits, per-child reservation
caps, bounded delegation depth, admission control, and human escalation for
large or long-lived quarantines.

\subsection{Replay, Equivocation, and Identifier Substitution}

Every transfer, operation, and terminal-evidence item has a stable identifier,
but identifier uniqueness alone is insufficient. The branch/gateway
\emph{enforcement verifier}, which is inside the safety TCB, recomputes the
single canonical operation binding in
Equation~\eqref{eq:accept-binding}, including the selector schema version
\(\nu_{\mathsf{sel}}\), and rejects any field mismatch. Transfers use their
separate domain-separated certificate and decision binding; they are not
silently interpreted through the operation formula. The independent
\emph{audit verifier} in Figure~\ref{fig:architecture} is outside the TCB and
can detect divergence but cannot make an unsafe request safe. An exact replay
is a stutter. Reuse of an identifier with a different tuple is an equivocation
and is rejected. Domain separation prevents an operation ID from being
reinterpreted as a transfer or receipt ID. Receiver and effect class are
included so two services cannot accidentally share an idempotency namespace.

Settlement also binds where unused rights may return.  A target must satisfy
Equation~\eqref{eq:return-target}: it has the same root, is a
successor of the source lineage that is current in the committed successor
state under the settlement
authority, and has a label no broader than the source credit.  An agent
therefore cannot use a valid partial receipt to redirect the remainder into
an unrelated, broader, or stale credit.

Terminal-evidence equivocation is detected when two authenticated terminal outcomes
bind the same operation. Under Assumption~\ref{ass:receipt}, at most one can be
valid. If a deployed terminal-evidence authority can sign inconsistent outcomes, the
main effect theorem no longer applies; the ledger can quarantine the
conflict, preserve no-amplification, and expose evidence, but cannot establish
receiver truth.

\subsection{Stale Generations and Split Brain}

Epoch fencing protects against a lost controller or worker that later resumes.
The critical check is at the last trusted boundary before the effect, not only
at the branch database. A gateway accepts a new request for
\((cid,\eta)\) only while that generation is current. After an authenticated
\(\eta^+>\eta\) is installed, it may return the already recorded result for an old
idempotency key but must reject a new old-generation effect.

If two authorities can independently declare themselves current for the same
credit generation, exclusive ownership is already violated. Consensus,
single-writer placement, or another correct ownership service is required
within that authority. The protocol confines this requirement to ownership
changes; it does not solve consensus and does not claim Byzantine split-brain
tolerance.

\subsection{External-Effect Ambiguity}

The most consequential attack surface is a receiver that completes work and
drops or falsifies the response. The safe response is not to guess. Under the
fenced-queryable profile, reconciliation obtains the recorded result. Under
the receipt-only profile, missing evidence leaves the right in \(\mathsf Q\)
or eventually \(\mathsf F\). Under self-report, only the internal ledger
claim survives.

This tiering follows the same fundamental requirement as systems that provide
exactly-once RPC or workflow execution: a completion record must be durably
coupled to the mutation or returned by a cooperating transactional endpoint
\cite{lee2015rifl,zhang2020beldi,zhuang2023exoflow}. A client-side retry
library cannot manufacture that coupling for an arbitrary third party.

\subsection{Underreporting and Variable Charges}

A maximum reservation \(\mathbf u\) is sound only if the gateway can prevent
an actual charge above it. For a paid API, that can mean preauthorization,
provider-side hard caps, chunked metering with renewal, or reserving the
maximum billable unit. If the provider reveals usage only afterward and may
exceed \(\mathbf u\), no local algorithm can prove a hard bound on the real
charge. The appropriate choices are to refuse the hard-bound profile, reserve
a conservative worst case, or state a softer monitoring claim.

Partial settlement is safe when the receipt authenticates
\(\mathbf d\preceq\mathbf u\). The remainder returns only after that receipt.
For streaming calls, the gateway can use successive sub-operations with
separate reserved chunks; canceling a stream is not itself a zero-charge
receipt.

\subsection{Normalization and Path Aliasing}

Two syntactically different requests may cause the same effect, while two
similar strings may address different recipients or preconditions.
\(\Normalize\) is therefore part of the trusted policy boundary. Its test
obligations include canonical encoding, default parameters, redirects,
indirect identifiers, locale and unit conversion, recipient resolution,
protocol-specific retries, and relevant domain preconditions. Unknown
semantics fail closed.

Complete mediation is equally important
\cite{saltzer1975protection}. An agent with direct cloud
credentials can bypass an MCP proxy; a browser can activate an effect through
a UI path; a tool can invoke a second tool internally. The theorem covers only
credentials and routes that the deployment actually interposes. Network
egress policy, workload identity, receiver-side enforcement, and credential
isolation are deployment controls, not consequences of the budget data
structure.

\subsection{Audit Integrity and Rollback}

A hash-linked log reveals deletion or reordering relative to a trusted
checkpoint, but a compromised store could roll back both state and its local
log. Production profiles should bind store generation, credit epoch, and
checkpoint sequence to a monotonic external witness, replicated log, hardware
counter, or consensus state. This strengthens forensic detection; exclusive
ownership still depends on the authoritative transition service.

Audit records contain sensitive purpose, recipient, spend, and graph
metadata. The default evidence plane should use digests and stable opaque
identifiers, encrypt detailed receipts at rest, separate operational and audit
roles, and enforce retention and deletion policies. Public verifiability does
not justify publishing task content.

\begin{table*}[t]
\centering
\small
\begin{tabularx}{\textwidth}{L{0.20\textwidth}L{0.23\textwidth}YY}
\toprule
\textbf{Attack or failure} & \textbf{Prevented or detected by} &
\textbf{Residual consequence} & \textbf{Outside the theorem} \\
\midrule
Duplicate child allocation & atomic exclusive split + stable ID &
duplicate request stutters & corrupt authoritative store \\
Lineage alias at fan-in & disjoint rights commitment + input consumption &
join rejected & unsound commitment implementation \\
Dispatch replay & operation binding + gateway idempotency & prior outcome
returned & non-cooperating receiver \\
Timeout refund race & \(\mathsf Q\) retention + reconciliation & stranded
capacity & guaranteed eventual receiver response \\
Stale worker & gateway-visible epoch fence & old request rejected or observed &
direct credentials bypassing gateway \\
Forged/changed receipt & signature and full binding verification &
quarantine and investigation & compromised terminal-evidence authority \\
Provider underreporting & authenticated hard-cap profile & weaker
reported-usage claim & correctness of an arbitrary invoice \\
Audit rollback & external checkpoint / monotonic epoch & detect or confine
rollback & Byzantine consensus if not deployed \\
Reservation flooding & per-child caps and admission policy & availability
loss within authorized escrow & agent usefulness or fair allocation \\
\bottomrule
\end{tabularx}
\caption{Security coverage and residual risk. The protocol prefers explicit
loss of utility to an unsupported conservation claim.}
\label{tab:security}
\end{table*}

\section{Related Work Beyond the Closest Mechanisms}
\label{sec:related}

\subsection{Agent Economies and Governance Receipts}

EpochX treats tasks as bounties, supports recursive budgeted delegation, and
uses lock and settle operations \cite{wang2026epochx}. It is therefore direct
prior art for hierarchical economic allocation; our focus is the failure
semantics around duplicate delivery, persistent ownership, quarantine,
fencing, and late settlement that are not established by EpochX's reported
model and evaluation. AgentBound centralizes pre-execution
governance and emits signed governance receipts
\cite{kaul2026agentbound}. Those receipts attest to a policy decision, while
our effect receipt attests to a receiver outcome and actual charge. Human
Delegation Provenance (HDP), a lightweight cryptographic protocol, provides
human-delegation provenance,
while Aegon proposes ledger-bound tokens
with attested receipts \cite{dalugoda2026hdp,baskaran2026aegon}. We do not
claim the first delegation chain, ledger token, or signed receipt.

Agent Contracts and Token Budgets are the nearest resource-governance works
\cite{ye2026agentcontracts,khan2026tokenbudgets}. Their allocation law and
affine ownership discipline motivate our model. The contribution here begins
where one process or one allocation snapshot stops being authoritative:
ownership must survive messages, durable crash cuts, joins, and uncertain
effects.

\subsection{Authorization Delegation}

OAuth token exchange and Rich Authorization Requests provide standardized
building blocks for audience-bound exchange and structured authorization
\cite{rfc8693,rfc9396}. AIP and the current AAT Internet-Draft extend these
ideas to agent identity and attenuating chains
\cite{prakash2026aip,niyikiza2026aat}; Overlaying
Governance formalizes policy composition \cite{ibrahim2026overlay}. These
answer \emph{which actions remain permitted}. Our credits add linear
consumption: two equally attenuated children must not both own the same last
unit.

Resource-bounded temporal logics have long represented strategies constrained
by consumable or producible resources \cite{alechina2010resource}. Our problem
is operational rather than strategic: it specifies how authenticated rights
move and settle under crash and partition. The logical connection is useful,
but a model-checking semantics does not supply durable message protocols by
itself.

\subsection{Escrow, CRDTs, and Mobile Rights}

Escrow transactions reserve rights so concurrent transactions preserve a
numeric invariant \cite{oneil1986escrow}. Mobile escrow distributes rights to
occasionally disconnected clients \cite{krishnakumar1997mobileescrow}.
Bounded Counter CRDTs provide a particularly strong construction for
partition-local decrement and rights transfer
\cite{balegas2015bounded}. Just-Right Consistency combines such mechanisms
with analysis that coordinates only invariant-sensitive operations
\cite{shapiro2018justright}.

We adopt, rather than rename, this lineage. The extension is that an agent
credit carries an authorization label and dynamic delegation lineage, the
task graph can join without merging aliased rights, and a spent unit remains
coupled to an external-effect obligation through uncertainty. A numeric
counter that converges to a valid value can still miss a duplicated payment
or disclosure.

\subsection{Transactions, Workflows, and Exactly-Once Semantics}

Sagas decompose long-lived transactions into steps and compensations
\cite{garciamolina1987sagas}; Atomix specializes transactional semantics to
agent tool use \cite{mohammadi2026atomix}. RIFL couples request IDs with
durable completion records \cite{lee2015rifl}. Beldi supplies fault-tolerant
transactional serverless workflows, and ExoFlow targets exactly-once DAG
execution \cite{zhang2020beldi,zhuang2023exoflow}. These systems demonstrate
that replay safety is a storage and endpoint property, not a property of a
prompt.

Our theorem is narrower in one dimension and broader in another. It does not
claim a universal exactly-once workflow. It proves at-most-once settlement and
continuous charged coverage for consumable delegated authority, including the
safe but unavailable outcome in which an arbitrary effect cannot be
authoritatively resolved.

\subsection{Coordination and Impossibility}

CAP rules out simultaneous strong consistency and availability for every
request during arbitrary partitions \cite{gilbert2002cap}; FLP limits
deterministic consensus progress in an asynchronous system with a crash
\cite{fischer1985flp}. Invariant confluence and the Coordination Criterion
more directly characterize when application correctness permits
coordination-free execution
\cite{bailis2014coordination,hellerstein2026coordination}. Our
Proposition~\ref{prop:shared-unit} is an application-specific consequence:
exclusive preallocation permits local work, while creating or reallocating a
shared spendable unit requires coordination or blocking.

\section{Discussion}
\label{sec:discussion}

\subsection{What Is Actually Conserved?}

The object conserved is authenticated authorization capacity, not money or
physical energy. A dollar dimension means ``authority to cause up to one
dollar of charges under a named settlement rule.'' If taxes, exchange rates,
tips, later adjustments, or hidden provider usage are outside that rule, they
are outside the invariant. A rigorous deployment must define each dimension's
unit, rounding, maximum charge, authority, and reconciliation semantics before
issuing it.

The exact equality in Equation~\eqref{eq:conservation} is stronger and more
diagnostic than merely checking \(\mathbf C\preceq\mathbf B\). It distinguishes
spendable, moving, not-yet-exposed, uncertain, consumed, and sacrificed
capacity. Operators can therefore see whether apparent under-spending is true
availability or merely hidden stranding.

\subsection{Why Keep Both
\texorpdfstring{\(\mathsf R\) and \(\mathsf Q\)}{R and Q}?}

Combining reserved and indeterminate rights preserves the numeric equation,
but loses a crucial recovery fact. A durable \(\mathsf R\) record whose outbox
provably never became dispatchable can be locally returned. A \(\mathsf Q\)
record may correspond to a real effect and needs external evidence. The
separation turns a crash cut into an auditable transition and prevents a
generic ``release pending reservations'' job from creating authority.

\subsection{Safety, Liveness, and Economic Utility}

The protocol's safety is prefix-closed; its liveness is conditional. A branch
can progress during partition while it has suitable local rights and a
reachable gateway. Once exhausted, it blocks even if another partition has
idle capacity. A quarantined right can remain unavailable forever. These are
not incidental implementation shortcomings but consequences of refusing to
guess about exclusive ownership or external effects.

Allocation policy can reduce the cost: assign escrow proportional to expected
demand, keep small emergency reserves in multiple zones, rebalance before
known disconnections, and choose queryable endpoints for large effects.
Dynamic prediction may improve utilization but must not enter the safety
theorem.

\subsection{Vectors and Conversion}

Vectors prevent a worker from exchanging, for example, ten low-risk reads for
one high-risk write without authorization. Some applications legitimately
convert resources: a dollar cap may purchase token calls, or a global risk
budget may be divided among effect classes. Such conversion is a new protected
effect. It needs a signed rate/rounding contract, consumes source rights,
creates target rights once, and preserves a declared value invariant. Without
that extra contract, conversion is undefined rather than inferred.

\subsection{DAG Tasks, Forest-Owned Rights}

Task dependencies and budget ownership should remain different graphs. A
synthesis agent can read results from many parents without owning their
remainders. When it needs budget, each input branch may transfer a disjoint
credit or one authority may perform an atomic join. This design prevents the
common error of summing balances associated with incoming task edges.

At large scale, full atom-set proofs are impractical. Range certificates,
bounded-counter rights matrices, or cryptographic commitments can compress
the footprint. The implementation must nevertheless expose a disjointness
witness and consume each input generation exactly once.

\subsection{Standards Integration}

The protocol does not require a new wire protocol. AIP/AAT or OAuth can carry
identity and attenuated scope; MCP or A2A can carry opaque credit and operation
bindings; HTTP idempotency or conditional writes can implement a gateway
primitive. Standardization value lies in a common semantic profile:

\begin{itemize}
  \item authenticated resource schema and maximum-charge semantics;
  \item stable root, logical-slot, transfer, credit-generation, operation, and
        terminal-evidence IDs;
  \item immutable binding digest and the per-slot current epoch/fence;
  \item endpoint capability declaration;
  \item terminal outcomes including verified no-effect and forfeiture; and
  \item machine-readable reason codes for conservative refusal.
\end{itemize}

Interoperability testing should verify refinement of these meanings, not only
JSON shape compatibility.

\subsection{Deployment Migration}

A practical rollout can proceed in increasing assurance:

\begin{enumerate}
  \item instrument existing agents with stable operation IDs and observed
        usage, making no hard-bound claim;
  \item move counters into a durable branch ledger and reserve before effects;
  \item route high-impact effects through idempotent, queryable gateways;
  \item introduce exclusive child credits and disable copyable numeric
        ``remaining'' fields as authorities;
  \item add epoch fencing, disjoint joins, reconciliation, and an independent
        trace verifier; and
  \item publish only claims supported by the endpoint profile actually
        deployed.
\end{enumerate}

Legacy direct paths remain outside complete mediation until credentials and
egress controls make bypass impossible.

\section{Limitations and Validity}
\label{sec:limitations}

\subsection{Model Limitations}

The base model assumes crash faults, authenticated records, one correct local
linearization point per branch, and a finite natural vector. It does not model
Byzantine stores, signature compromise, side channels, probabilistic
settlement, debt, negative balances, interest, speculative credit, or
cross-root netting. Two independently authenticated roots are distinct
authorizations; an issuer that accidentally signs overlapping grants can
overspend a real-world cap without violating either per-root equation.
Issuer-level escrow is required for an aggregate cap. The logical-atom
construction is a proof device; a faulty compression or commitment scheme can
violate the implementation mapping.

Recent work on distributed agent governance under a compromised provider
studies Byzantine replication, monitoring, auditing, and hybrid defenses
\cite{laws2026byzantinegovernance}. Its adversary is outside our base theorem:
we assume correct authoritative stores, gateways, and evidence services and
establish safety only for the declared crash, retry, and partition model. The
two guarantees are complementary rather than ordered by strength.

The end-to-end effect theorem depends on sound normalization, complete
mediation, an enforceable maximum-charge contract, gateway
fencing/idempotency, and complete non-equivocating terminal evidence. These are
substantial assumptions. The ledger theorem remains useful when some fail,
but it must not be presented as conservation of actual external cost.

\subsection{Formal-Method Limitations}

The inductive proofs establish the mathematical transition system. TLA+/TLC
and the independent explorer cover only their declared finite constants,
fault schedules, and modeled transitions. State-space completion increases
confidence in the specification and detects regressions; it is not a proof of
database durability, transport authentication, cryptographic libraries, or
unmodeled adapters.

In particular, the settlement TLA+ and JavaScript encodings assign each
logical unit one location by type. Their exact-sum invariant checks transition
bookkeeping but cannot represent the duplicate-tag raw states excluded by the
ownership-partition proof. The lineage encoding admits origin-tag
multiplicity and detects its compound fan-in mutant, but it is still a bounded
model rather than a mechanized proof of the full raw-relation semantics.

The trace-lifting result defines the criterion that every runtime handler must
satisfy. A concrete event such as a receiver redirect or background retry
outside the gateway must appear in the mapping; otherwise the deployment does
not inherit the theorem. The SQLite profile discharges the exercised
first-acceptance permit and crash-cut obligations. Production deployments must
add a handler-by-handler discharge of
Equation~\eqref{eq:stepwise-refinement}, supported where appropriate by
verified storage procedures, refinement types, or proof-carrying adapters.

\subsection{Experimental Limitations}

The provider-free artifact isolates distributed-control behavior and supports
deterministic internal verification. Its measurements characterize the stated
formal scopes, named crash windows, and one local implementation and host.
Model quality, real-provider accounting, Internet-scale latency, multi-region
database failover, operator response, production failure frequency, and
capacity planning are separate evaluation targets.

The most important negative result is structural and does not depend on
benchmark scale: without endpoint cooperation, a lost response cannot
distinguish non-execution from completed execution. Experiments can
demonstrate the race but cannot remove it.

\subsection{Construct and External Validity}

``Budget'' spans tokens, currency, calls, disclosures, and risk units. These
dimensions differ in metering authority and reversibility. Results transfer
only when the deployment supplies a conservative maximum and an authoritative
settlement relation. Quantization can over-reserve; coarse units trade utility
for simple guarantees.

The 2026 agent papers discussed here are recent and several are preprints or
an Internet-Draft. Their reported systems and terminology may evolve. We use
them to set a conservative novelty boundary, not as proof of deployment
prevalence. Classical escrow, invariant-confluence, lease, and workflow
literature remains the stronger foundation.

\subsection{Internal Validity}

Independent implementations, mutation tests with expected counterexamples,
prefix checks, stable result manifests, and explicit claim/evidence mapping
reduce correlated specification risk across the formal model, runtime,
checker, and generated tables. Artifact results are regenerated from source;
every failed or skipped case is reported.

\section{Ethics and Responsible Deployment}
\label{sec:ethics}

Budget enforcement can reduce financial loss, excessive API use, unauthorized
disclosure, and repeated high-impact effects. It can also create false
confidence. A green ledger does not prove that a planner chose a beneficial
action, that a price was fair, that a recipient was appropriate, or that
complete mediation actually held. Interfaces should display endpoint profile,
quarantined capacity, unresolved effects, and assumption failures alongside a
remaining balance.

Fine-grained receipts and delegation graphs reveal organizational structure,
purposes, recipients, spending patterns, and potentially sensitive content.
Evidence minimization, separation of duties, access logging, encryption,
retention limits, and deletion procedures are necessary. Cryptographic
commitments can reduce disclosure but do not make metadata harmless.

Conservative quarantine can be abused to deny service or trap a competitor's
shared capacity. Issuers should cap the budget one child can reserve, define
escalation and forfeiture authority, record human overrides as new
authorizations, and avoid automatic punitive decisions based solely on an
ambiguous external outcome or unverified assertion. The mechanism allocates
authority; it does not decide whose goals deserve resources.

\section{Conclusion}
\label{sec:conclusion}

Distributed agent delegation turns a budget from a number into a fault-tolerant
authorization object. Parent/child allocation inequalities, affine local
types, attenuated scopes, idempotency keys, or convergent counters each solve
part of the problem. None alone prevents a refunded timeout from colliding
with a late effect, an aliased fan-in from reminting capacity, or a stale
worker from spending a reclaimed generation.

We introduced a protocol in which exclusive, labeled credit moves through a
delegation lineage; a protected effect is reserved before exposure; ambiguity
is quarantined; and one idempotent terminal decision applies authenticated
evidence at most once. A success receipt releases only its authenticated
remainder. Full return from an exposed obligation requires a
receiver-visible fence together with authoritative proof that the exact
binding was never accepted and never occurred. The
six-bucket equation keeps spendable, moving, reserved, uncertain, committed,
and forfeited rights explicit. Under stated assumptions, distinct effect
instances are injectively covered, and settled actual charge plus conservative
outstanding exposure is bounded by the issued root vector. The construction
also preserves ledger conservation, descendant non-amplification,
late-completion safety, and partition confinement.

The central systems result is precise: local availability comes
from rights allocated before a partition, while uncertainty must remain
charged until evidence resolves it. A system that hides this cost by refunding
on timeout does not conserve authority; it merely delays discovery of the
overspend.

\section*{Data, Code, and Materials Availability}

The public preprint package contains the manuscript source and bibliography.
The formal specification, bounded explorer, mutation suite, fault schedules,
SQLite traces, trace checker, result aggregator, and digest manifest are
maintained as a deterministic, provider-free research artifact using only
synthetic identifiers and effects. Appendix~\ref{app:artifact} records its
commands, declared scope, and expected outputs so the reported evidence can be
audited against the frozen internal release. The executable artifact is not
distributed with this preprint and no public archival URL is claimed.

\section*{Author Contributions}

Both authors contributed to problem formulation, system design, formalization,
analysis, artifact review, and manuscript preparation. The machine-readable
manifest separates generated evidence from source and records the verification
commands used for the reported results.

\section*{Competing Interests}

The authors are affiliated with Accentrust, which develops agent-governance
technology. The paper's claims are restricted to the stated formal assumptions
and accompanying artifact; no production deployment claim is made.

\appendix

\section{Notation and Semantic Domains}
\label{app:notation}

Table~\ref{tab:notation} collects symbols used by the abstract model. A bold
quantity is a resource vector; a set-valued footprint denotes logical rights.
All vector relations are componentwise unless an authenticated conversion
contract explicitly says otherwise.

\begin{table*}[t]
\centering
\scriptsize
\setlength{\tabcolsep}{4pt}
\begin{tabularx}{\textwidth}{L{0.18\textwidth}L{0.25\textwidth}Y}
\toprule
\textbf{Symbol} & \textbf{Domain} & \textbf{Meaning} \\
\midrule
\(\Resources\) & finite set & authenticated resource dimensions \\
\(\mathbf B_g\) & \(\NatVec\) & vector issued by root grant \(g\) \\
\(U_g\) & finite disjoint union & logical unit rights underlying
\(\mathbf B_g\) \\
\(X_c,X_o,X_x\) & subsets of \(U_g\) & credit, operation, and transfer
footprints \\
\(\mathsf{vec}(X)\) & \(\NatVec\) & per-dimension cardinality of footprint
\(X\) \\
\(\lambda\) & authorization-label domain & root, principal, purpose, audience,
effect class, expiry, and policy version \\
\(\lambda'\restrict\lambda\) & partial order & child label is no broader than
parent label \\
\(\Graph_T\) & DAG & task dependency graph; edges do not imply ownership \\
\(\Graph_C\) & DAG & credit lineage; edges represent consumed input and fresh
output \\
\(c,x,o\) & records & credit, sealed transfer, and operation \\
\(\epsilon,r^+,\pi_{\neg e}\) & externally authenticated terminal evidence &
success receipt or fence-backed no-effect certificate \\
\(\omega,\zeta,\pi_{\mathrm{local}}\) & terminal-outcome witnesses & closed
tagged sum of external evidence and a durable, authenticated local
never-dispatchable witness; \(\zeta\) excludes success \\
\(\eta\) & monotonic epoch & receiver-visible fencing epoch \\
\(\chi_c\) & logical slot identifier & stable identifier of one reissue chain;
split and join outputs receive fresh slots \\
\(\Current_s(c,\eta)\) & state predicate & authority records \(c\) as the
unique active generation for slot \((g,\chi_c)\) at epoch \(\eta\);
disjoint sibling slots may be current concurrently, and singleton input
ownership is a separate guard \\
\(\kappa\) & receiver-scoped key & idempotency/query key for one normalized
effect and source generation; the separate acceptance digest binds the slot \\
\(\beta_o\) & digest & complete gateway-acceptance binding from
Equation~\eqref{eq:accept-binding}, including \((g,\chi_c)\) \\
\(\mathsf{ReturnTarget}\) & predicate over \(o,c_r,s,s'\) & same-root,
current in the committed successor state, same-source-lineage, same-authority,
label-non-widening destination for unused rights \\
\(\mathsf{ReissueTarget}\) & predicate over
\(c,\eta,\eta^+,c^+,s,s'\) & authenticated same-root, same-authority,
non-widening successor generation \\
\(\mathsf{Avail},\mathsf{SafeA}\) & atom sets & all raw available atoms of a
generation, and the subset with retained reclaim provenance; successor
installation requires equality \\
\(\RolloverAct\) & abstract action & one atomic final terminal partition,
predecessor closure, and normalized same-slot successor installation \\
\(\Phi_o\) & witness & joint current authorization, route, closure, effect,
and credit evidence \\
\(\mathbf u_o\) & \(\NatVec\) & maximum amount reserved for operation \(o\) \\
\(\mathbf d_{r^+}\) & \(\NatVec\) & authenticated actual charge in success
receipt \(r^+\) \\
\(\nu_{\mathsf{sel}}\) & selector schema version & immutable public ordering
used to derive an exact charged footprint from a charge vector \\
\(\mathbf A,\mathbf T,\mathbf R,\mathbf Q,\mathbf C,\mathbf F\) &
\(\NatVec\) & available, transit, reserved, quarantined, committed, and
forfeited totals \\
\(\mathsf{Obs}(s)\) & effect-instance log & protected instances observed by the
gateway in state \(s\) \\
\(\iota\) & effect-instance record & distinct gateway occurrence with
\(iid\), key, acceptance binding, digest, receiver, and acceptance sequence \\
\(\Covered(\iota,o,s)\) & relation & instance \(\iota\) has one compatible
charged operation \(o\) in state \(s\) \\
\(\alpha\) & abstraction function & concrete event to abstract transition or
stutter \\
\bottomrule
\end{tabularx}
\caption{Notation. Logical unit sets make aliasing explicit; the runtime may
use compact balance and lineage certificates.}
\label{tab:notation}
\end{table*}

\section{Complete Abstract Transition Contract}
\label{app:transitions}

Tables~\ref{tab:transitions}--\ref{tab:transitions-recovery} make the
proof-relevant preconditions and postconditions explicit. ``Same atoms'' means
identity-preserving movement, not numeric subtraction followed by unrelated
issuance.

\begin{table*}[t]
\centering
\scriptsize
\setlength{\tabcolsep}{3.4pt}
\begin{tabularx}{0.99\textwidth}{L{0.14\textwidth}L{0.265\textwidth}L{0.265\textwidth}Y}
\toprule
\textbf{Action} & \textbf{Precondition} & \textbf{Postcondition} &
\textbf{Replay/failure behavior} \\
\midrule
\(\Issue\) & authenticated fresh root; vector and label schema valid &
all \(U_g\) in one root \(\mathsf A\) credit with a fresh logical slot; install
matching epoch and active-generation registers &
duplicate root ID rejected or returns identical issuance \\
\(\Split\) & current owner generation/epoch; pairwise-disjoint subsets;
attenuated labels &
parent loses exactly union; install every child under a pairwise-distinct fresh
logical slot; close and clear the parent only if its complete responsibility is
consumed & same request stutters; conflicting
replay rejected \\
\(\Delegate\) seal & current source; subset available; destination/label bound &
same atoms \(\mathsf A\rightarrow\mathsf T\); signed certificate/outbox
durable & send retries preserve transfer ID; timeout stays \(\mathsf T\) \\
\(\Delegate\) decision & valid current certificate or destination-exclusion
proof; decision pending & CAS to exactly one of accepted or cancelled and
logically moves the footprint to the unique destination child or back to the
source; acceptance installs the child under a fresh logical slot with matching
current registers & duplicate returns
prior decision; competing terminal decision rejected \\
\(\Delegate\) materialize & terminal transfer decision and matching footprint &
durable rows materialize the logical owner already selected by the decision
register; abstract stutter & crash recovery repeats the same materialization;
stale source or destination rows are never authoritative \\
\(\Reserve\) & current credit/epoch; available footprint; valid effect and
joint witness & same atoms \(\mathsf A\rightarrow\mathsf R\); operation,
obligation, outbox, audit atomically durable & duplicate identical operation
stutters; changed binding rejected \\
\(\Dispatch\) & durable unexposed reservation; outbox binding intact; current
epoch and witness rechecked & same
atoms \(\mathsf R\rightarrow\mathsf Q\) before possible effect & duplicate
gateway send observes same key; crash remains \(\mathsf Q\) \\
\(\mathsf{GatewayAccept}\) & if key exists, stored binding must equal
\(\beta_o\); if unseen, matching durable dispatchable marker,
\(X_o\) wholly in \(\mathsf Q/\mathsf F\), recomputed \(\beta_o\), and current
epoch/witness for the authenticated \((g,\chi_c)\) slot & existing exact
binding stutters; otherwise one per-key accepted
\(\beta_o\) is durable & exact replay remains readable after terminal bucket
moves; unexposed, stale, or conflicting first acceptance rejects \\
\(\mathsf{EffectOccurs}\) & prior accepted binding; no instance for key in the
one-effect profile & append one fresh \(\iota\) to \(\mathsf{Obs}\) & transport
retry cannot append a second instance \\
\bottomrule
\end{tabularx}
\caption{Abstract action contract for issuance, transfer, reservation,
dispatch, acceptance, and effect occurrence. Preconditions are part of the
mechanism, not optional validation advice.}
\label{tab:transitions}
\end{table*}

\begin{table*}[t]
\centering
\scriptsize
\setlength{\tabcolsep}{3.4pt}
\begin{tabularx}{0.99\textwidth}{L{0.14\textwidth}L{0.265\textwidth}L{0.265\textwidth}Y}
\toprule
\textbf{Action} & \textbf{Precondition} & \textbf{Postcondition} &
\textbf{Replay/failure behavior} \\
\midrule
\(\Settle\) success & full footprint in \(\mathsf Q(o)\); valid receipt for
\(\iota\in\mathsf{Obs}\), matching \(\beta_o\); when the remainder is
nonempty, a decision-bound return credit satisfying
Equation~\eqref{eq:return-target};
\(\zero\preceq\mathbf d\preceq\mathbf u\) & version-bound
\(\mathsf{Take}_{\nu_{\mathsf{sel}}}(X_o,\mathbf d)\)
\(\mathsf Q\rightarrow\mathsf C\); complement
\(\mathsf Q\rightarrow\mathsf A\); bind \(H(D_o)\) and
\(H(X_o\setminus D_o)\) & identical receipt stutters; conflicting receipt or
partition rejects \\
\(\ReturnAct\) & type-matched \(\zeta\): local no-dispatch precondition for
\(\pi_{\mathrm{local}}\), or gateway fence plus authoritative
\(\pi_{\neg e}\) proof that the exact binding was never accepted and cannot
occur; return target satisfies Equation~\eqref{eq:return-target} & all
operation atoms move from the branch-specific
\(\mathsf R/\mathsf Q\rightarrow\mathsf A\); write terminal decision and
audit; the local branch also persists \(\pi_{\mathrm{local}}\) and sets the
dispatch gate to absorbing \mcode{local-abort} & timeout or local lease expiry
alone has no transition \\
\(\Retire\) & unresolved quarantined operation and authorized administrative
decision & all operation atoms \(\mathsf Q\rightarrow\mathsf F\) &
availability never increases; \(\mathsf F\) is absorbing \\
\(\Fence\) & source is current; authenticated target
\(\eta^+>\mathsf{AuthoritativeEpoch}(g,\chi_c)=\eta_c\); gateway installation
protocol satisfied & install the gateway fence, mark the old credit closing,
clear its active generation, and advance the slot authority epoch without
moving balance or creating a successor; old
generation cannot initiate a new effect and recorded old outcomes remain
queryable & old messages reject or return prior result \\
\(\mathsf{Reissue}\) & old generation closed at installed \(\eta^+\); exact
\(\mathsf{Reusable}\) footprint uniquely available;
\(\mathsf{SafeA}=\mathsf{Avail}\); write-once key selects
\((\eta^+,c^+)\) for normalized predecessor
\((g,\chi_c,cid,\eta_c)\); target satisfies
Equation~\eqref{eq:reissue-target} & one
transaction moves only that footprint and installs \(c^+\) as current for the
same logical slot &
committed, forfeited, accepted, or unresolved atoms never move; exact replay
stutters and changed target rejects \\
\(\RolloverAct\) & fenced predecessor is closing; \(o\) is its final
nonterminal operation with a type-matched terminal-outcome witness; source and settlement
authority agree; \(\mathsf{SafeA}=\mathsf{Avail}\); both decision keys are
unassigned; normalized reissue target
satisfies Equations~\eqref{eq:rollover-ready}--\eqref{eq:rollover-action} &
one transaction fixes the terminal partition, moves only its return footprint
plus disjoint \(\mathsf{SafeA}\) to \(c^+\), closes \(c\), and installs
\(c^+\), writes both keys, sets the witness-specific operation/outbox outcome,
and appends its audit node; the local branch additionally sets the absorbing
\mcode{local-abort} dispatch gate &
exact replay stutters; changed evidence, partition, epoch, or
successor rejects \\
\(\JoinAct\) & current, operation-free, pairwise-disjoint same-root inputs
controlled by one transition authority & inputs consumed; same-root output
owns their exact union under lower-bound label and a fresh logical slot;
fully consumed inputs close and clear, and the output's current registers are
installed &
duplicate stutters; overlap or mixed root rejects \\
\bottomrule
\end{tabularx}
\caption{Abstract action contract for settlement, return, retirement, fencing,
reissue, fused rollover, and join (continued).}
\label{tab:transitions-recovery}
\end{table*}

\section{Concrete-to-Abstract Refinement Obligations}
\label{app:refinement}

A concrete implementation trace is a sequence of committed database
transactions, gateway observations, and authenticated terminal evidence. Volatile
steps that neither expose an effect nor change durable state map to stutter.
Tables~\ref{tab:refinement} and \ref{tab:refinement-recovery} list the minimum
simulation obligations.

\begin{table*}[t]
\centering
\footnotesize
\setlength{\tabcolsep}{4pt}
\begin{tabularx}{\textwidth}{L{0.20\textwidth}L{0.28\textwidth}YY}
\toprule
\textbf{Concrete event} & \textbf{Abstract image} &
\textbf{Required evidence} & \textbf{Forbidden mapping} \\
\midrule
Root row commit & \(\Issue\) & issuer signature, fresh generation, exact vector
& editing an existing issued vector \\
Source debit + transfer row commit & \(\Delegate\) seal & one transaction,
same compressed ranges & message send before source debit \\
Transfer-decision CAS & \(\mathsf{AcceptTransfer}\) or
\(\mathsf{CancelTransfer}\), including the one logical ownership switch &
valid certificate/exclusion proof and monotone decision register & accepted
and cancelled both becoming terminal \\
Destination/source finalization & \(\mathsf{stutter}\) (materialization only) &
same terminal decision and exact compressed ranges; logical ownership is read
from the decision register & treating a stale row as authoritative or moving
the abstract footprint a second time \\
Operation/outbox commit & \(\Reserve\) & atomic debit and immutable
slot/effect binding & reserving after tool execution \\
Dispatchable marker commit & \(\Dispatch\) & durable
\(\mathsf R\rightarrow\mathsf Q\) before gateway exposure & network send while
still only \(\mathsf R\) \\
Gateway acceptance row & \(\mathsf{GatewayAccept}\) & existing exact binding,
or, for first acceptance only, matching durable dispatchable marker and
\(\mathsf Q/\mathsf F\) ownership, authenticated logical slot, its current
epoch/fence, recomputed \(\beta_o\), witness, and maximum vector & reapplying
first-acceptance bucket guards to an
exact terminal replay, or accepting an unexposed, stale, or changed binding \\
Gateway effect log & \(\mathsf{EffectOccurs}\) & prior accepted binding,
fresh \(iid\), complete maximum-charge contract & an unbound direct receiver
call or a second instance for one key \\
\bottomrule
\end{tabularx}
\caption{Refinement obligations for issuance, transfer, dispatch, and gateway
effects.}
\label{tab:refinement}
\end{table*}

\begin{table*}[t]
\centering
\footnotesize
\setlength{\tabcolsep}{4pt}
\begin{tabularx}{\textwidth}{L{0.20\textwidth}L{0.28\textwidth}YY}
\toprule
\textbf{Concrete event} & \textbf{Abstract image} &
\textbf{Required evidence} & \textbf{Forbidden mapping} \\
\midrule
Terminal-outcome transaction & \(\Settle\), \(\ReturnAct\), or, when fused
with successor installation, \(\RolloverAct\) & a tagged external signature
or trusted durable local no-dispatch witness, \(\beta_o\) including selector
version, one terminal CAS, exact
committed/remainder footprint commitments, and
Equation~\eqref{eq:return-target} for every returned atom & treating a success
receipt as no-effect evidence, choosing a noncanonical partition, applying a
textual worker assertion, or widening/redirecting a remainder \\
Timeout / worker death & stutter + reconciliation request & no ownership
change & \(\mathsf Q\rightarrow\mathsf A\) \\
Epoch installation & \(\Fence\) & monotone successor visible at every
protected gateway & local-only epoch bump followed by reuse \\
Ordinary successor-generation transaction & \(\mathsf{Reissue}\) &
closed-generation
proof at the installed \(\eta^+\), exact uniquely available reusable footprint,
write-once unique reissue decision, and authenticated target satisfying
Equation~\eqref{eq:reissue-target}; decision, current metadata, and move are
atomic & copying old
\(\mathsf C/\mathsf F/\mathsf T/\mathsf Q\) atoms or widening/changing the
successor lineage, root, or authority \\
Fused final-outcome/successor transaction & \(\RolloverAct\) &
Equations~\eqref{eq:rollover-ready}--\eqref{eq:rollover-action}; canonical
terminal partition and status; complete \(\mathsf{SafeA}\); atomic terminal
and normalized reissue decisions &
mapping the fused commit to only \(\Settle\), \(\ReturnAct\), or
\(\mathsf{Reissue}\), or copying an input atom \\
Audit-only append & stutter & digest of an already authoritative decision &
moving rights solely because a log entry exists \\
\bottomrule
\end{tabularx}
\caption{Refinement obligations for terminal resolution, fencing, and
successor installation. A trace that cannot be mapped without violating
Tables~\ref{tab:refinement}--\ref{tab:refinement-recovery} is outside the
implementation theorem.}
\label{tab:refinement-recovery}
\end{table*}

\section{Adversarial Trace Catalogue}
\label{app:traces}

\subsection{Timeout Refund and Late Completion}

Let \(U_g=\{u\}\). The safe prefix is:
\[
\mathsf A(c)
\xrightarrow{\Reserve(o)}\mathsf R(o)
\xrightarrow{\Dispatch(o)}\mathsf Q(o).
\]
The controller sees no response. In execution \(H_0\), no gateway effect
occurred; in \(H_1\), the effect occurred and its response is delayed. The
controller's local view is identical. A timeout refund would place \(u\) in
\(\mathsf A\) in both histories. Spending it under a fresh operation in
\(H_1\) creates two covered effect instances for one issued atom. The safe
machine instead stutters until a query distinguishes the histories, or moves
the atom to \(\mathsf F\).

\subsection{Duplicate Transfer Acceptance}

The source seals \(X\) under \(xid\) and retries after losing the
acknowledgment. If the destination interprets the certificate as a bearer
balance, it can create \(c_1\) and \(c_2\), each with numeric value
\(\mathsf{vec}(X)\). The required uniqueness decision over
\((g,xid)\) returns the previously created \(c_1\); it does not re-import the
ranges. Source cancellation is equally dangerous unless the destination has
durably fenced \(xid\).

\subsection{Aliased Diamond Join}

A parent credit is referenced by task branches \(a\) and \(b\), but only one
actual transfer occurs. Both task results point to the same descendant credit
\(c\). A naive join computes \(\mathbf a_c+\mathbf a_c\). The set model instead
tests \(X_c\cap X_c=X_c\ne\varnothing\) and rejects. Task multiplicity does not
create credit multiplicity.

\subsection{Stale Epoch After Reclaim}

Worker \(w\) holds generation \(\eta\), partitions, and prepares operation
\(o\). The controller first installs the gateway fence and advances the
authority epoch to some \(\eta^+>\eta\). That step neither rewrites the old credit's
epoch, moves its balance, nor creates a successor. Reallocation occurs later
and only after \(\mathsf{Closed}\) and exact \(\mathsf{Reusable}\) hold: each
exposed obligation has complete terminal settlement or authoritative,
fence-backed proof that its exact binding was never accepted and never
occurred. When \(w\) reconnects, the gateway rejects a new key under \(\eta\).
If \(o\)'s key was already recorded before the
fence, the gateway returns that outcome for settlement; it does not execute a
second effect. An epoch checked only in the controller database cannot provide
this property.

\subsection{Partial Charge and Duplicate Receipt}

Operation \(o\) reserves vector \((1000\text{ cents},3\text{ calls})\). A
receipt authenticates \((730,1)\). One transaction commits a matching subset
and returns \((270,2)\). Replaying the receipt returns the same terminal
decision. A second receipt claiming \((820,1)\) is not a second partial update;
it conflicts with the immutable terminal decision and is quarantined for
investigation.

\section{Claim--Evidence Matrix}
\label{app:claims}

\begin{table*}[t]
\centering
\scriptsize
\setlength{\tabcolsep}{3.2pt}
\begin{tabularx}{\textwidth}{L{0.23\textwidth}L{0.25\textwidth}L{0.23\textwidth}Y}
\toprule
\textbf{Claim} & \textbf{Analytical support} &
\textbf{Executable support} & \textbf{Explicit exclusion} \\
\midrule
Ownership partition & transition induction over raw ownership relations &
lineage tagged-occurrence model, legal-join witness, and compound invalid
fan-in mutant; settlement model separately checks the single-location
obligations &
Byzantine/corrupt authoritative store \\
Exact six-bucket conservation & partition lemma + vector corollary &
finite transition-accounting checks and runtime prefix checks & real-world
value not represented by resource schema \\
Descendant non-amplification & cut/lineage induction & split, transfer, join,
and alias schedules & task quality and fair allocation \\
At-most-once settlement & immutable binding + terminal CAS & duplicate and
conflicting receipt cases & exactly-once network delivery \\
Effect-instance coverage and external exposure bound & injective gateway
binding + complete-charge theorem & effect-instance log joined to operation
and maximum-vector trace & unmediated paths or dishonest charge authority \\
Late-completion safety & \(\mathsf Q\) retention + fence argument &
effect-before-response-loss and delayed receipt cases & automatic eventual
refund \\
Partition confinement & exclusive preallocation theorem & exclusive-transfer
and coordinator-outage schedules in bounded models;
the theorem is the primary support & full availability after local escrow
exhaustion \\
Concrete trace lifting & conditional criterion under
Appendix~\ref{app:refinement} & independent checks of selected binding and
prefix obligations plus invalid-trace cases & a universal refinement proof
for unexamined handlers or deployments \\
Prototype timing characterization & no theorem & local durable-store
diagnostic timings &
Internet-scale or production SLO \\
\bottomrule
\end{tabularx}
\caption{Claims are linked to distinct analytical and executable evidence.
The exclusions are part of the claim, not deferred implementation details.}
\label{tab:claim-evidence}
\end{table*}

\section{Artifact and Reproduction Protocol}
\label{app:artifact}

\subsection{Directory Contract}

\begin{lstlisting}
formal/
  BudgetConservation.tla
  DelegationLineage.tla
  budget_model.mjs
  bounded_model.mjs
  configs/
    safe.cfg
    timeout_refund_mutant.cfg
    lineage_safe.cfg
    lineage_current_join_witness.cfg
    lineage_historical_certificate_validation_mutant.cfg
bench/
  run_tlc_model_checks.mjs
  run_lineage_tlc.mjs
  fault_injection_eval.mjs
  sqlite_process_service.mjs
  sqlite_multiprocess_eval.mjs
  sqlite_trace_checker.mjs
  run_all.mjs
  aggregate_results.mjs
  verify_artifact.mjs
  results_tla_model_check.json
  results_lineage_tla_model_check.json
  results_bounded_model.json
  results_fault_injection.json
  results_aggregate.json
  artifact-manifest.json
  sqlite_runtime_results/<run>/
    summary.json
    checker-summary.json
    traces/
    db/
generated-results.tex
\end{lstlisting}

The two TLA+ files are complementary: one covers settlement, acceptance,
effects, fencing, and generations; the other covers lineage and fan-in. The
JavaScript model is a second executable encoding of the settlement
abstraction, not an independent conceptual specification. Concrete SQLite
results retain one branch-ledger and one gateway database per case. The
compact branch persists a signed dispatch permit in its \(W_3\) transaction;
the gateway verifies and records the permit before accepting an unseen key.
Its independent checker separately validates the signature, temporal order,
and cross-store binding. This exercises the evaluated first-acceptance guard
without substituting finite traces for universal adapter or end-to-end
refinement.
\mcode{results\_aggregate.json} is the sole numeric source for
\mcode{generated-results.tex}; generated output is never treated as a source
model.

\subsection{Canonical Commands}

\begin{lstlisting}
# Verify retained TLC/SQLite evidence and rerun JS layers:
node bench/run_all.mjs --skip-tlc

# Rerun both pinned TLC models:
TLA2TOOLS_JAR=/absolute/path/tla2tools.jar \
JAVA_BIN=/absolute/path/java \
node bench/run_all.mjs

# Also create a fresh, non-overwriting SQLite run:
TLA2TOOLS_JAR=/absolute/path/tla2tools.jar \
JAVA_BIN=/absolute/path/java \
node bench/run_all.mjs --rerun-sqlite

# Regenerate/verify from the retained concrete run:
node bench/aggregate_results.mjs \
  --sqlite-dir \
  bench/sqlite_runtime_results/dispatch-permit-v1-20260925
node bench/verify_artifact.mjs

tectonic -X compile main.tex --outdir build \
  --keep-logs --keep-intermediates --only-cached
\end{lstlisting}

The \mcode{--rerun-sqlite} mode always allocates a new timestamped directory;
it neither removes nor overwrites a retained run. The verification command
fails if a source or result digest is stale, a required safe or mutant case is
absent, a safe-profile invariant count is nonzero, the aggregate and generated
LaTeX differ, a JSONL record count changes, or any of the 22 retained SQLite
databases fails \mcode{PRAGMA integrity\_check}. It also re-executes the
independent SQLite snapshot checker, the dependency-free finite explorer, and
the schedule suite, compares their outputs byte-for-byte, and directly checks
persisted terminal return targets, successor lifecycle rows, and transaction
guards.

\subsection{Environment Disclosure}

The artifact manifest records operating system, architecture, CPU, memory,
Node.js, Java and TLC versions, the pinned TLC JAR digest, and SHA-256 plus
byte length for every retained source and result. The concrete SQLite evidence
additionally records SQLite version, process topology, command, UTC
start/completion times, and source hashes. TLC uses one worker, a fixed
fingerprint polynomial, and a fixed seed where applicable; the named
JavaScript and SQLite schedules are deterministic. The provider-free artifact
requires no network access after the pinned TLC JAR is available.

\subsection{Result Reporting}

For each model:

\begin{itemize}
  \item report constants, distinct and generated states, search depth or
        graph depth, queue/completion status, and invariant result;
  \item retain the shortest available counterexample for every mutant;
  \item identify whether symmetry, state constraints, or liveness checking was
        used; and
  \item never translate ``no error in scope'' into ``proved for all sizes.''
\end{itemize}

For each concrete case, the artifact retains the full JSONL event trace,
durable branch/gateway snapshots, effect/terminal-evidence binding, expected outcome,
actual outcome, and independent pass/fail decision. It also retains per-case
RPC and restart summaries from the same correctness runs so every reported
descriptive timing can be regenerated. These timing samples are health checks,
not a throughput experiment or production SLO.

\section{Deployment Assurance Checklist}
\label{app:deployment}

\begin{enumerate}
  \item Define every resource dimension, unit, rounding rule, maximum-charge
        rule, and authoritative settlement source.
  \item Enumerate every route to each protected effect; remove direct
        credentials or mark the deployment outside complete mediation.
  \item Verify that reserve, obligation, outbox, and audit records share one
        durable local transaction.
  \item Verify that possible gateway exposure occurs only after the durable
        \(\mathsf R\rightarrow\mathsf Q\) transition.
  \item At first acceptance, require a signed single-use dispatch permit or an
        authoritative branch query proving the matching dispatchable marker
        and \(\mathsf Q/\mathsf F\) ownership; an offline audit is insufficient.
  \item Scope operation keys to receiver and normalized effect; make exact
        replay a stable decision and conflicting replay an error.
  \item Recompute \(\beta_o\) and test terminal-evidence signatures and every immutable
        binding, including root, logical slot, lineage, footprint, epoch,
        effect, receiver, maximum, and actual charge.
  \item Require Equation~\eqref{eq:return-target} for every partial remainder
        or no-effect return; use a separate sealed transfer for cross-store or
        ancestor return.
  \item Before reusing a lost generation, install fences at every protected
        gateway and obtain terminal settlement or authoritative proof that each
        returned binding was never accepted and never occurred; a fence or
        lease expiry alone is not effect-absence evidence.
  \item Reject numeric fan-in unless ownership inputs are pairwise disjoint and
        atomically consumed.
  \item Monitor \(\mathbf T,\mathbf Q,\mathbf F\) separately from committed
        spend and available capacity.
  \item Bound per-child reservations and define escalation, reconciliation,
        and forfeiture authorities.
  \item Run the conformance suite for each MCP, A2A, HTTP, browser, shell, or
        framework adapter.
  \item Publish the actual endpoint capability profile and preserve the
        assumptions beside every assurance claim.
\end{enumerate}

\section{Proof and Evidence Dependency Map}
\label{app:dependency-map}

The logical dependency chain is:
\[
\begin{array}{c}
\text{authenticated exclusive issuance}\\
\Downarrow\\
\text{ownership-preserving transitions}
\Rightarrow \text{partition lemma}
\Rightarrow \text{exact vector conservation}\\
\Downarrow\\
\text{disjoint lineage}
\Rightarrow \text{descendant non-amplification}\\
\Downarrow\\
\text{reserve before exposure + immutable operation binding}\\
\Downarrow\\
\text{settlement uniqueness + injective effect-instance coverage}\\
\Downarrow\\
\text{complete charge or conservative maximum}
\Rightarrow \text{external exposure bound}\\
\Downarrow\\
\text{gateway-visible epoch fence}
\Rightarrow \text{late-completion safety}\\
\Downarrow\\
\text{handler-level trace-lifting obligations}
\Rightarrow \text{conditional runtime result}.
\end{array}
\]

Formal exploration targets the first five arrows. Concrete trace checking
targets the final refinement arrow. Fault injection demonstrates that the
runtime reaches the declared recovery cuts. None substitutes for verifying
the environmental assumptions named above.

\bibliographystyle{ACM-Reference-Format}
\bibliography{references}

\end{document}